\documentclass{article}

\usepackage{microtype}
\usepackage{graphicx}
\usepackage{subcaption}
\usepackage{booktabs}
\usepackage{hyperref}
\PassOptionsToPackage{numbers,compress}{natbib}

\usepackage[preprint]{neurips_2026}

\usepackage{amsmath}
\usepackage{amssymb}
\usepackage{mathtools}
\usepackage{algorithm}
\usepackage{wrapfig}

\usepackage{algpseudocodex}

\usepackage{amsthm}
\usepackage[acronym]{glossaries}
\usepackage{xcolor}
\usepackage{colortbl}
\usepackage{booktabs}
\usepackage{multirow}
\usepackage{enumerate}
\usepackage{enumitem}
\usepackage[subrefformat=parens]{subcaption}
\usepackage{tikz}
\usetikzlibrary{arrows.meta, positioning, calc, shapes.geometric, decorations.pathmorphing, shadows}

\usepackage[capitalize,noabbrev]{cleveref}

\theoremstyle{plain}
\newtheorem{theorem}{Theorem}[section]
\newtheorem{proposition}[theorem]{Proposition}

\theoremstyle{definition}

\theoremstyle{remark}

\usepackage[textsize=tiny]{todonotes}

\setacronymstyle{long-short}
\makenoidxglossaries

\newacronym{dm}{DM}{diffusion model}
\newacronym{ddpm}{DDPM}{Denoising Diffusion Probabilistic Models}
\newacronym{sgm}{SGM}{Score-based Generative Model}
\newacronym{ncsn}{NCSN}{Noise-Conditional Score Network}
\newacronym{ssde}{Score SDE}{stochastic differential equation}
\newacronym{mhvae}{MHVAE}{Markovian hierarchical variational Autoencoder}
\newacronym{lmc}{LMC}{Langevin Monte Carlo}
\newacronym{ebm}{EBM}{energy-based model}

\newacronym{sde}{SDE}{stochastic differential equation}
\newacronym{ode}{ODE}{ordinary differential equation}
\newacronym{pfode}{PF-ODE}{probability flow ordinary differential equations}
\newacronym{vae}{VAE}{variational Autoencoder}
\newacronym{oc}{OC}{optimal control}

\usepackage{amsmath,amsfonts,bm}

\def\eqref#1{Eq.~(\ref{#1})}

\def\1{\bm{1}}

\DeclareMathAlphabet{\mathsfit}{\encodingdefault}{\sfdefault}{m}{sl}
\SetMathAlphabet{\mathsfit}{bold}{\encodingdefault}{\sfdefault}{bx}{n}

\def\sR{{\mathbb{R}}}

\DeclareMathOperator*{\argmin}{arg\,min}

\usepackage{amsthm,amsmath,amssymb,amscd}
\usepackage{amsfonts}
 
\usepackage{xspace}

\theoremstyle{definition}

\theoremstyle{remark}
\newtheorem{rmk}{Remark}

\newtheorem*{proposition*}{Proposition}

\newcommand{\etc}{\textit{etc.}\xspace}
\newcommand{\eg}{{\it e.g.}}
\newcommand{\ie}{{\it i.e.}}

\newcommand{\apdxref}[1]{Appendix~{\bf \ref{#1}}}

\newcommand{\tabref}[1]{{Table~\ref{#1}}}

\newcommand{\kldiv}[2]{\mathop{\mathbb{D}_{\text{KL}}} \left[ #1 \,\|\, #2\right]}

\newcommand{\gauss}[2]{\mathcal{N}\left( {#1},\, {#2} \right)}

\def\x{\mathbf{x}}
\def\y{\mathbf{y}}
\def\z{\mathbf{z}}
\def\u{\mathbf{u}}

\newcommand{\X}{\mathbf{X}}

\newcommand{\I}{\boldsymbol{I}}

\PassOptionsToPackage{normalem}{ulem}

\usepackage{enumitem}
\usepackage{graphicx}

\floatstyle{ruled}
\newfloat{algorithm}{tbp}{loa}
\providecommand{\algorithmname}{Algorithm}
\floatname{algorithm}{\protect\algorithmname}

\definecolor{myRed}{HTML}{E63946}
\definecolor{errorRed}{HTML}{FF0000}    
\definecolor{myBlue}{HTML}{0077B6}
\definecolor{electricBlue}{HTML}{007BFF}
\definecolor{myOrange}{HTML}{FB8500}
\definecolor{myGreen}{HTML}{28A745}

\title{Inference-Time Attribute Distribution Alignment\\ for Unconditional Diffusion}

\author{
  Hao~Luan\textsuperscript{\rm 1} \qquad 
  See-Kiong~Ng\textsuperscript{\rm 1,2}  \qquad
  Chun~Kai~Ling\textsuperscript{\rm 1} \\
  \textsuperscript{\rm 1}School of Computing, National University of Singapore\\
  \textsuperscript{\rm 2}Institute of Data Science, National University of Singapore\\ 
  \texttt{
    haoluan@comp.nus.edu.sg \quad \{seekiong, chunkail\}@nus.edu.sg
  }
}

\begin{document}

\maketitle

\begin{abstract}

Inference-time controllable generation is essential for real-world applications of unconditional diffusion models. However, most existing techniques focus on individual samples, struggling in applications that require the sample \emph{population} to follow specific \emph{attribute} distributions (\emph{e.g.}, demographic balance or semantic proportions). We formalize this setting as the inference-time attribute distributional alignment problem for pretrained unconditional diffusion models. To address this, we cast inference-time attribute distributional alignment as an optimal control problem over the reverse diffusion process, viewing the process as the rollout of a dynamical system and augmenting it with additive, time-dependent perturbations as control. We solve for the perturbations using an optimal-control-based algorithm to optimize a differentiable distribution-matching objective while penalizing control effort to preserve data fidelity. Experiment results in image generation demonstrate that our proposed plug-and-play approach can better align attribute distributions to diverse and flexible test-time targets compared to baselines, without retraining or finetuning the pretrained diffusion model.\footnote{Code is available at {\texttt{\url{https://github.com/EdmundLuan/diffusion_attribute_distribution_alignment}}}.} 

\end{abstract}

\addtocontents{toc}{\protect\setcounter{tocdepth}{-10}}

\section{Introduction}

Diffusion models are powerful generative models that excel in representing complex data distributions and have been applied in a wide range of domains, including 
images \cite{ho2020denoising,Rombach_2022_CVPR}, 
videos \cite{ho2022video, bar-tal2024lumiere, hayakawa2025mmdisco}, 
graphs \citep{niu2020permutation,maderia2024generative,luan2025ddps}, 
and robotics \citep{janner2022diffuser,chi2023diffusion,feng2025doppler,carvalho2025mpdtro}, among others.
It is common that many user requirements are not known \emph{a priori} during training and may shift over time during deployment. 
This makes it impractical to retrain or finetune the model for every new requirement, and in turn motivates a plethora of \emph{inference-time} techniques that do not necessitate modifying the trained parameters of unconditional diffusion models for controllable generation, \eg, guidance \cite{dhariwal2021diffusion,chung2023diffusion,ye2024tfg,feng2024ltldog}, 
projection \citep{fishman2023metropolis,zampini2025trainingfree,liang2025simultaneous}, 
joint correlation \citep{ruan2023mm,zeng2024jedi,luan2025projected,hao2025chd}, \etc 

In many controllable generation applications, however, the goal is not merely for each \emph{individual} sample to satisfy a condition, but for the \emph{sample population} to follow a desired \emph{distribution} over \emph{oracle-induced abstract attributes}, \ie, the outputs obtained by applying an oracle function to generated samples. 
Such attributes may include styles or semantic categories in images, as identified by off-the-shelf classifiers, or behavior patterns exhibited by robotic trajectories, as determined by evaluation models. 
Controlling the \emph{distribution} of such induced attributes is essential in many real-world scenarios.
For example, fairness objectives in image generation involving human subjects naturally require balancing demographic attributes toward a uniform \emph{distribution} \cite{choi2020fair,parihar2024balancing,kang2025fairgen}.

We refer to this goal as the inference-time \underline{d}iffusion \emph{\underline{a}ttribute \underline{d}istributional \underline{a}lignment} (DADA) problem: given a pretrained diffusion model, an \emph{attribute oracle}, and a target attribute \emph{distribution} specified only \emph{at test time}, generate samples whose attribute distribution matches the target.
This problem is distinct from most common diffusion alignment and conditional generation settings \citep{wallace2024diffusion,liu2024alignment,li2024aligning,uehara2025inference}, 
because distributional alignment is a \emph{population-level} objective and thus generally cannot be evaluated by a reward function $r(\mathbf{x})$ defined over a \emph{single sample}.
The most conceptually related topics include fairness-aware generation and sample diversity promotion in diffusion- and flow-based models. 
However, existing approaches \cite{kang2025fairgen,he2024debiasing,choi2024fair,friedrich2025auditing,jiang2025fairgen,li2025fair,morshed2025diverseflow} are often tailored to text-to-image (T2I) diffusion models, require retraining or finetuning for new targets, or lack flexibility when the target attribute distribution changes at test time.
To our knowledge, there are few generic methods that can align pretrained diffusion models to different test-time target attribute distributions in a plug-and-play fashion, \emph{without retraining or finetuning the diffusion models or training extra components}.

In this work, we formulate attribute distributional alignment as an \gls{oc} problem over the reverse diffusion process. 
Concretely, we view sampling from a pretrained diffusion model as rolling out a dynamical system that defines a prior over realistic samples, and augment its learned dynamics with an additive, time-dependent perturbation (see \cref{fig:teaser}).
An \gls{oc}-based algorithm then solves for perturbations that optimize a differentiable distribution-alignment objective.
This \gls{oc} perspective is appealing for several reasons: 
(\romannumeral1) It provides a \emph{principled} mechanism for balancing distributional alignment and data fidelity through an explicit control-effort penalty.
(\romannumeral2) It is inherently \emph{target-flexible} and naturally suited to \emph{inference-time} use, since changing the target attribute distribution only modifies the objective while leaving the pretrained model intact.
(\romannumeral3) By optimizing the reverse trajectory as a whole, it avoids the single-step clean-sample approximation $\hat{\mathbf x}_0$ used by most inference-time guidance methods, which can be unreliable in the high-noise stages of the reverse process.
(\romannumeral4) It replaces hand-tuned, step-wise guidance-strength schedules, which often play a critical role in guidance methods, with control updates derived from \gls{oc} optimality conditions.

We make the following contributions in this paper: 
(\romannumeral1) We formulate the attribute distributional alignment problem for diffusion models as an \gls{oc} problem. 
(\romannumeral2) We propose a practical inference-time, training-free algorithm for pretrained unconditional diffusion to align the \emph{attribute distribution} with flexible target distributions. 
(\romannumeral3) We demonstrate the empirical effectiveness of our method in aligning attribute distributions with test-time targets in image generation.

\begin{figure*}[t]
    \centering
    \includegraphics[width=\textwidth]{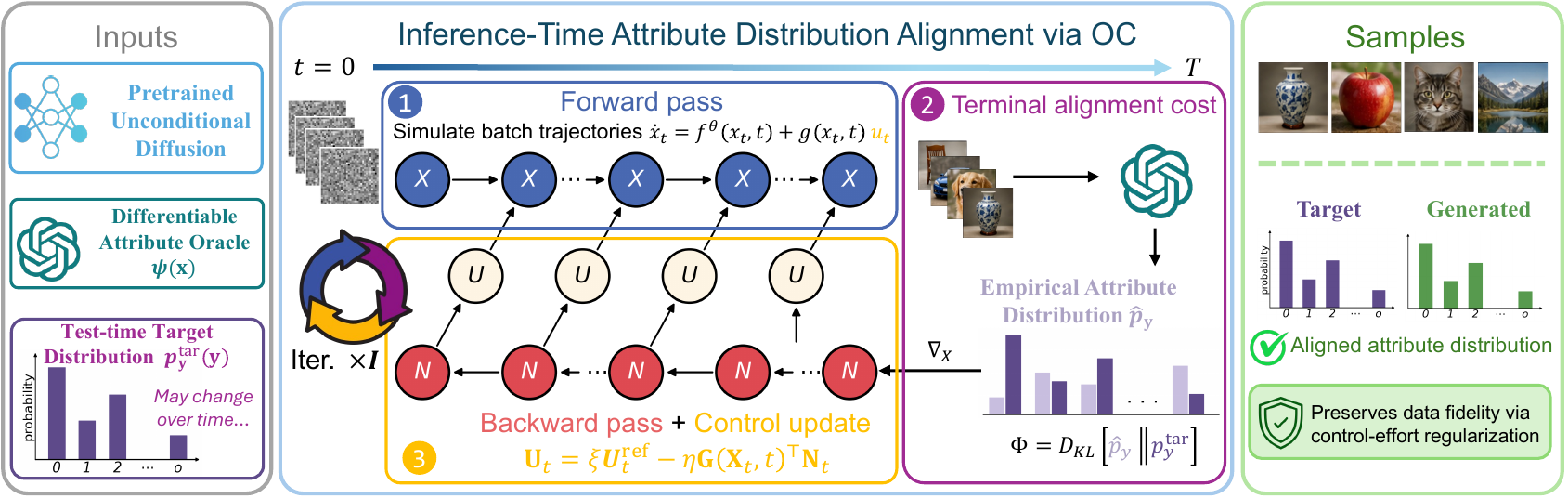}
    \caption{
        An overview of our proposed method for aligning attribute distribution. 
    }
    \label{fig:teaser}
    \vspace{-1em}
\end{figure*}

\section{Related Work}

\textbf{Diffusion Models with Conditional Generation.} 
Diffusion models define the generative process as iterative denoising \citep{ho2020denoising} or, equivalently, as score-based Langevin dynamics \citep{song2019generative,song2021scorebased}. 
While DDIM \citep{song2021denoising} and EDM \citep{karras2022elucidating} improved sampling efficiency and design clarity, controllable generation often relies on guidance mechanisms. 
\citet{dhariwal2021diffusion} introduced classifier guidance to steer pretrained models, while \citet{ho2022classifierfree} proposed classifier-free guidance for joint training with conditioning signals. 
\citet{chung2023diffusion} proposed posterior sampling for inverse problems, upon which many more general training-free guidance techniques were built \cite{ye2024tfg,feng2024ltldog,guo2024gradient,corso2024particle}. 

\textbf{Fairness and Diversity in Diffusion.} 
Fairness generation with diffusion is an instance of the DADA problem. 
\citet{shen2023finetuning} propose a supervised finetuning method using distributional alignment loss and adjusted direct finetuning to mitigate demographic biases in T2I diffusion models. 
\citet{miao2024training} take a reinforcement learning approach for fine-tuning with policy gradient methods and a diversity reward. 
As for test-time methods, 
\citet{friedrich2025auditing} conduct a preliminary model audit to generate a lookup table of biased prompts and attributes, and then employs Semantic Guidance to promote fair attribute classes and suppress biased terms. 
\citet{jiang2025fairgen} train extra attribute-specific adapters and guide diffusion generation in a plug-and-play fashion; 
Fair~Mapping \cite{li2025fair} adopts a linear network to map input embeddings into a debiased representation space. 
FairGen \citep{kang2025fairgen} uses an additionally-trained latent module and an extra memory module to steer generations toward user-specified fair distributions. 
Distribution-focused techniques such as IDA \cite{he2024debiasing} optimize the weights of multi-directional text descriptions, while 
\citet{parihar2024balancing} trains a predictor to map $h$-space features to attribute distributions during the denoising diffusion process. 
Notably, \citet{corso2024particle} propose a training-free guidance-based method for promoting sample diversity in diffusion models. 
However, most of the above methods are specifically designed for T2I conditional models and leverage text conditioning mechanisms that are baked inside the T2I models, thus not generalizable to unconditional models nor to other domains.

\textbf{Optimal Control in Diffusion- and Flow-based Models.}
Early theoretical connections between Stochastic Optimal Control (SOC) and diffusion were established in \citep{berner2024an}. 
\citet{domingo-enrich2024stochastic, domingo-enrich2025adjoint} propose Adjoint Matching and Stochastic Optimal Control Matching for fine-tuning diffusion- and flow-based models by learning optimal control fields via regression objectives, as other approaches follow \cite{han2025stochastic,liu2025value}. 
\citet{park2024stochastic,zhu2025unidb} study diffusion bridges from the SOC perspective. 
In contrast, training-free methods such as OC-Flow \cite{wang2025training} and DTM \cite{pandey2025variational} steer pre-trained diffusion or flow models by differentiating through the generative ODE or SDE to minimize control costs without updating model parameters. 
The OC formulation also extends to solving inverse problems \cite{li2024solving}, adaptive guidance  strength scheduling \cite{azangulov2024adaptive}, stylization \cite{rout2025rbmodulation} and multi-subject generation \cite{bill2025optimal} by modulating trajectories or maximizing likelihoods directly during the sampling process. 
While our method resonates with these works in terms of methodology, none of them are addressing the attribute distribution alignment problem. 

\section{Preliminaries and Formulation: Diffusion Attribute Distribution Alignment}
\label{sec:prelim}

\paragraph{Notations.} 
Let $n, m, o \in \mathbb Z^+$ be positive integers and $t \in \mathbb R_{\ge0}$ be the time variable.  
Let $\x, \z\in \mathbb R^n$, $\u \in \mathbb R^m$, $\y \in \sR^o$ be vector random variables. 
Vector constants are in bold symbol as $\boldsymbol{x} \in \mathbb R^n$. 
$\I_n$ stands for an $n\times n$ identity matrix. 
The Euclidean norm is denoted by $\|\cdot\|$. 
We note the probability density function of $\x$ (or probability mass function for discrete variables) by $p_\x(\x)$, and may, for notational brevity, omit the subscript of random variable when it is clear from the context. 
$\mathcal N(\boldsymbol{\mu}, \boldsymbol{\Sigma})$ stands for a Gaussian distribution with mean $\boldsymbol\mu \in \sR^n$ and covariance $\boldsymbol{\Sigma}\in \sR^{n\times n}$. 
Let $\kldiv{\cdot}{\cdot}$ denote the Kullback-Leibler (KL) divergence.

\subsection{Preliminaries in Diffusion Models}
\label{sub:diffusion_review}

\citet{song2021scorebased} describe a forward diffusion process with \gls{sde}: 
\[
    \mathrm d \x_t = \bm f(\x_t, t)\,\mathrm dt + g(t)\, \mathrm d \mathbf w_t, \quad t \in [0, T ], 
\]
where $\bm f(\x_t, t)$ is the drift coefficient, $g(t)$ is the diffusion coefficient, and $\mathbf w_t$ is the standard Wiener process. 
The reverse diffusion process, in which noises are transformed into data, is given by \citet{anderson1982reverse,song2021denoising} as
\begin{equation}
\label{eq:diffusion_sde}
    \mathrm d \x_t = [\bm f(\x_t, t) - g(t)^2 \nabla_{\x_t} \log p_{t}(\x_t) ]\,\mathrm dt + g(t)\, \mathrm d \bar{\mathbf w}_t,  
\end{equation}
wherein $\nabla_{\x_t} \log p_t(\x_t)$ is the \emph{(Stein) score function} of the marginal distribution over $\x_t$ at time $t$, and $\bar{\mathbf w}_t$ is the standard Wiener process in reverse time. 
\citet{song2021denoising} also identify the \gls{pfode} that yields the same marginal distribution over $\x_t$ at $t$ as \gls{sde} \eqref{eq:diffusion_sde}: 
\begin{equation}
\label{eq:pfode}
    \mathrm d \x_t = \left[ \bm f(\x_t, t) - \frac{1}{2} g(t)^2 \nabla_{\x_t} \log p_t(\x_t) \right] \,\mathrm dt 
.\end{equation}
For complex, high-dimensional distributions, the score function is analytically intractable in general, so it is often approximated by a neural network $\mathbf s^\theta(\x, t)$ parameterized by $\theta$ and learned via score matching \cite{song2019generative}. 
Equivalent to learning the score, the DDPM \cite{ho2020denoising} and DDIM \cite{song2021denoising} formulations, in which the forward diffusion process is constructed as sequentially adding noise to the initial data distribution, learn a noise prediction neural network $\boldsymbol{\epsilon}^\theta(\x_t, t)$ parameterized by $\theta$ for removing the noise during the reverse process to reconstruct the data distribution. 

In this work, we focus on the \emph{\gls{pfode} of the reverse diffusion process} 
and identify two instances; the details are deferred to \apdxref{apdxsub:pfodes}. 
Without loss of generality, we denote both in a unified form
\begin{equation}
\label{eq:diff_ode}
    \dot{\mathbf x}_t = \mathbf f^\theta(\mathbf x_t, t), \quad
    \mathbf x_0 \sim \mathcal N(\mathbf 0, \I_n),
    \quad t \in [0, T].
\end{equation}

\begin{rmk}[Flow matching ODE]
\eqref{eq:diff_ode} is similar to the ODE in flow matching (FM) \cite{lipman2023flow,liu2023flow}. 
We also cover the details in \apdxref{apdxsub:pfodes}. 
\end{rmk}

\paragraph{Steering diffusion at inference.}
A common paradigm of inference-time controllable generation is through classifier guidance (CG) \citep{dhariwal2021diffusion}, wherein the key is to obtain the conditional score $\nabla_{\x_t}\log p_t(\x_t\mid \y)$ via adding a likelihood term\footnote{It is also called the ``adversarial gradient''\citep{luo2022understanding}.} 
$\nabla_{\x_t}\log p(\y\mid\x_t)$ to the pretrained score $\mathbf s^\theta(\x_t,t)$. 
Obtaining this term requires either additionally \emph{training a noise-aware classifier}, or, as methods such as \citep{chung2023diffusion,song2023pseudoinverseguided}, factorizing it over $\x_0$ $\nabla_{\x_t}\log p(\y\mid\x_t) \approx \nabla_{\x_t}\log p(\y\mid\hat{\x}_0(\x_t))$ and approximating $\x_0$ by the posterior mean $\hat{\x}_0(\x_t):=\mathbb E[\x_0\mid\x_t]$ via Tweedie's approach \citep{efron2011tweedie,kim2021noise2score}. 
The latter is training-free and plug-and-play, but the posterior mean is observed to be unreliable at the early stage of the reverse process \cite{li2024solving}.
Therefore, the guidance strength schedule across time steps requires careful and often case-by-case tuning with handcrafted heuristics.

\subsection{Problem Formulation: Diffusion Attribute Distribution Alignment}
Suppose that there is a pretrained diffusion model $\mathbf f^\theta(\mathbf x,t)$ and the reverse diffusion process is as \eqref{eq:diff_ode}. 
Following \citet{domingo-enrich2024stochastic,domingo-enrich2025adjoint}, we treat the diffusion \gls{pfode} as a dynamical system and further introduce a time-dependent, additive perturbation $\mathbf u_t \in \mathbb R^m$ called control into it: 
\begin{equation}
\label{eq:ctrl_ode}
    \dot{\mathbf x}_t
    := \mathbf f^\theta(\mathbf x_t, t) + \mathbf g(\mathbf x_t, t) \mathbf u_t,
\end{equation}
for $t\in [0,T]$, 
with $\mathbf x_0 \sim \mathcal N(\mathbf 0, \I_n)$ and $\mathbf g: \mathbb R^n \times \sR_{\ge0} \to \mathbb R^{n\times m}$ as an actuation field. 
In this work, we instantiate \eqref{eq:ctrl_ode} under the EDM \cite{karras2022elucidating} and DDIM \cite{song2021denoising} formulations; details are deferred to \apdxref{apdxsub:pfodes}.
We further let $p_{\x_T}^{u}$ denote the distribution of $\mathbf x_T$ sampled from process \eqref{eq:ctrl_ode}. 

\vspace{.5em}
\begin{rmk}[Time Direction]
From this point on, we follow the convention in dynamical systems and take the time direction as from $0$ to $T$ to describe \emph{reverse} diffusion. 
\end{rmk}

\begin{rmk}[Connection with diffusion guidance]
The perturbed \gls{pfode} \eqref{eq:ctrl_ode} can conceptually encompass classifier guidance: Let $\mathbf g(\x_t, t) \propto \nabla_{\x_t} \log p(\y \mid \x_t)$ and let the control $\mathbf u_t := w_t$ be a scheduled scalar weight. 
\end{rmk}

Given a continuously differentiable function $\Psi: \mathbb R^n \to \mathbb R^o$ as an \emph{attribute oracle}\footnote{
The attribute oracle is a known function as part of the problem setting, akin to the setup in general \emph{inverse problems} in diffusion generation \citep{song2022solving,kawar2022denoising,chung2023diffusion}. 
}, we note the attribute of interest as 
$
    \mathbf y = \Psi (\mathbf x_T) .
$
The distribution of $\mathbf y$ for samples generated by the perturbed process is the pushforward of $p_{\x_T}^{u} (\mathbf x)$ by $\Psi$:
\[
    p_\y^u(\y) = \int_{\mathbb R^n} \delta\bigl(\y - \Psi(\x)\bigr)\, p_{\x_T}^{u}(\x)\, \mathrm d \x
,\]
where $\delta(\cdot)$ is the Dirac-delta function.

\paragraph{Inference-time Attribute Distribution Alignment.} 
Given an attribute oracle $\Psi(\x)$ and a target attribute \emph{distribution} $p_\y^\text{tar}$ at \emph{test-time}, our goal is to find control $\mathbf u_t$ for $t\in [0,T]$ such that $p_\y^u(\mathbf y)$ aligns with the target \emph{without retraining or fine-tuning} the pretrained generative model $\mathbf f^\theta(\x, t)$, \ie, 
\begin{equation}
\label{eq:objective} 
    \min_{\mathbf u_t, t\in[0,T]} \; \mathbb D[p_\y^u ||  p^\text{tar}_\y] \qquad \text{s.t. }\; \text{Eq. (\ref{eq:ctrl_ode})}, 
\end{equation}
where $\mathbb D[\cdot \, || \, \cdot]$ is a statistical distance or divergence between two 
distributions with the same support.

\section{Optimal Control for Distribution Alignment}
\label{sec:method}

\subsection{Review of Sample-wise Optimal Control}
Let $\x$ denotes the state and $\mathbf u$ the \emph{control input} for a dynamical system. 
Let $L: \mathbb R^{n} \to \sR$ and $\ell : \mathbb R^n \times \mathbb R^m \to \sR$ be continuously differentiable functions. 
Let $\mathbf f: \sR^n \times \sR^m \times \sR_{\ge0} \to \sR^n$ be a continuous vector function with continuous first partial derivatives w.r.t. the first argument. 
An \gls{oc} problem is formulated as follows: 
\begin{subequations}
\label{eq:oc}
\begin{align}
    \min_{\mathbf u_t, t\in[0,T]} &\;   J(\x, \mathbf u) = L(\mathbf x_T) + \int_0^T \ell (\x_t, \mathbf u_t, t)  \, \mathrm dt \\
    \textrm{s.t.}\quad  &  \dot{\mathbf x}_t = \mathbf f (\mathbf x_t, \mathbf u_t, t) \label{eq:oc_dyn} \\
    & \mathbf x_0 = \boldsymbol{x}_\text{init}, \quad \mathbf u_t \in \mathcal U, \quad t\in [0,T] \label{eq:oc_init_adm_ctrl}
,\end{align} 
where $\mathcal U \subseteq \sR^m$ is a nonempty closed set called the admissible control set. 
\end{subequations}
$L$ is the terminal cost function, $\ell$ is the running cost or transient cost, and $J$ is the cost functional. 
The constraint \eqref{eq:oc_dyn} is the equation of motion of a dynamical system. 
See \cite{fleming2012deterministic} for more rigorous definitions. 
The formulation \eqref{eq:oc} is explicitly minimizing a scalar total cost that encodes some desired effects at the terminal time and the control efforts exerted to the system, while simultaneously respecting the system dynamics, initial state conditions, and admissible control constraints.

\subsection{Optimal Control Formulation for DADA}

\paragraph{Finite-sample Batched OC.}
Different from previous work that applies \gls{oc} to diffusion generation to achieve sample-wise objectives, our alignment objective in \eqref{eq:objective} naturally depends on multiple samples. 
As such, we stack the $M$-sample batches of states, controls, and costates (adjoints) into vectors:
$\X := [{\x^{[1]}}^\top, \ldots, {\x^{[M]}}^\top]^\top \in \sR^{Mn}$,
$\mathbf U := [{\u^{[1]}}^\top, \ldots, {\u^{[M]}}^\top]^\top \in \sR^{Mm}$,
and define the concatenated dynamics
$\mathbf F^\theta(\X, \mathbf U, t) := [{\mathbf f^\theta(\x^{[1]}, \u^{[1]}, t)}^\top, \ldots, {\mathbf f^\theta(\x^{[M]}, \u^{[M]}, t)}^\top]^\top \in \sR^{Mn}$
and concatenated actuation 
$    
\mathbf G(\X, t)
    = \mathrm{blkdiag} \left(\mathbf g (\x^{[1]}, t), \ldots, \mathbf g (\x^{[M]}, t) \right)
    \in \sR^{Mn \times Mm}
$. 
We stack the i.i.d.\ initial states $\x^{[i]}_{\rm init} \sim \gauss{\mathbf 0}{\I_n}$ as $\X_{\rm init} \in \sR^{Mn}$.
We also write $\mathcal U^M := \mathcal U \times \cdots \times \mathcal U \subseteq \sR^{Mm}$
and use $\Pi_{\mathcal U^M}$ for the component-wise projection onto $\mathcal U^M$.

With the above notations, we formulate the DADA problem as a finite-sample batched \gls{oc} problem:
\begin{subequations}
\label{eq:oc_dada}
\begin{align}
    \min_{\mathbf U_t\in\mathcal U^M,\ t\in[0,T]}&\,  \Phi \left(\hat p^u_{\y}(\X_T)\right) + \frac{\rho}{2}\int_0^T \|\mathbf  U_t\|^2\, \mathrm dt
    \label{eq:oc_ada_obj}\\
    \text{s.t.} \qquad &
     \dot \X_t = \mathbf F^\theta(\X_t, \mathbf U_t, t), \; \X_0 = \X_{\rm init} \label{eq:oc_ada_dyn} \\
     &\mathbf U_t \in \mathcal U^M \label{eq:oc_ada_ctrl}
\end{align}
\end{subequations}
where
$\hat p^u_{\y}(\X_T)$ denotes attribute distribution estimated by the batched samples $\X_T$, and we
adopt the reverse KL divergence against the target as the terminal cost:
\begin{equation}
\label{eq:oc_ada_term_cost}
\Phi \big(\hat p^u_{\y}(\X_T)\big) := \kldiv{\hat p^u_{\y}(\X_T)}{p_{\y}^{\rm tar}}.
\end{equation}
While the dynamics \eqref{eq:oc_ada_dyn} are separable across samples, \eqref{eq:oc_ada_term_cost} couples the batch through $\hat p^u_{\y}(\X_T)$.

\subsection{Optimal Controller for DADA OC}

Problem~\eqref{eq:oc_dada} is an \gls{oc} problem in $\sR^{Mn}$.
Define the Hamiltonian associated with \eqref{eq:oc_dada} as
\begin{align*}
\widetilde H(\X_t, \mathbf N_t, \mathbf U_t, t):=\frac{\rho}{2}\|\mathbf U_t\|^2+ \mathbf N_t^\top \mathbf F^\theta(\X_t, \mathbf U_t, t)
\end{align*}
where $\mathbf N_t = [{\bm\nu_t^{[1]}}^\top, \ldots, {\bm\nu_t^{[M]}}^\top]^\top \in \sR^{Mn}$ stacks the adjoint (costate) vectors. 
By Pontryagin's Maximum Principle (PMP) \cite{fleming2012deterministic,pmp1972levine},
the necessary conditions for an optimal trajectory
$(\tilde\X_t^\ast, \tilde{\mathbf N}_t^\ast, \tilde{\mathbf U}_t^\ast)$
of \eqref{eq:oc_dada} are:
\begin{subequations}
\label{eq:pmp}
\begin{align}
    \dot{\tilde\X}_t^\ast
    &= \mathbf F^\theta(\tilde\X_t^\ast, \tilde{\mathbf U}_t^\ast, t),
    \label{eq:pmp_x}\\
    \dot{\tilde{\mathbf N}}_t^\ast
    &= -\,\nabla_{\X}\mathbf F^\theta(\tilde\X_t^\ast, \tilde{\mathbf U}_t^\ast, t)^\top \tilde{\mathbf N}_t^\ast,
    \label{eq:pmp_adj}\\
    \tilde{\mathbf N}_T^\ast
    &= \nabla_{\X}\Phi \left(\hat p^u_{\y}(\tilde\X_T^\ast)\right),
    \label{eq:pmp_term}\\
    \tilde{\mathbf U}_t^\ast
    &\in \argmin_{\mathbf U\in\mathcal U^M}\ \widetilde H(\tilde\X_t^\ast, \tilde{\mathbf N}_t^\ast, \mathbf U, t).
    \label{eq:pmp_u_orig}
\end{align}
\end{subequations}
Since $\mathbf F^\theta$ concatenates per-sample dynamics \eqref{eq:oc_dyn}, its Jacobian is block-diagonal:
\begin{equation}
\label{eq:jac_blkdiag}
    \nabla_\X \mathbf F^\theta(\X, \mathbf U, t)
    = \mathrm{blkdiag} \big(\mathbf J_t^{[1]}, \ldots, \mathbf J_t^{[M]}\big)
    \in \sR^{Mn \times Mn},
\end{equation}
where $\mathbf J_t^{[i]} := \nabla_{\x} \mathbf f^\theta(\x^{[i]}_t, \u^{[i]}_t, t) \in \sR^{n\times n}$
is the standard Jacobian of the per-sample vector field w.r.t.\ the state.
Consequently, the adjoint dynamics \eqref{eq:pmp_adj} decouple across samples:
$\dot{\bm\nu}_t^{\ast,[i]} = -\mathbf (J_t^{[i]})^{\top} \bm\nu_t^{\ast,[i]}$ for each $i$.
The only cross-sample coupling enters through the terminal condition \eqref{eq:pmp_term},
which depends on all $M$ terminal states via the batch cost~$\Phi$.

However, jointly solving all the conditions in \eqref{eq:pmp} is essentially a two-point boundary value problem; for general nonlinear, nonconvex dynamics, it is challenging to solve.
Rather than directly solving them in joint,
we follow \citet{li2018maximum,wang2025training} and employ the Extended Method of Successive Approximations (E-MSA) to solve a \emph{proximalized} \gls{oc} subproblem in an iterative fashion for stability.
Specifically, given a reference control trajectory $\bm U_t^{\rm ref}$, we consider a subproblem
\begin{equation}
\label{eq:oc_dada_proximal}
\min_{\mathbf U_t\in \mathcal U^M} \, \Phi \big(\hat p^u_{\y}(\X_T)\big) + \frac{1}{2}\int_0^T \left( \rho \|\mathbf U_t\|^2  + \gamma \|\mathbf U_t - \bm U^\textrm{ref}_t\|^2 \right) \mathrm dt, 
\quad \text{s.t.\; \eqref{eq:oc_ada_dyn} and (\ref{eq:oc_ada_ctrl})}
\end{equation}
where $\gamma$ is a hyperparameter. 
Applying PMP to this \emph{proximalized} subproblem yields the \emph{necessary} conditions akin to \eqref{eq:pmp} but in terms of the extended Hamiltonian
\begin{align*}
H(\X_t, \mathbf N_t, \mathbf U_t, t):=& \widetilde H(\X_t, \mathbf N_t, \mathbf U_t, t)+\frac{\gamma}{2} \|\mathbf U_t - \bm U_t^{\rm ref}\|^2
\end{align*}
and the optimal trajectories $({\mathbf X}_t^{\ast}, {\mathbf N}_t^{\ast}, {\mathbf U}_t^{\ast})$ that solves the \emph{proximalized} problem.
Accordingly,
\begin{align}
    \mathbf U_t^\ast
    &\in \argmin_{\mathbf U\in\mathcal U^M}\, H(\X_t^\ast, \mathbf N_t^\ast, \mathbf U, t) \label{eq:pmp_u}
.\end{align}

\paragraph{Closed-form Control Update.}
Since the running cost is quadratic in $\mathbf U$ and the dynamics are control-affine, \eqref{eq:pmp_u} admits a \emph{closed-form} minimizer:
\begin{align}
\label{eq:emsa_u_update}
\mathbf U_t^\ast
&= \Pi_{\mathcal U^M} \left(\xi \bm U_t^{\rm ref}-\eta\, \mathbf G\left(\X_t^\ast, t \right) \mathbf  N_t^\ast\right), 
\end{align}
where $\xi:=\frac{\gamma}{\rho+\gamma}$ and $\eta:=\frac{1}{\rho+\gamma}$. 

\begin{proposition}
\label{prop:optimal_u}
For any fixed $t\in[0,T]$ and fixed $(\bm x_t^{[i]},\bm\nu_t^{[i]})$, with $\rho+\gamma>0$,
(1) the sample-wise optimal control $\mathbf u_t^{\ast,[i]}$ for \eqref{eq:pmp_u} is given by
\begin{align*}
    \mathbf u_t^{\ast,[i]} \in \Pi_{\mathcal U}\left(\bar{\bm u}_t^{[i]}\right)
    := \argmin_{\mathbf u\in\mathcal U} \left\|\mathbf u-\bar{\bm u}_t^{[i]} \right\|^2,
    \; \text{where} \;
    \bar{\bm u}_t^{[i]}
    := \frac{1}{\rho+\gamma}\left(
        \gamma\,\bm u_t^{\rm ref,[i]} - \mathbf g\left(\bm x_t^{[i]},t \right)^\top \bm\nu_t^{[i]}
    \right),
\end{align*}
and (2) $\Pi_{\mathcal U}\left(\bar{\bm u}_t^{[i]}\right)$ is nonempty.
Further, $\mathbf u_t^{\ast,[i]}$ is unique if $\mathcal U$ is convex.
\end{proposition}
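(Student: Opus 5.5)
The approach is to reduce the minimization of the extended Hamiltonian in \eqref{eq:pmp_u} to a completed-square projection problem, sample by sample. The first step is to note that, at a fixed $t$ with $(\X_t^\ast,\mathbf N_t^\ast)$ fixed, the only $\mathbf U$-dependent part of $H$ is
\[
\frac{\rho}{2}\|\mathbf U\|^2+\frac{\gamma}{2}\|\mathbf U-\bm U_t^{\rm ref}\|^2+\mathbf N_t^\top \mathbf G(\X_t,t)\mathbf U .
\]
This holds because the control-affine form \eqref{eq:ctrl_ode} gives $\mathbf F^\theta(\X,\mathbf U,t)=[\mathbf f^\theta(\x^{[i]},t)+\mathbf g(\x^{[i]},t)\u^{[i]}]_i$, and the drift term does not depend on $\mathbf U$. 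Since $\mathbf G$ is block-diagonal, $\|\cdot\|^2$ is a sum over samples, and $\mathcal U^M$ is a Cartesian product, the objective splits as $\sum_i h_i(\u^{[i]})$ with the constraint $\u^{[i]}\in\mathcal U$ for each $i$. A minimizer over $\mathcal U^M$ is therefore exactly a tuple of per-sample minimizers of
\[
h_i(\u)=\frac{\rho}{2}\|\u\|^2+\frac{\gamma}{2}\|\u-\bm u_t^{{\rm ref},[i]}\|^2+(\mathbf g(\bm x_t^{[i]},t)^\top\bm\nu_t^{[i]})^\top\u .
\]

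Next, complete the square. Expanding gives $h_i(\u)=\frac{\rho+\gamma}{2}\|\u-\bar{\bm u}_t^{[i]}\|^2+c_i$, where $c_i$ does not depend on $\u$ and $\bar{\bm u}_t^{[i]}$ is as in the statement. To check this, compare the linear terms: $-(\rho+\gamma)\bar{\bm u}^\top\u=-\gamma\bm u^{{\rm ref}\top}\u+(\mathbf g^\top\bm\nu)^\top\u$, which holds by the definition of $\bar{\bm u}_t^{[i]}$. Because $\rho+\gamma>0$, minimizing $h_i$ over $\mathcal U$ is equivalent to minimizing $\|\u-\bar{\bm u}_t^{[i]}\|^2$ over $\mathcal U$. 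This proves (1), and stacking the per-sample minimizers recovers \eqref{eq:emsa_u_update}.

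For (2), note that $\mathcal U$ is nonempty and closed. Pick any $\u_0\in\mathcal U$ and restrict the search to $\mathcal U\cap\{\u:\|\u-\bar{\bm u}\|\le\|\u_0-\bar{\bm u}\|\}$. This set is nonempty and compact, being closed and bounded in $\sR^m$. The map $\u\mapsto\|\u-\bar{\bm u}\|^2$ is continuous, so by Weierstrass it attains its minimum on this set, and that minimum is global over $\mathcal U$.

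For uniqueness when $\mathcal U$ is convex, suppose $\u_1\neq\u_2$ are both minimizers with common distance $d$. Their midpoint lies in $\mathcal U$ by convexity, and by the parallelogram identity
\[
\|\tfrac{\u_1+\u_2}{2}-\bar{\bm u}\|^2=d^2-\tfrac14\|\u_1-\u_2\|^2<d^2,
\]
which is a contradiction. I do not expect a real obstacle here. The only care needed is to state the separability argument cleanly, namely that the product structure of $\mathcal U^M$ together with the block-diagonal $\mathbf G$ makes the joint argmin the product of the per-sample argmins, and to keep track of the sign convention in $\bar{\bm u}$.
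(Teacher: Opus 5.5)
Your proposal is correct and follows the same route as the paper's proof. You isolate the $\mathbf u$-dependent part of the extended Hamiltonian, complete the square to obtain $\frac{\rho+\gamma}{2}\|\mathbf u-\bar{\bm u}_t^{[i]}\|^2$ plus a constant, and reduce the problem to projection onto $\mathcal U$, with existence from closedness and coercivity and uniqueness from strict convexity. Beyond the paper, you spell out the sample-wise decoupling through the block-diagonal $\mathbf G$ and the product set $\mathcal U^M$, and you prove existence and uniqueness directly via Weierstrass and the parallelogram identity, where the paper works per sample and cites standard results.
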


The proof of \textbf{Proposition~\ref{prop:optimal_u}} is in \apdxref{apdx:derivations}.
Proposition~\ref{prop:optimal_u} indicates that the optimal control at each time is a function of the state and adjoint at that time.
The resulting \emph{discrete-time} algorithm (\cref{alg:dada_emsa_1} in \apdxref{apdxsub:algo}) performs the following three steps iteratively:
(i) Forward pass: Simulate \eqref{eq:oc_ada_dyn} in forward time.  
(ii) Backward pass: Evaluate the distribution-alignment cost, take the gradients, and simulate the adjoint dynamics in backward time. 
(iii) Control update with \eqref{eq:emsa_u_update}. 
An illustration is in \cref{fig:teaser}. 
In practice, we adopt Euler discretizations for the forward and backward passes and leverage the vector-Jacobian product (VJP) at backward pass without materializing the gigantic dynamics Jacobian.
We analyze the computational complexity in \apdxref{apdxsub:complexity}.

\paragraph{Differentiable Approximations for Terminal Cost.} 
For discrete attributes in practice, the attribute oracle is often a continuously differentiable neural network that maps an input $\mathbf x$ to an attribute-logit vector. 
To maintain differentiability, we \emph{estimate} the empirical distribution $\hat p^u_{\mathbf y}$ over a batch of $M$ generated samples by averaging their softmax probabilities across the batch:
\begin{equation}
    \hat p^u_{\mathbf y} = \frac{1}{M} \sum_{i=1}^M \textrm{softmax}\left( \Psi\left( \mathbf{x}_T^{[i]} \right) \right).
\end{equation}
The terminal cost is then computed as the element-wise KL divergence:
\begin{equation}
    \Phi \left( \hat p^u_{\mathbf y} \right) 
    = \kldiv{\hat p^u_\mathbf{y}}{p^\text{tar}_\mathbf{y}} 
    = \sum_{j=1}^{o} \hat p^{u, (j)}_{\mathbf y}  \log \frac{\hat p^{u, (j)}_{\mathbf y} }{ p^{\text{tar}, (j)}_{\mathbf y} } 
,\end{equation}
where $o$ is the total number of attribute classes, and the superscript $(j)$ indexes the probability mass associated with the $j$-th class.

\section{Experiments}
\label{sec:experiments}

\vspace{-1em}
We address the following research questions through experiments in image generation:
\begin{enumerate}[label={\textbf{RQ~\arabic*}:}, leftmargin=*, topsep=-1pt, itemsep=-1pt]
    \item How effective is our method at aligning attribute distributions toward test-time targets?
    \item Can our method better preserve sample quality while achieving distributional alignment?
\end{enumerate}
We evaluate our method in two settings: 
(\romannumeral1) CIFAR-100 tests whether the method can match flexible target distributions over semantic labels with different support sizes; 
(\romannumeral2) Human face generation tests fairness-motivated attribute alignment over age, gender, race, and their joint distributions, and further examines whether the same formulation and our method applies to flow matching in latent space.

\vspace{-0.5em}
\subsection{CIFAR-100 with Hierarchical Semantic Classes}

\begin{wrapfigure}{R}{0.45\textwidth}
    \vspace{-1em}
    \centering
    \begin{subfigure}[t]{0.32\linewidth}
        \centering
        \includegraphics[width=\linewidth]{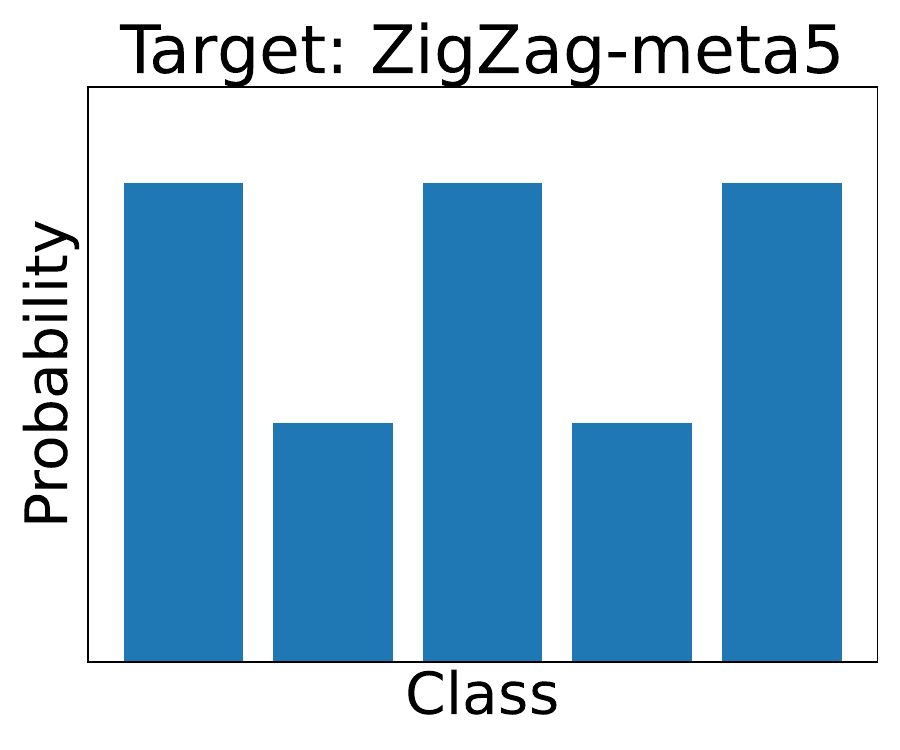}\\
        \includegraphics[width=\linewidth]{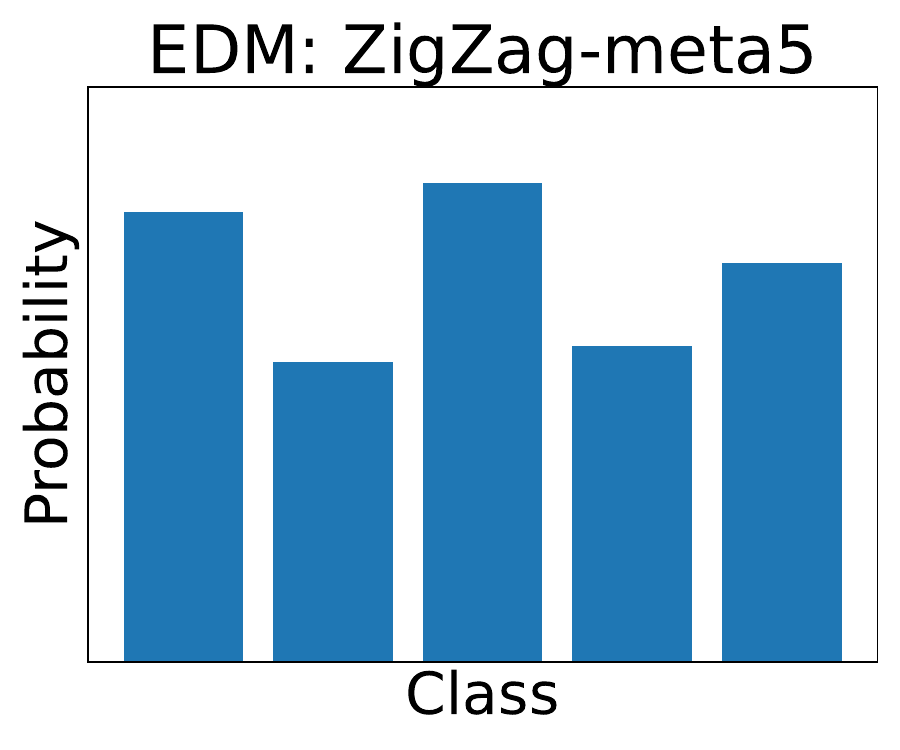}\\
        \includegraphics[width=\linewidth]{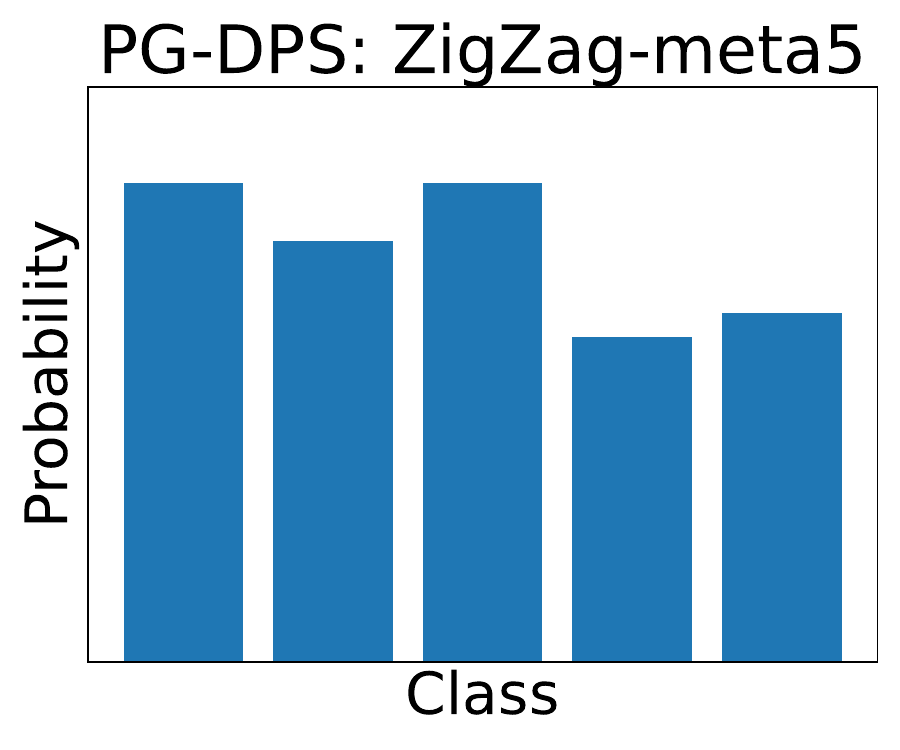}\\
        \includegraphics[width=\linewidth]{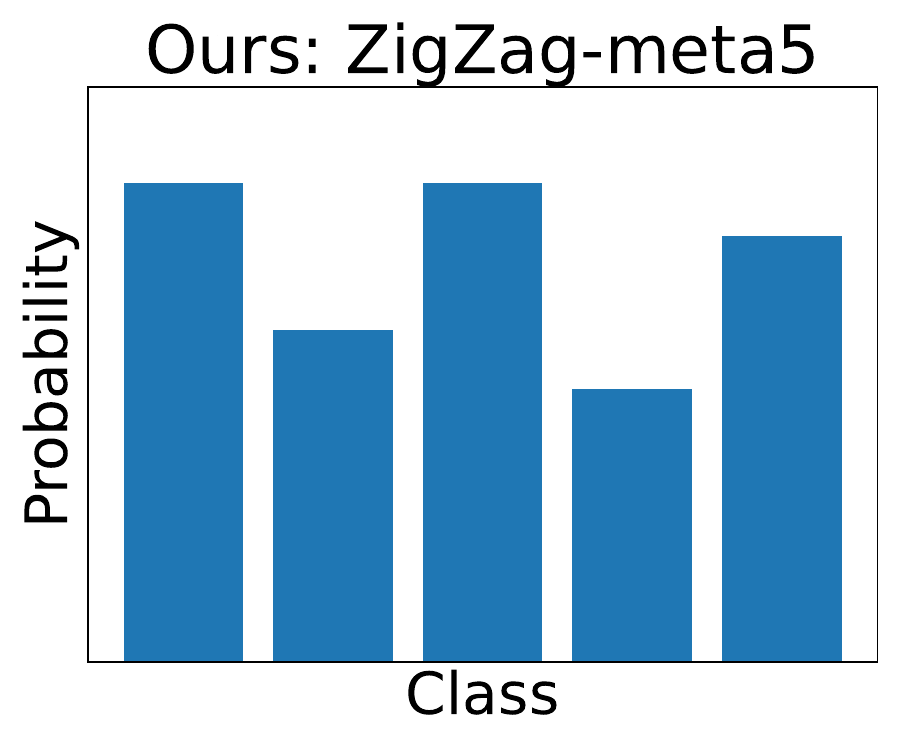}
        \caption{\scriptsize $\mathsf{ZigZag}$-\texttt{meta5}}
    \end{subfigure}
    \hfill
    \begin{subfigure}[t]{0.32\linewidth}
        \centering
        \includegraphics[width=\linewidth]{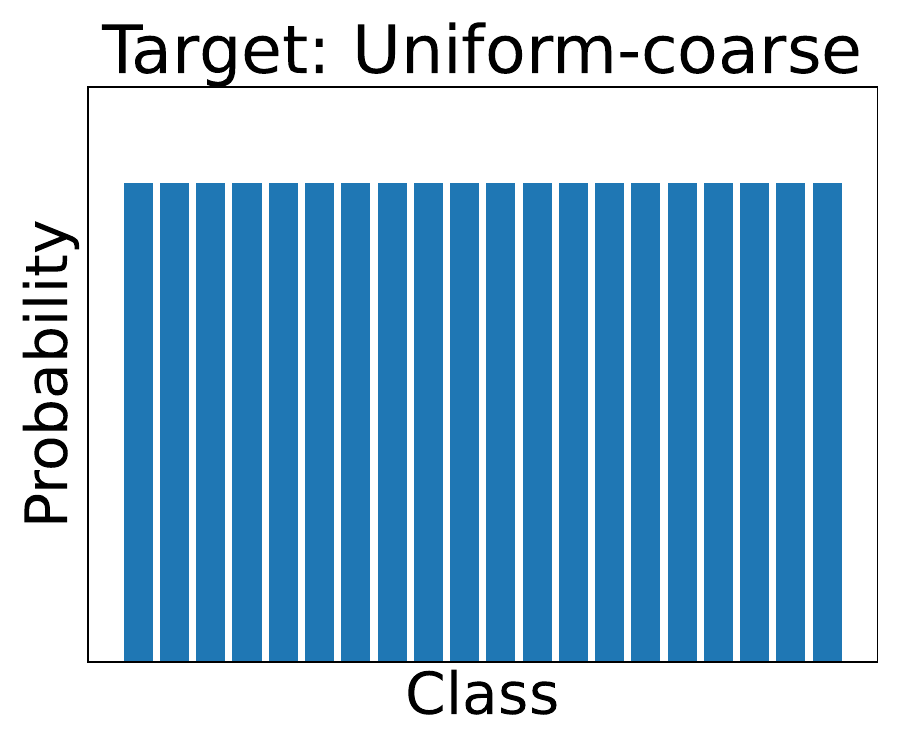}\\
        \includegraphics[width=\linewidth]{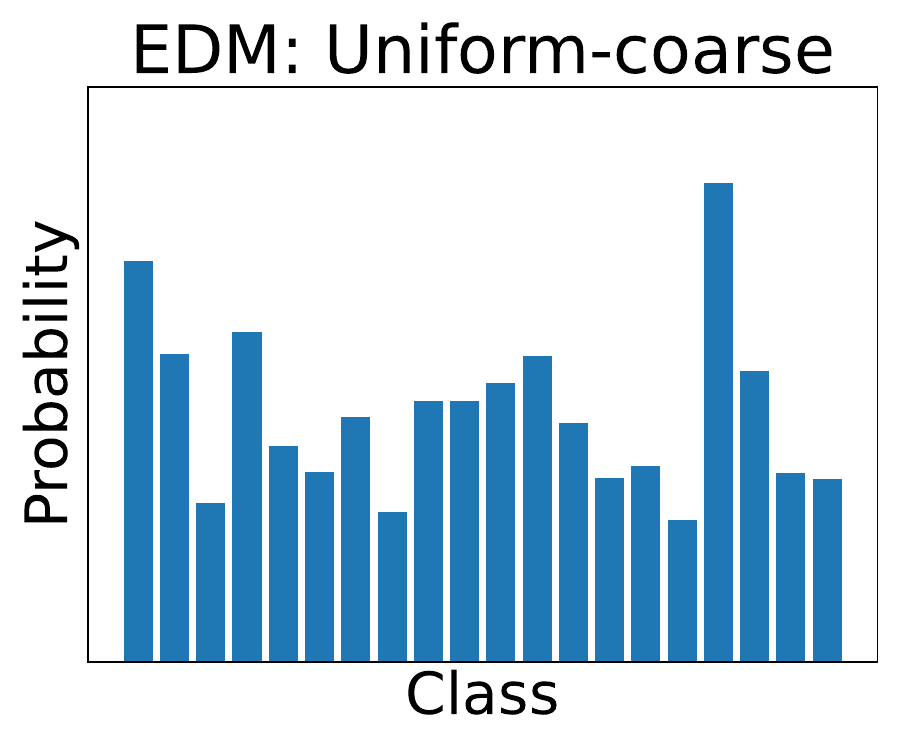}\\
        \includegraphics[width=\linewidth]{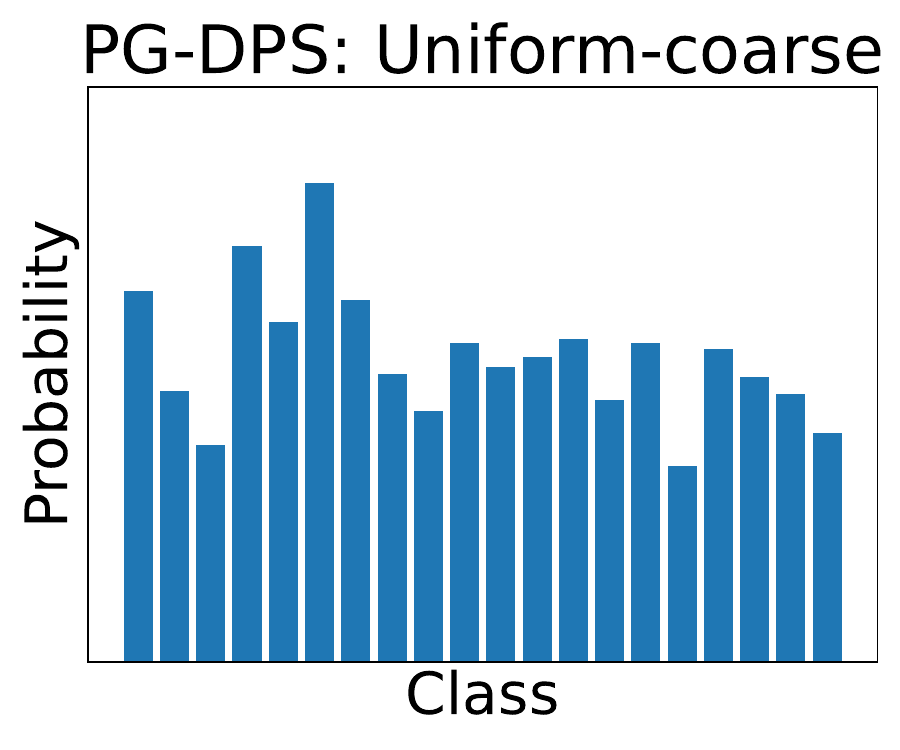}\\
        \includegraphics[width=\linewidth]{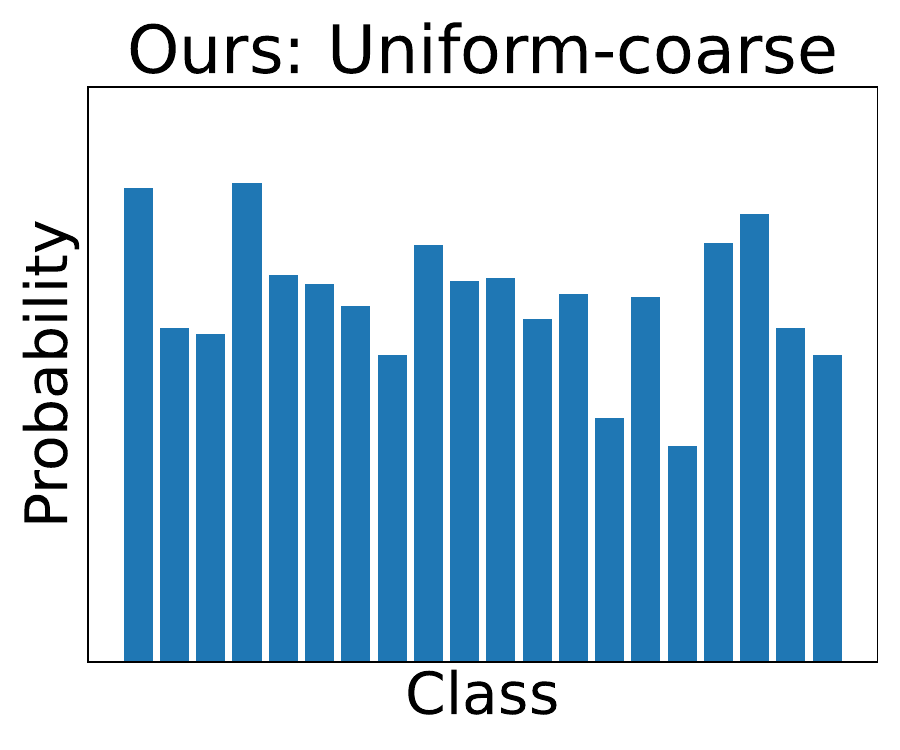}
        \caption{\tiny$\mathsf{Uniform}$-\texttt{coarse}}
    \end{subfigure}
    \hfill
    \begin{subfigure}[t]{0.32\linewidth}
        \centering
        \includegraphics[width=\linewidth]{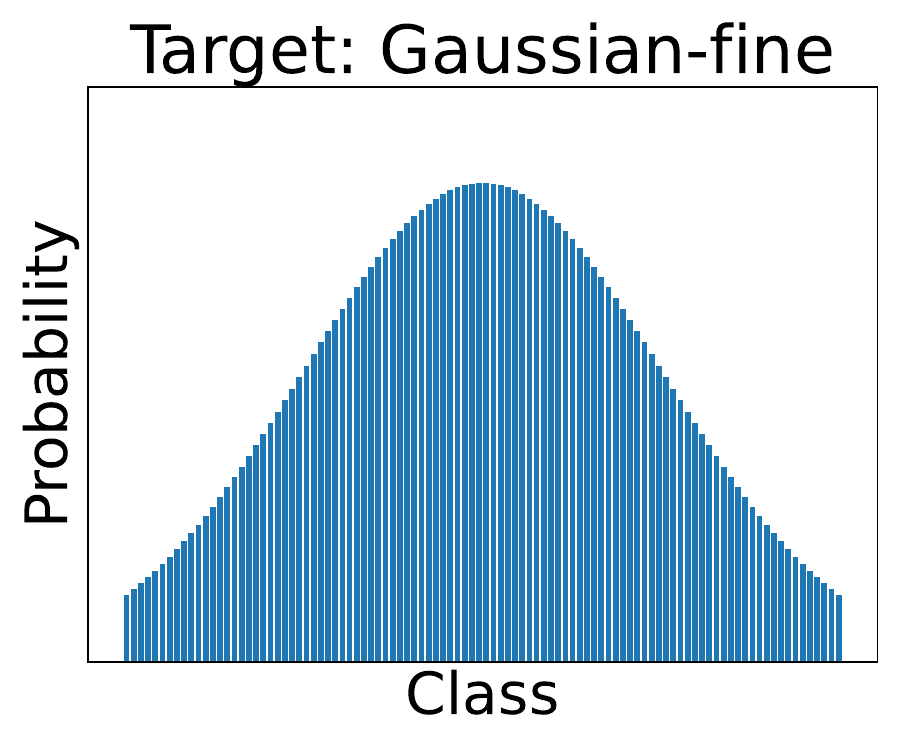}\\
        \includegraphics[width=\linewidth]{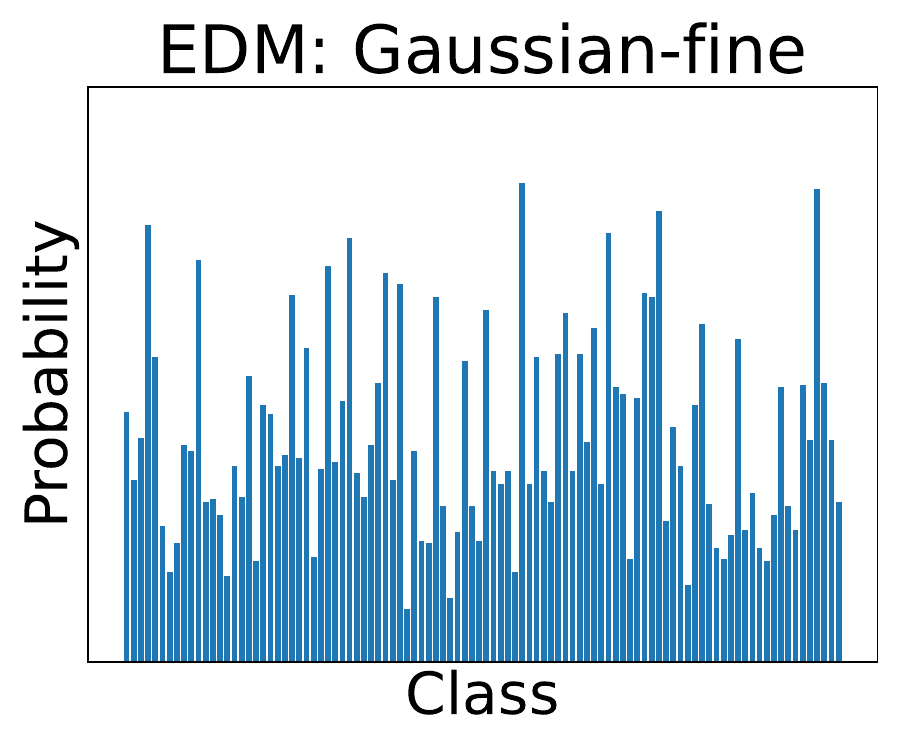}\\
        \includegraphics[width=\linewidth]{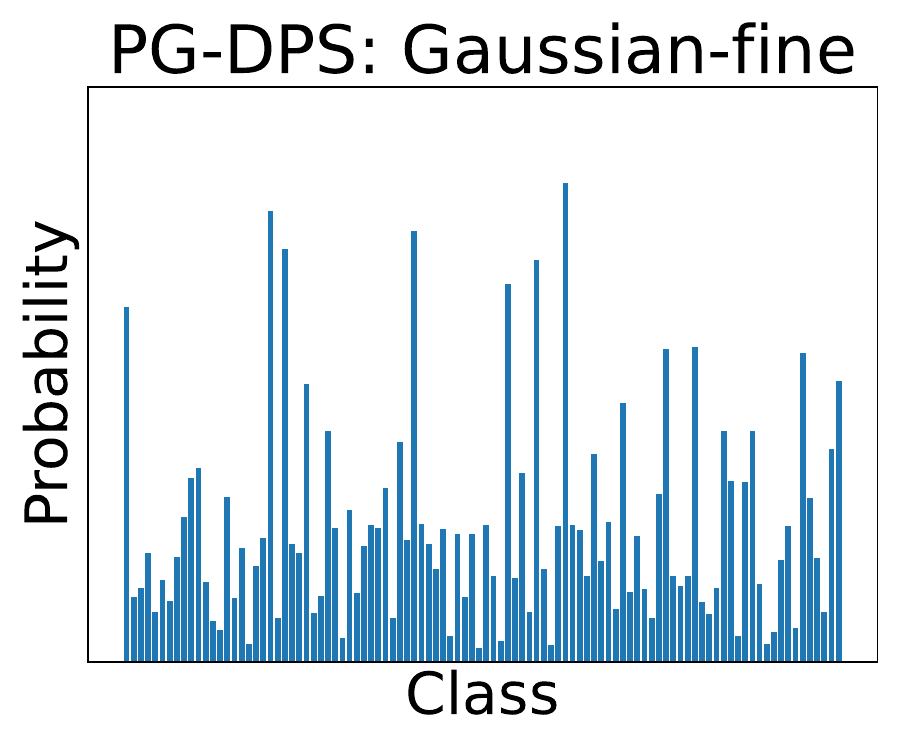}\\
        \includegraphics[width=\linewidth]{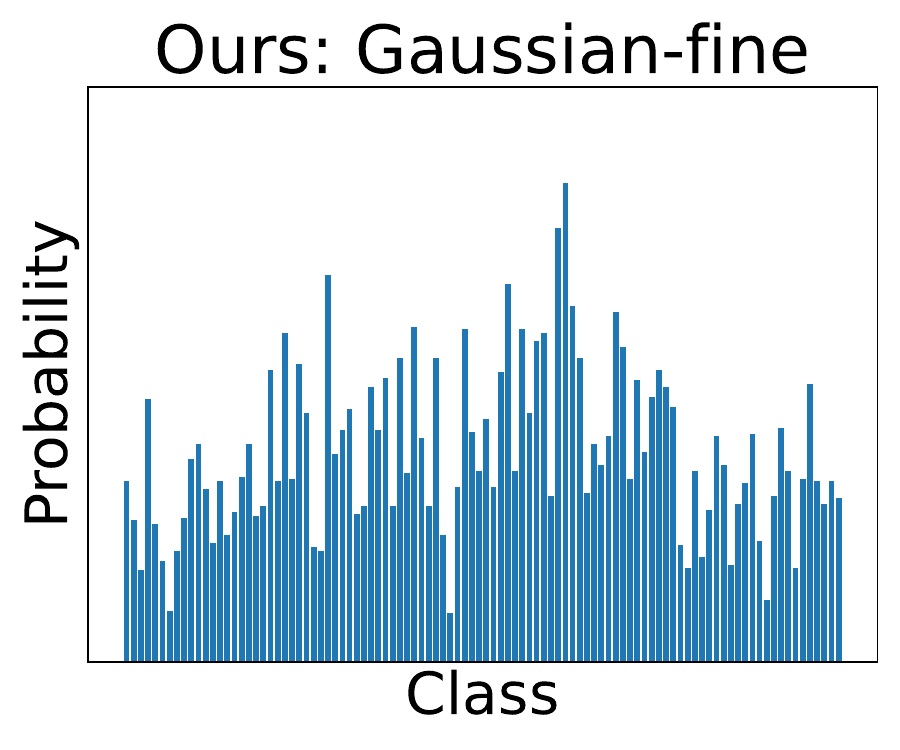}
        \caption{\tiny$\mathsf{Gaussian}$-\texttt{fine}}
    \end{subfigure}
    
    \caption{
        \textbf{Top row}: Test-time \emph{target} attribute distributions (CIFAR-100). 
        \textbf{2nd/3rd rows:} Generated distributions of baselines. 
        \textbf{Bottom row:} Generated distributions of our method.
    }
    \label{fig:cifar_targets}
    \vspace{-3em}
\end{wrapfigure}

\vspace{-0.5em}
As a proof of concept, we first test on generating low-resolution images across semantic classes. 
We treat the semantic class of an image as an attribute and consider flexible test-time target distributions with different support sizes.

\textbf{Setup}: 
We adopt an unconditional EDM model \cite{karras2022elucidating} trained on the CIFAR-100 dataset \cite{krizhevsky2009learning} as the base diffusion model and 
the attribute oracle $\Psi$ is instantiated as ResNet \cite{he2016deep} image classifiers. 
We test our method on three different levels of class labels with a coarse-to-fine hierarchy: \texttt{meta5}, \texttt{coarse}, and \texttt{fine}, with $5$, $20$, and $100$ classes, respectively. 
We choose three different attribute distributions as test-time targets for each level: $\mathsf{Uniform}$, $\mathsf{ZigZag}$, and $\mathsf{Gaussian}$. 
\cref{fig:cifar_targets} (top row) shows instances of the target distributions.

\begin{table*}[t]
    \centering
    \small
    \setlength{\tabcolsep}{1.0mm}
    \caption{Quantitative evaluation metrics (all lower the better) on \texttt{CIFAR-100}. Best in \textbf{bold}, second-best \underline{underlined}.}
    \label{tab:cifar}
    \begin{tabular}{l|cccc|cccc|cccc}
    \toprule
    & \multicolumn{4}{c|}{\textbf{\texttt{meta5}}}& \multicolumn{4}{c|}{\textbf{\texttt{coarse}}}& \multicolumn{4}{c}{\textbf{\texttt{fine}}}\\
    \textsc{Method} & TV$\downarrow$ & JS$\downarrow$ & $\chi^2$$\downarrow$ & FID$\downarrow$ & TV$\downarrow$ & JS$\downarrow$ & $\chi^2$$\downarrow$ & FID$\downarrow$ & TV$\downarrow$ & JS$\downarrow$ & $\chi^2$$\downarrow$ & FID$\downarrow$ \\
    \hline
    \multicolumn{13}{c}{$\mathsf{Gaussian}$} \\
    \hline
    \textsc{EDM} & 0.252 & 0.196 & 0.0749 & \underline{17.4} & 0.274 & 0.227 & 0.0988 & 17.4 & \underline{0.248} & \underline{0.231} & \underline{0.101} & \underline{16.1} \\
    \textsc{PG-DPS} & \underline{0.154} & \underline{0.133} & \underline{0.0350} & 21.3 & \underline{0.195} & \underline{0.165} & \underline{0.0528} & \underline{16} & 0.348 & 0.299 & 0.165 & 33.1 \\
    \rowcolor{green!15}\textsc{Ours} & \textbf{0.136} & \textbf{0.117} & \textbf{0.0272} & \textbf{16.5} & \textbf{0.176} & \textbf{0.146} & \textbf{0.0421} & \textbf{15.7} & \textbf{0.184} & \textbf{0.173} & \textbf{0.0573} & \textbf{13.7} \\
    \hline
    \multicolumn{13}{c}{$\mathsf{Zigzag}$} \\
    \hline
    \textsc{EDM} & \underline{0.067} & \underline{0.0574} & \underline{0.00658} & \underline{15.3} & 0.185 & 0.148 & 0.0433 & 16.1 & \underline{0.24} & \underline{0.205} & \underline{0.0812} & \underline{16.1} \\
    \textsc{PG-DPS} & 0.113 & 0.1 & 0.0199 & 22 & \underline{0.133} & \underline{0.116} & \underline{0.0267} & \underline{15.7} & 0.341 & 0.288 & 0.156 & 33.8 \\
    \rowcolor{green!15}\textsc{Ours} & \textbf{0.0544} & \textbf{0.0495} & \textbf{0.00489} & \textbf{14} & \textbf{0.103} & \textbf{0.0927} & \textbf{0.0171} & \textbf{14.8} & \textbf{0.171} & \textbf{0.15} & \textbf{0.0442} & \textbf{12.6} \\
    \hline
    \multicolumn{13}{c}{$\mathsf{Uniform}$} \\
    \hline
    \textsc{EDM} & 0.083 & 0.0651 & 0.00845 & \underline{15.5} & 0.132 & 0.114 & 0.0255 & 15.5 & \underline{0.186} & \underline{0.159} & \underline{0.0495} & \underline{15.5} \\
    \textsc{PG-DPS} & \underline{0.0692} & \underline{0.0529} & \underline{0.00559} & 22.4 & \underline{0.0805} & \underline{0.0742} & \underline{0.0109} & \underline{15.2} & 0.287 & 0.255 & 0.123 & 30.7 \\
    \rowcolor{green!15}\textsc{Ours} & \textbf{0.0281} & \textbf{0.0292} & \textbf{0.00171} & \textbf{13.1} & \textbf{0.069} & \textbf{0.0655} & \textbf{0.00853} & \textbf{12.6} & \textbf{0.141} & \textbf{0.12} & \textbf{0.0284} & \textbf{12.6} \\
    
    \bottomrule
    \end{tabular}
\vspace{-1em}
\end{table*}

\textbf{Baselines}: 
We compare our method with vanilla EDM, which corresponds to i.i.d. sampling from the learned distribution, and a guidance-based method, Particle Guidance (PG) \citep{corso2024particle}. 
For PG, the guidance potential is the same terminal cost in our method but applied on the estimated terminal sample $\hat\x_T$ via Tweedie's formula \citep{efron2011tweedie}, akin to DPS \cite{chung2023diffusion}. 
See all implementation details in \apdxref{apdx:cifar}.

\textbf{Evaluation Metrics}: 
We use several statistical distances/divergences to quantify semantic-class distribution alignment: Total Variation (TV), Jensen--Shannon divergence (JS), and the $\chi^2$ distance\footnote{
    We adopt the symmetric $\chi^2$ distance \cite{markatou2018statistical}: $d_{\chi^2}(P, Q) = \frac{1}{2} \sum_{i=1}^{K} (P_i-Q_i)^2 / (P_i+Q_i)$ for two discrete distributions $P$ and $Q$ with the same finite support. 
}. 
We use the Fr\'echet Inception Distance (FID) to evaluate image quality.\footnote{
The FID references were computed by sampling from the training dataset according to each target attribute distribution. 
}

\textbf{Results:} 
\cref{fig:cifar_targets} (except the top row) shows the attribute distributions of samples generated by different methods, with the targets in the first row. 
Table~\ref{tab:cifar} presents comprehensive quantitative comparisons across methods and target attribute distributions with different support sizes. 
Across all tested settings, our method achieves the best performance in terms of both attribute-distribution alignment and sample quality. 
Surprisingly, \textsc{PG} performs even worse than vanilla EDM in some cases (\eg, $\mathsf{ZigZag}$-\texttt{meta5} and $\mathsf{Gaussian}$-\texttt{fine}). 
This suggests that naively applying guidance-style approaches with a batch-wise distributional loss may be insufficient for attribute distribution alignment. 
Qualitative results and ablation study are in \apdxref{apdxsub:additional_cifar}.

\subsection{Human Face Generation}

\begin{figure*}[t]
  \begin{center}
    \centerline{
        \includegraphics[width=0.32\textwidth]{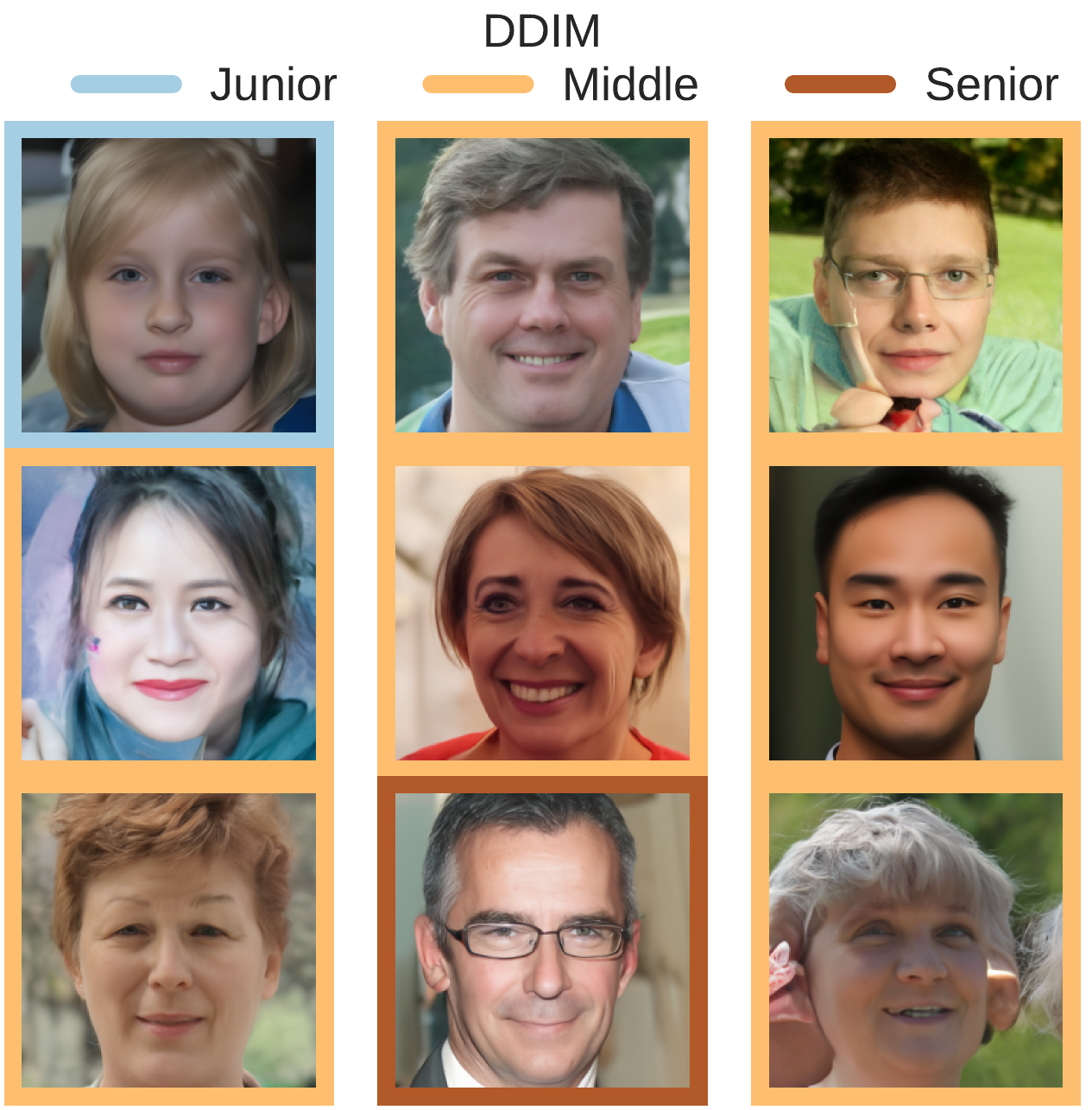}
        \hfill
        \includegraphics[width=0.32\textwidth]{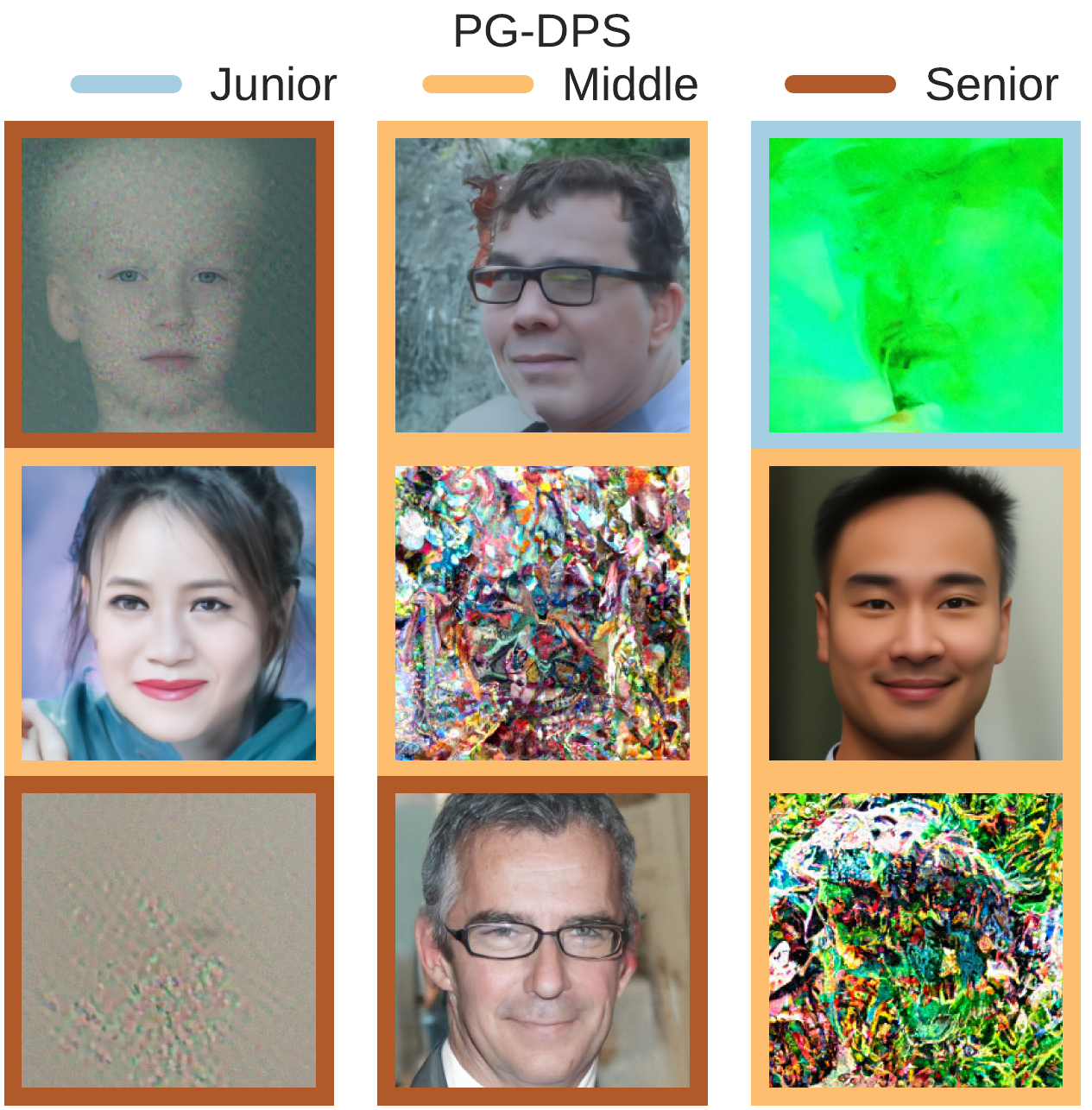}
        \hfill
        \includegraphics[width=0.32\textwidth]{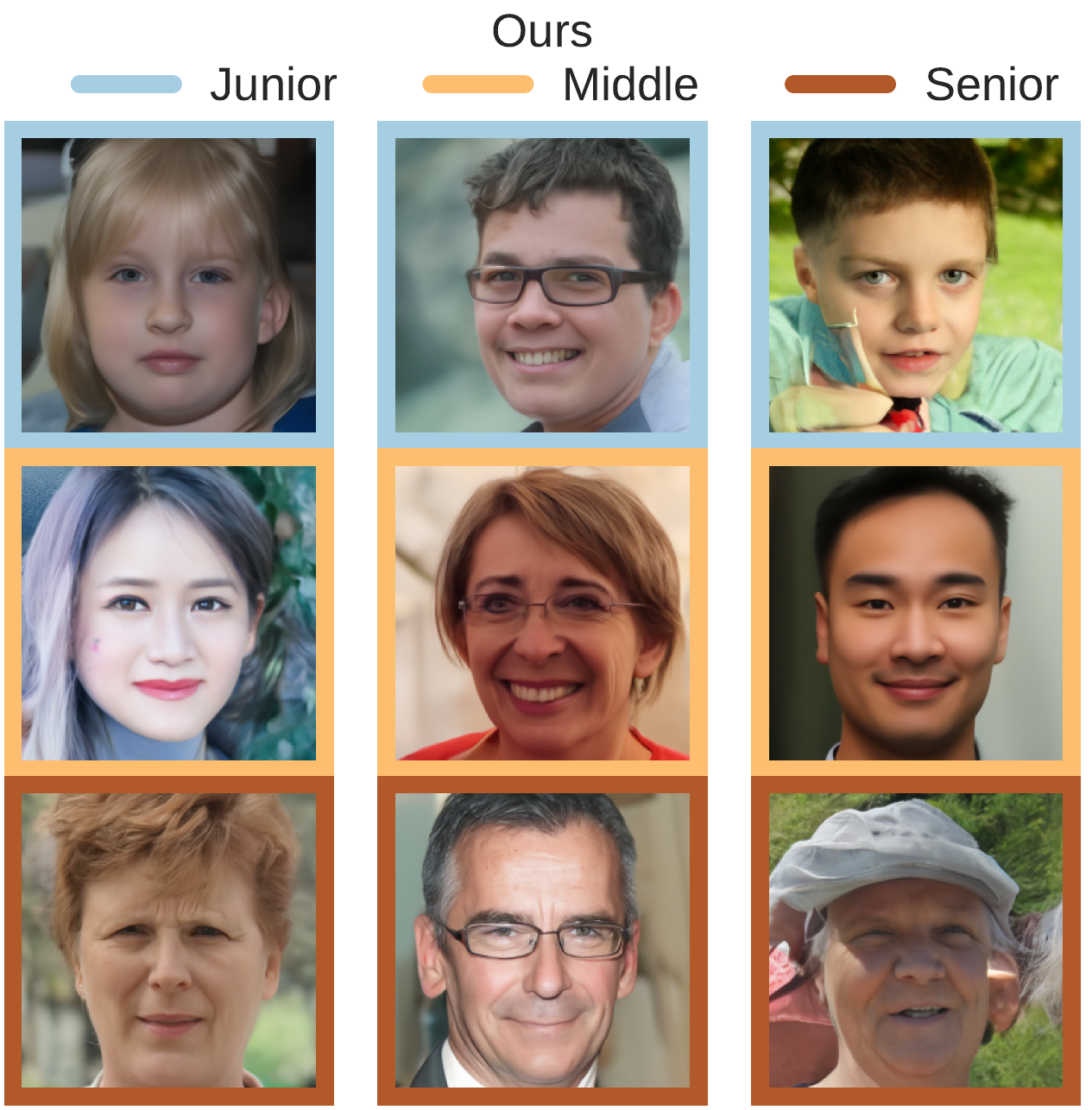}
    }
    \caption{
        Qualitative samples \emph{from a single batch} of our method (right), compared to vanilla DDIM (left) and PG (mid) for generating human faces with a \emph{fair} target distribution over age groups ($\mathsf{Uniform}$-\texttt{age}), where the ratio of faces in three age groups should be $1{:}1{:}1$. 
        The pretrained diffusion model learned a highly imbalanced age distribution from the FFHQ dataset, where most faces are classified as \texttt{Middle}.
        Our approach aligns the generated attribute distribution with the target by minimally editing facial details such as wrinkles and face shapes. 
        See more samples in \apdxref{apdxsub:additional_face}. 
    }
    \label{fig:face_samples}
  \end{center}
  \vspace{-30pt}
\end{figure*}

We aim to achieve human-face image generation with \emph{fairness and controllable ratios} across ages, genders, and races, mitigating potential biases introduced by pretrained diffusion models \emph{without retraining or fine-tuning them}. 
Beyond the standard diffusion paradigm, we also empirically demonstrate the applicability of our method under flow matching in the latent space. 

\textbf{Setup:} 
We use a DDIM \citep{song2021denoising} pretrained by \citet{choi2022perception} and a pretrained Latent Flow Matching (LFM) \citep{dao2023flow} model;  
both models were trained on the FFHQ-256 dataset \cite{karras2019style}. 
We consider three attributes in this task: gender, race, and age. 
For gender, we consider \{\texttt{Female}, \texttt{Male}\}; 
for race, we adopt the 4-way classification by \citet{karkkainen2021fairface}: \{\texttt{Asian, Black, Indian, WMELH}\}\footnote{
    \texttt{WMELH} stands for the merge of White, Middle Eastern, and Latino Hispanic. 
}; 
for age, we partition it into three groups: \{\texttt{Junior, Middle, Senior}\}\footnote{
    \texttt{Junior}: 0--19 years old; \texttt{Middle}: 20--50; \texttt{Senior}: 50--120. 
}. 
We test with both fairness targets and customized skewed targets for each single attribute and \emph{joint} attributes.
The attribute oracle is also instantiated as an image classifier. 
We implement our method under both diffusion and latent flow matching settings. 
See implementation details in \apdxref{apdxsub:face}. 

\textbf{Aligning Joint Attribute Distribution:} 
For \emph{joint} attributes, the alignment target is the factorized joint distribution
$
    p^\text{tar}_\y (\mathbf y) \propto \prod_{i=1}^{N} p^\text{tar}_{\y_i} (\y_i),
$
where we \emph{assume independence among all attributes}\footnote{
Note that this independence assumption is only for the convenience of experiment design, but \emph{not} a restriction of our method, 
as one can always directly compute the joint KL. 
}. 
Note that this independence assumption generally does \emph{not} hold for the generated attribute distribution $p^u_\y(\mathbf y)$. 
With independence, the terminal cost decomposes as
$
    \kldiv{\hat p^u_\y}{p^\text{tar}_\y} = \sum_{i=1}^N \kldiv{\hat p^u_{\y_i}}{p^\text{tar}_{\y_i}} 
    + \sum_{i=1}^N \mathrm{H}(\hat p^u_{\y_i}) - \mathrm{H}(\hat{p}^u_{\y})
,$
where
$\hat p^u_{\y_i}$ is the empirical marginal, and
$\mathrm{H}(\cdot)$ denotes entropy. 

\textbf{Baselines:} 
We compare our method with vanilla \textsc{DDIM}\citep{song2021denoising}, \textsc{LFM}\citep{dao2023flow}, and \textsc{PG} \citep{corso2024particle}. 
The setup of PG is similar to that in the previous experiment. 
See details in \apdxref{apdxsub:face}.

\begin{table*}[tb]
    \centering
    \footnotesize
    \setlength{\tabcolsep}{1.0mm}
    \caption{Quantitative evaluation metrics on face generation with controlling \emph{single} attribute. Best in \textbf{bold}, second-best \underline{underlined}. \textsc{Ours-F} denotes latent-flow-based version of our method.} 
    \label{tab:face_single}
    \begin{tabular}{l|ccccc|ccccc|ccccc}
    \toprule
    & \multicolumn{5}{c|}{\textbf{\texttt{age}}}& \multicolumn{5}{c|}{\textbf{\texttt{gender}}}& \multicolumn{5}{c}{\textbf{\texttt{race}}}\\
    \textsc{Method} 
        & TV$\downarrow$ & JS$\downarrow$ & $\chi^2$$\downarrow$ & FD$\downarrow$ & FID$\downarrow$ 
        & TV$\downarrow$ & JS$\downarrow$ & $\chi^2$$\downarrow$ & FD$\downarrow$ & FID$\downarrow$ 
        & TV$\downarrow$ & JS$\downarrow$ & $\chi^2$$\downarrow$ & FD$\downarrow$ & FID$\downarrow$ \\
    \hline
    \multicolumn{16}{c}{$\mathsf{Uniform}$} \\
    \hline
    \textsc{DDIM} 
        & .21 & .15 & .044 & .25 & {50.70}
        & .042 & .029 & .0017 & {.052} & 50.70 
        & {.56} & {.45} & {.36} & {.65} & {50.70} \\ 
    \textsc{PG-DPS} 
        & .15 & .13 & .033 & .18 & 114.8
        & {.043} & {.030} & .0018 & .047 & 77.99
        & .37 & .32 & .19 & .45 & {104.4}
        \\
    \textsc{LFM}
         & .28 & .042 & .083 & .34 & 50.40
         & .16 & .013 & .026 & .22 & 50.40
         & .57 & .20 & .36 & .66 & 50.40 \\ 

    \rowcolor{green!15}\textsc{Ours}
        & \underline{.023} & \underline{.018} & \underline{6.4e-4} & \underline{.028}  & \textbf{48.59}
        & \underline{.0052} & \underline{.0037} & \underline{2.7e-5} & \underline{.0087}  & \underline{48.37}
        & \underline{.028} & \underline{.025} & \underline{.0012} & \underline{.038}  & \textbf{45.57}
        \\
    \rowcolor{green!15}\textsc{Ours-F}
         & \textbf{.018} & \textbf{1.8e-4} & \textbf{3.6e-4} & \textbf{.023} & \underline{48.72}
         & \textbf{0.0} & \textbf{0.0} & \textbf{0.0} & \textbf{5.2e-4} & \textbf{48.14}
         & \textbf{.024} & \textbf{3.0e-4} & \textbf{6.0e-4} & \textbf{.026} & \underline{47.04} \\
    \hline
    \multicolumn{16}{c}{$\mathsf{Custom}$-$\mathsf{1}$} \\
    & \multicolumn{5}{c|}{$[4:1:3]$} & \multicolumn{5}{c|}{$[2:8]$} & \multicolumn{5}{c}{$[4:3:2:1]$}\\
    \hline
    \textsc{DDIM} 
        & {.42} & {.32} & {.20} & {.51}  & {49.65} 
        & {.26} & {.20} & {.076} & 37  & {48.58}  
        & {.41} & {.35} & {.22} & {.50}  & {48.71}  \\   
    \textsc{PG-DPS}
         & {.38} & {.31} & {.18} & {.46} & 123.1        
         & {.24} & {.18} & {.065} & {.33} & 129.6        
         & {.27} & {.27} & {.13} & {.30} & 165.3 \\
    \textsc{LFM}
         & .49 & .14 & .26 & .60 & 51.95
         & .14 & .013 & .025 & .20 & 45.87
         & .42 & .13 & .23 & .51 & 48.80 \\ 
    \rowcolor{green!15}\textsc{Ours}
         & \underline{.049} & \underline{.035} & \underline{.0025} & \underline{.063} & \textbf{46.14}
         & \underline{.0094} & \underline{.0082} & \underline{1.4e-4} & \underline{.017} & \underline{46.52}
         & \underline{.043} & \underline{.040} & \underline{.0032} & \underline{.062} & \textbf{43.03} \\
    \rowcolor{green!15}\textsc{Ours-F}
         & \textbf{.016} & \textbf{1.4e-4} & \textbf{2.7e-4} & \textbf{.018} & \underline{48.25}
         & \textbf{.0073} & \textbf{4.1e-5} & \textbf{8.2e-5} & \textbf{.010} & \textbf{45.76}
         & \textbf{.029} & \textbf{4.9e-4} & \textbf{9.8e-4} & \textbf{.034} & \underline{44.73} \\
    \hline
    \multicolumn{16}{c}{$\mathsf{Custom}$-$\mathsf{2}$} \\
    & \multicolumn{5}{c|}{$[2:3:4]$} & \multicolumn{5}{c|}{$[7:3]$} & \multicolumn{5}{c}{$[1:1:4:4]$}\\
    \hline
    \textsc{DDIM} 
        & {.23} & {.18} & {.066} & {.32}  & {51.39} 
        & {.24} & {.17} & {.060} & {.33}  & {55.74}  
        & {.70} & {.56} & {.53} & {.84}   & {51.45}  \\
    \textsc{PG-DPS}
        & .30 & .24 & .11 & .36 & 90.79        
        & {.043} & {.032} & {.0021} & {.084} & 156.9        
        & {.31} & {.28} & {.14} & {.36} & 253.2 \\ 
    \textsc{LFM}
         & .29 & .058 & .11 & .40 & 51.50
         & .36 & .066 & .13 & .51 & 56.16
         & .71 & .30 & .53 & .84 & 51.12 \\ 
    \rowcolor{green!15}\textsc{Ours}
         & \textbf{.013} & \underline{.012} & \textbf{2.8e-4} & \textbf{.018} & \textbf{46.25}
         & \textbf{1.2e-8} & \textbf{1.0e-4} & \textbf{1.7e-16} & \textbf{.0011} & \textbf{48.69}
         & \underline{.053} & \underline{.041} & \underline{.0034} & \underline{.075} & \underline{46.53} \\
    \rowcolor{green!15}\textsc{Ours-F}
         & \underline{.022} & \textbf{2.8e-4} & \underline{5.6e-4} & \underline{.031} & \underline{47.83}
         & \underline{.020} & \underline{2.3e-4} & \underline{4.6e-4} & \underline{.029} & \underline{51.22}
         & \textbf{.016} & \textbf{1.8e-4} & \textbf{3.7e-4} & \textbf{.020} & \textbf{46.09} \\ 
    \bottomrule
    \end{tabular}
\vspace{-1em}
\end{table*}

\textbf{Evaluation Metrics:} 
As in the previous experiment, we measure attribute-distribution alignment using JS, TV, and $\chi^2$, and we use FID to evaluate image quality. 
We also measure the Fairness Discrepancy (FD), following prior work on fair generation \citep{choi2020fair, parihar2024balancing}:
$
    \mathrm{FD} := \left\| p^\text{tar}_\y - \mathbb{E}_{\mathbf x \sim p^{\theta,\mathbf u}_{\mathbf x} (\mathbf x)} \hat p_\y(\mathbf y \mid \mathbf x) \right\|,
$
where $\hat p_\y(\mathbf y \mid \mathbf x)$ is the \emph{softmax output} of the attribute oracle model $\Psi(\mathbf x)$.

\begin{wraptable}{r}{0.46\textwidth}
  \setlength{\tabcolsep}{1.0mm}
  \captionof{table}{
    Quantitative results for face generation with \emph{joint attribute control}.
    Best in \textbf{bold}, second-best \underline{underlined}.
  }
  \label{tab:face_multi}
  \resizebox{0.48\textwidth}{!}{
  \begin{tabular}{l|ccccc}
    \toprule
    \multicolumn{6}{c}{\textbf{\texttt{age$\times$gender$\times$race}}}\\
    \textsc{Method} 
        & TV$\downarrow$ & JS$\downarrow$ & $\chi^2$$\downarrow$ & FD$\downarrow$ & FID$\downarrow$ \\
    \hline
    \multicolumn{6}{c}{$\mathsf{UniformJoint}$} \\
    \hline
    \textsc{DDIM} 
        & {0.566} & {0.477} & {0.393} & {0.308}  & {46.26} \\
    \textsc{PG-DPS}
        & {0.444} & {0.404} & {0.281} & {0.238}  &  {127.0} \\
    \textsc{LFM}
         & 0.584 & 0.251 & 0.424 & 0.359 & 46.20 \\ 
    \rowcolor{green!15}\textsc{Ours}
        & \underline{0.112} & \underline{0.107} & \underline{0.0225} & \underline{0.0606}  & \textbf{41.28} \\
    \rowcolor{green!15}\textsc{Ours-F}
         & \textbf{0.0963} & \textbf{0.00783} & \textbf{.0155} & \textbf{.0504} & \underline{41.54} \\
    \hline
    \multicolumn{6}{c}{$\mathsf{CustomJoint}$} \\
    \multicolumn{6}{c}{$[5:2:3] \times [3:7] \times [4:3:2:1]$} \\
    \hline
    \textsc{DDIM} 
        & {0.493} & {0.444} & {0.337} & {0.305}  & {44.28} \\
    \textsc{PG-DPS}
        & {0.531} & {0.442} & {0.335} & {0.287}  & {74.77} \\
    \textsc{LFM}
         & 0.500 & 0.213 & 0.361 & 0.346 & 44.05 \\ 

    \rowcolor{green!15}\textsc{Ours}
        & \underline{0.142} & \underline{0.131} & \underline{0.034} & \underline{0.090}  & \underline{42.19} \\
    \rowcolor{green!15}\textsc{Ours-F}
         & \textbf{0.0968} & \textbf{0.00674} & \textbf{0.0134} & \textbf{0.0493} & \textbf{38.65} \\
    \bottomrule
  \end{tabular}
  }
\end{wraptable}

\textbf{Results:} 
Tables~\ref{tab:face_single} and \ref{tab:face_multi} report alignment performance and sample quality for individual and joint target attribute distributions, respectively, with qualitative samples shown in \cref{fig:face_samples}.
For $\mathsf{Uniform}$ targets, pretrained \textsc{DDIM} and \textsc{LFM} show pronounced age and race biases inherited from training data. 
\textsc{PG} reduces these biases but at a large cost in image quality. 
In contrast, our method under both diffusion and latent flow matching paradigms achieves better distributional alignment across metrics while preserving sample quality.
Under the $\mathsf{Custom}$ targets with strongly skewed distributions, our method performs consistently better in matching the target distributions \emph{and} maintains sample quality, whereas \textsc{PG} yields inconsistent performance given different targets.
Similar trends hold for joint alignment: \textsc{PG} trades quality for alignment and does not reliably outperform vanilla \textsc{DDIM}, while our method improves alignment without degrading sample quality. See additional results in \apdxref{apdxsub:additional_face}.

\section{Discussions and Conclusion}
\label{sec:conclusion}
We formulate and study the attribute distribution alignment problem for pretrained unconditional diffusion models. 
We propose an inference-time, plug-and-play method that does not require any extra training or fine-tuning. 
Our results show that the proposed method is effective in aligning the generation attribute distribution to flexible test-time targets, while better preserving sample quality compared to training-free baselines. 
Limitations of our method include extra computational overheads and the need for a moderate batch size for attribute distribution estimation. 
Detailed discussions are in \apdxref{apdx:limitations}. 
Future work includes extensions to finding optimal guidance weights in conditional diffusion models and applying our approach in other domains, \eg, robotics, graphs, \etc

\newpage
\section*{Broader Impact}
This work advances inference-time control of unconditional diffusion generative models by enabling alignment to user-specified attribute distributions without training or finetuning. 
This may benefit applications such as fairness-aware data generation, controllable simulation, and robotic autonomous systems that require calibrated mixtures of behaviors. 
At the same time, the ability to steer attribute distributions could also be misused by malicious parties to amplify societal biases, manipulate demographic representation, or generate content that appears balanced in some attributes while hiding other harmful properties, depending on how attributes are defined and measured. 
We encourage practitioners to use validated models as attribute models, audit distributional outcomes across relevant subgroups and contexts, and apply standard safeguards (dataset governance, filtering, and usage policies) upon deployment of the proposed method. 
Overall, we view this paper as providing a general technical tool for distribution-level controllability whose societal impact depends on careful choice of attributes and responsible deployment.

\bibliography{refs.bib}
\bibliographystyle{unsrtnat}

\newpage
\appendix
\onecolumn

\addtocontents{toc}{\protect\setcounter{tocdepth}{3}}
{\renewcommand{\contentsname}{Table of Contents for Appendix}\tableofcontents}
\clearpage

\section{Theoretical Derivations}
\label{apdx:derivations}

\begin{proposition}[Proposition~\ref{prop:optimal_u}]
    For any fixed $t\in[0,T]$ and fixed $(\bm x_t,\bm\nu_t)$, with $\rho+\gamma>0$,  
    the optimal control $\mathbf u_t^\ast$ specified in \eqref{eq:pmp_u} is given by 
    \[
        \mathbf u_t^*\in \Pi_{\mathcal U}(\bar{\bm u}_t)
        :=\argmin_{\mathbf u\in\mathcal U}\|\mathbf u-\bar{\bm u}_t\|^2,
    \]
    where
    \[
        \bar{\bm u}_t
        :=\frac{1}{\rho+\gamma}\left(
            \gamma\,\bm u_t^{\rm ref} - \mathbf g(\bm x_t,t)^\top \bm\nu_t
        \right),
    \]
    and $\Pi_{\mathcal U}(\bar{\bm u}_t)$ is nonempty. 
    Further, $\mathbf u_t^\ast$ is unique if $\mathcal U$ is convex.  
\end{proposition}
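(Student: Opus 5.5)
The plan is to reduce the pointwise Hamiltonian minimization \eqref{eq:pmp_u} to a Euclidean projection by completing the square. The argument has three steps: separate the problem into per-sample pieces, complete the square, and then prove existence and uniqueness of the projection.

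\textbf{Decoupling.} Fix $t$ and the state/adjoint pair. The dynamics are control-affine, with $\mathbf f^\theta(\x,\u,t)=\mathbf f^\theta(\x,t)+\mathbf g(\x,t)\u$, and $\mathbf G$ is block diagonal. Hence the extended Hamiltonian splits as a sum over $i$ of
\[
h_i(\u)=\frac{\rho}{2}\|\u\|^2+\frac{\gamma}{2}\|\u-\bm u_t^{\rm ref,[i]}\|^2+(\bm\nu_t^{[i]})^\top\mathbf g(\bm x_t^{[i]},t)\,\u+c_i,
\]
where $c_i$ does not depend on $\u$. The constraint set $\mathcal U^M$ is a Cartesian product. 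Minimizing over $\mathcal U^M$ is therefore equivalent to minimizing each $h_i$ over $\mathcal U$ separately: a tuple is a minimizer iff each of its components is. It thus suffices to treat a single sample, and I drop the superscript $[i]$ from here on.

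\textbf{Completing the square.} Expand $h$ and collect terms, using $\rho+\gamma>0$:
\[
h(\u)=\frac{\rho+\gamma}{2}\Big\|\u-\tfrac{1}{\rho+\gamma}\big(\gamma\bm u^{\rm ref}-\mathbf g^\top\bm\nu\big)\Big\|^2+\text{const}
=\frac{\rho+\gamma}{2}\|\u-\bar{\bm u}_t\|^2+\text{const}.
\]
The check is direct: the quadratic coefficient is $(\rho+\gamma)/2$, and the linear term is $-\gamma\langle\u,\bm u^{\rm ref}\rangle+\langle\mathbf g^\top\bm\nu,\u\rangle$. Since $\rho+\gamma>0$, minimizing $h$ over $\mathcal U$ gives the same $\argmin$ set as minimizing $\|\u-\bar{\bm u}_t\|^2$ over $\mathcal U$. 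That set is exactly $\Pi_{\mathcal U}(\bar{\bm u}_t)$, which proves part (1).

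\textbf{Nonemptiness.} This is the only step that needs an argument beyond algebra. Pick any $\u_0\in\mathcal U$, which exists because $\mathcal U$ is nonempty. Restrict the minimization to $\mathcal U\cap\{\u:\|\u-\bar{\bm u}_t\|\le\|\u_0-\bar{\bm u}_t\|\}$. This set is nonempty, and it is compact because it is closed and bounded in $\sR^m$. The objective is continuous, so Weierstrass' theorem gives a minimizer. That minimizer is also a global minimizer over $\mathcal U$, since every point outside the ball has a larger objective than $\u_0$.

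\textbf{Uniqueness.} Suppose $\mathcal U$ is convex. The function $\u\mapsto\|\u-\bar{\bm u}_t\|^2$ is strictly convex. If $\u_1\ne\u_2$ were both minimizers, their midpoint would lie in $\mathcal U$ and have a strictly smaller value, which is a contradiction.

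Finally, stacking the per-sample minimizers recovers \eqref{eq:emsa_u_update}, because $\gamma/(\rho+\gamma)=\xi$ and $1/(\rho+\gamma)=\eta$.
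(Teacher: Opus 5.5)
Your proposal is correct and follows the same route as the paper. You complete the square to write the $\mathbf u$-dependent part of the extended Hamiltonian as $\frac{\rho+\gamma}{2}\|\mathbf u-\bar{\bm u}_t\|^2$ plus a constant, obtain nonemptiness from continuity and coercivity on the closed set $\mathcal U$, and obtain uniqueness from strict convexity when $\mathcal U$ is convex. The only differences are presentational: you spell out the per-sample decoupling of the batched minimization over $\mathcal U^M$, which the paper leaves implicit, and you prove attainment directly via a compact sublevel set and Weierstrass instead of citing Beck's Theorem~2.32.
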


\begin{proof}
    For any fixed $t,(\bm x_t,\bm\nu_t)$, extract the part of $H$ that depends on $\mathbf u$ as
    \[
    \Lambda_t(\mathbf u)
    :=\frac{\rho}{2} \|\mathbf u \|^2+\frac{\gamma}{2}\| \mathbf u - \bm u_t^\textrm{ref}\|^2
    +\bm\nu_t^\top \mathbf g(\bm x_t,t)\mathbf u 
    .\]
    Since $\rho+\gamma>0$, completing squares yields
    \[
        \Lambda_t(\mathbf u)
        =\frac{\rho+\gamma}{2}\|\mathbf u-\bar{\bm u}_t\|^2
        +c_t,
    \]
    with
    \[
        \bar{\bm u}_t=\frac{1}{\rho+\gamma}\left(
            \gamma\,\bm u_t^{\rm ref}-\mathbf g(\bm x_t,t)^\top\bm\nu_t
        \right)
    \]
    and 
    \begin{align*}
        c_t
        &=\frac{\gamma}{2}\|\bm u_t^\textrm{ref}\|^2-\frac{\rho+\gamma}{2}\|\bar{\bm u}_t\|^2
    .\end{align*}
    Therefore,
    \[
        H(\bm x_t,\bm\nu_t,\mathbf u,t)
        =\frac{\rho+\gamma}{2}\|\mathbf u-\bar{\bm u}_t\|^2+\bm\nu_t^\top \mathbf f^\theta(\bm x_t,t)+c_t.
    \]
    The last two terms do not depend on $\mathbf u$, hence
    \[
        \argmin_{\mathbf u\in\mathcal U}\widetilde H(\bm x_t,\bm\nu_t,\mathbf u,t)
        =\argmin_{\mathbf u\in\mathcal U}\|\mathbf u-\bar{\bm u}_t\|^2
        =\Pi_{\mathcal U}(\bar{\bm u}_t).
    \]
    Since $\mathcal U$ is a nonempty and closed set and $\mathbf u \mapsto \|\mathbf u-\bar{\bm u}_t\|^2$ is continuous and coercive, the minimum is attained per \cite[Theorem~2.32]{beck2014introduction}. 
    Hence, $\Pi_{\mathcal U}(\bar{\bm u}_t)$ is nonempty. 
    Further, if set $\mathcal U$ is convex, since $\mathbf u \mapsto \|\mathbf u-\bar{\bm u}_t\|^2$ is \emph{strictly convex} in $\mathbf u$, the set of minimizer, \ie, $\Pi_{\mathcal U}(\bar{\bm u}_t)$, contains \emph{at most one point} \cite[Sec.~4.2.1]{boyd2004convex}; its uniqueness is therefore obtained when combined with non-emptiness. 
\end{proof}

\clearpage
\section{Method Details}
\label{apdx:method_details}

    \subsection{PF-ODE Instances}
    \label{apdxsub:pfodes}
    In this work, we focus on the \emph{\gls{pfode} of the reverse diffusion process}, which appears in various forms across literature. 
    We identify two instances here. 
    With the formulation of EDM \citep{karras2022elucidating}, the \gls{pfode} of a reverse diffusion process reads: 
    \begin{equation}
        \dot{\mathbf x}_t = -t \mathbf \, \mathbf{s}^\theta(\x_t, t) 
    .\end{equation}
    The associated \gls{pfode} of DDIM, which corresponds to that of the ``Variance-Exploding'' \gls{sde} in \cite{song2021scorebased}, is as follows \cite{song2021denoising}: 
    \begin{equation}
    \label{eq:ddim_ode}
        \mathrm{d}\tilde{\x}_t = \boldsymbol{\epsilon}^{\theta}\left(\frac{\tilde{\x}_t}{\sqrt{1 + \sigma(t)^2}}\right) {\mathrm{d}\sigma(t)} 
        \quad
        \text{with}\quad
        \tilde{\x}_t := \frac{\x_t}{\sqrt{\alpha_t}} \quad \text{and} \quad
        \sigma(t) := \sqrt{(1-\alpha_t) / {\alpha_t}} 
    \end{equation}
    with $\alpha_t$ being a decreasing sequence related to the diffusion noising schedule\footnote{
        There is a notional mismatch in $\alpha_t$ across different work. 
        The $\alpha_t$ in \citep{song2021denoising} corresponds to the $\bar{\alpha}_t$ in \citep{ho2020denoising}. 
        We refer readers to \cite{ho2020denoising, song2021denoising} for the details. 
        In the above, we adopt the notations by \citet{song2021denoising}. 
    },
    and $\epsilon^\theta$ a learned noise prediction model parameterized by $\theta$. 
    Without loss of generality, we denote both of them by
    \begin{equation}
        \dot{\mathbf x}_t = \mathbf f^\theta(\mathbf x_t, t), \quad
        \mathbf x_0 \sim \mathcal N(\mathbf 0, \I_n),
        \quad t \in [0, T].
    \end{equation}

    \paragraph{Flow ODE}
    The ODE under the flow matching paradigm \cite{lipman2023flow,liu2023flow} is straightforward: 
    \begin{equation}
    \label{eq:flowode}
        \dot{\mathbf x}_t = v^\theta(\x_t, t)
    \end{equation}
    where $t\in [0,1]$ and $v^\theta: \mathbb R^n \times [0,1] \to \mathbb R^n$ is a learned vector field model parameterized by $\theta$. 
    The inference generation process is simply integrating the \eqref{eq:flowode} given a prior distribution $p_1(\x_t)$. 
    Under the latent Flow Matching (LFM) \cite{dao2023flow} setting, the state $\x$ lies in the latent space of some pretrained VAE. 

    \paragraph{Perturbed PF-ODE Instances} 
    In this work, we instantiate the perturbed \gls{pfode} \eqref{eq:ctrl_ode} under the EDM \cite{karras2022elucidating} and DDIM \cite{song2021denoising} formulations. 
    For EDM, we take the following \emph{perturbed} ODE: 
    \begin{equation}
    \label{eq:ctrl_ode_edm}
        \mathbf{f}^\theta_\text{EDM}(\mathbf x_t, \mathbf u_t, t) 
        := \dot{\mathbf x}_t = (T - t) \mathbf s^\theta(\x_t, T-t) + \mathbf u_t
    ,\end{equation}
    setting $\mathbf g(\mathbf x_t, t) = \I_n$. 
    For DDIM, we adopt  
    \begin{equation}
        \mathbf{f}^\theta_\text{DDIM}(\tilde{\x}_\sigma, \mathbf u_\sigma, \sigma) 
        := \frac{\mathrm d \tilde{\x}_\sigma}{\mathrm d \sigma}  =  \epsilon^{\theta}\left(\frac{\tilde{\x}_\sigma}{\sqrt{1 + \sigma^2}}\right) + \u_\sigma 
    ,\end{equation}
    and perform control in the spaces of $\tilde{\x}_\sigma$ and $\sigma(T-t)$. 

    The perturbed ODE under the FM setting adopted in this work is plainly: 
    \begin{equation}
    \label{eq:ctrl_ode_fm}
        \mathbf{f}^\theta_\text{FM}(\mathbf x_t, \mathbf u_t, t) 
        := \dot{\mathbf x}_t = v^\theta(\x_t, t) + \mathbf u_t
    .\end{equation}

    \subsection{Algorithm}
    \label{apdxsub:algo}

\begin{algorithm}[t]
\caption{DADA via E-MSA with Euler discretization}
\label{alg:dada_emsa_1}
\begin{algorithmic}[1]
\Require
    Dynamics $\mathbf F^\theta$; 
    terminal cost $\Phi$; 
    batch init $\bm X_{\rm init}$;
    time grid $\{t_k\}_{k=0}^K$ with $t_0=0,t_K=T$; 
    $\rho>0$, $\xi\ge0$; 
    max iteration $I$; 
    optional $\mathcal U$.
\State $\eta \gets (1-\xi)/\rho$ 
\State $h_k \gets t_{k+1}-t_k$ for $k=0,\dots,K-1$ 
\State $\mathbf U_k\gets \bm0$ for $k=0,\dots,K-1$
\For{$j=1$ to $I$}
    \LComment{(1) Forward pass with control $\mathbf U$}
    \State $\X_0 \gets \X_{\rm init}$
    \For{$k=0,\dots,K-1$}
        \State $\X_{k+1} \gets \X_k + h_k\, \mathbf F^\theta(\X_k, \mathbf U_k, t_k)$
    \EndFor
    \LComment{(2) Cost evaluation and backward pass}
    \State $\mathbf N_K \gets \nabla_\X \Phi \left(\hat p^u_{\y}(\X_K)\right)$
    \For{$k=K-1,\dots,0$}
        \LComment{Computed via VJP}
        \State $\mathbf V_k \gets \big(\nabla_\X \mathbf F^\theta(\X_k, \mathbf U_k, t_k)\big)^\top \mathbf N_{k+1}$ 
        \State $\mathbf N_k \gets \mathbf N_{k+1} + h_k \mathbf V_k$
    \EndFor
    \LComment{(3) Closed-form control update}
    \For{$k=0,\dots,K-1$}
        \State $\mathbf U_k^{\rm new} \gets \xi \mathbf U_k - \eta\,\mathbf G(X_k,t_k)^\top \mathbf N_{k+1}$
        \State $\mathbf U_k \gets \Pi_{\mathcal U^M}(\mathbf U_k^{\rm new})$ if $\mathcal U$ specified else $\mathbf U_k^{\rm new}$
    \EndFor
    \If{Converged} \textbf{break} \EndIf
\EndFor
\LComment{Forward simulation with final control}
\For{$k=0,\dots,K-1$}
    \State $\X_{k+1} \gets \X_k + h_k\, \mathbf F^\theta(\X_k, \mathbf U_k, t_k)$
\EndFor
\State \textbf{return} $\X_K$ 
\end{algorithmic}
\end{algorithm}

    The algorithmic description of our method is in \cref{alg:dada_emsa_1}.

    \subsection{Computational Complexity}
    \label{apdxsub:complexity}
    \paragraph{Time Complexity.}
    Let $C_{\rm NFE}$ be the cost of one \emph{batched} forward evaluation of the pretrained diffusion or flow network $\mathbf f^\theta$ on the $M$-sample batch, \ie, one neural function evaluation (NFE), and $C_{\rm VJP}$ the cost of one batched reverse-mode vector-Jacobian product (VJP) through $\mathbf f^\theta$ on the same batch.
    Let $C_\Phi$ further denote the cost of one batched forward-plus-backward through the differentiable attribute model $\Psi$ (composed with the density estimator) on the $M$-sample batch.
    Standard autodiff frameworks such as PyTorch gives $C_{\rm VJP}\approx 2\text{--}3\, C_{\rm NFE}$.
    Since $C_{\rm NFE}$, $C_{\rm VJP}$, and $C_\Phi$ all scale linearly with $M$ in compute, we absorb the batch size $M$ into them.
    Each outer iteration of \cref{alg:dada_emsa_1} consists of
    (i) a forward pass of $K$ batched NFEs at cost $O(K\,C_{\rm NFE})$;
    (ii) one forward and backward pass through $\Psi$ to compute the terminal adjoint $\mathbf N_K$ 
    at cost $C_\Phi$, independent of $K$ 
    and is invoked only once per outer iteration;
    (iii) a backward (adjoint) pass of $K$ batched VJPs at cost $O(K\,C_{\rm VJP})$;
    and (iv) a closed-form control update \eqref{eq:emsa_u_update} that is elementwise with an \emph{optional} projection onto $\mathcal U^M$ (most commonly a clipping operation and negligible to the NFEs, so considered as $O(1)$ cost). 
    A final forward pass of $K$ NFEs generates the resulting samples.
    Overall, the algorithm runs in
    \begin{equation}
        O\big(I\cdot \left[K\,(C_{\rm NFE} + C_{\rm VJP}) + C_\Phi\right]\big)
    .\end{equation}
    As such, our method incurs an $I\cdot\big(1 + C_{\rm VJP}/C_{\rm NFE} + C_\Phi/(K\,C_{\rm NFE})\big)$ multiplicative overhead over the $O(K\, C_{\rm NFE})$ cost of vanilla unconditional sampling, where the last term is typically the smallest.

    \paragraph{Space Complexity.}
    Let $|\theta|$ and $A_{\rm net}$ denote the parameter size and the peak forward-plus-VJP activation memory of $\mathbf f^\theta$ on the $M$-sample batch, and let $|\Psi|$ and $A_\Phi$ denote the corresponding quantities for the differentiable attribute model $\Psi$ (composed with the density estimator). 
    Note that all four are independent of $K$. 
    Across one outer iteration of \cref{alg:dada_emsa_1}, the persistent buffers are
    (i) the state trajectory $\{\X_k\}_{k=0}^{K}$ at $O(KMn)$;
    (ii) the control trajectory $\{\mathbf U_k\}_{k=0}^{K-1}$ at $O(KMm)$;
    and (iii) the adjoint/costate trajectory $\{\mathbf N_k\}_{k=0}^{K}$ at $O(KMn)$, but this is reduced to $O(Mn)$ since the update \eqref{eq:emsa_u_update} is fused into the backward loop in our implementation (note that the control update only depends on the state and costate of the current time step).
    Each per-step VJP is computed locally on one $\mathbf f^\theta$ call and discarded before the next, so transient activations are bounded by $O(A_{\rm net})$ at any step within the backward loop. 
    The terminal-cost gradient $\nabla_\X\Phi(\hat p^u_\y(\X_K))$ contributes one transient pass through $\Psi$ of activation cost $O(A_\Phi)$, released before the adjoint loop begins. 
    All of the aforementioned buffers are reused across outer iterations, so peak memory \emph{does not} grow with $I$.
    In sum, the space complexity is
    \begin{equation}
        O\big(|\theta| + |\Psi| + KM(n+m) + A_{\rm net} + A_\Phi\big),
    \end{equation}
    and independent of number of iterations $I$. 

    \paragraph{Remarks on Avoided Costs.}
    Two hypothetical memory costs are avoided by the structure of \cref{alg:dada_emsa_1}.
    First, the Jacobian \eqref{eq:jac_blkdiag} is never explicitly formed; an explicit materialization would have incurred $O(M^2 n^2)$ memory.
    Second, end-to-end backpropagation through the entire unrolled $K$-step sampler under standard reverse-mode auto-differentiation would have incurred activations of size $O(K\, A_{\rm net})$, but our adjoint-based formulation instead retains only the state of size $O(KMn)$ and reconstructs the per-step activations on demand, which is a memory--compute trade-off equivalent to applying per-step gradient checkpointing to the unrolled sampler.

\clearpage
\section{Experiment Details}
\label{apdx:expt_details}

\paragraph{Software and Codebase}
All experiments were run using PyTorch \cite{paszke2019pytorch}. 
For the CIFAR experiment, our implementation is built upon \cite{karras2022elucidating} and the implementation of the image classifier used as the attribute model therein is from \cite{chenyaofo_pytorch}. 
For the human face generation experiment, diffusion-based implementations are adapted from \cite{choi2022perception}, and flow-based implementations are built upon \cite{dao2023flow}. 
The attribute model implementation is from \cite{luan2025projected}. 

\paragraph{Computational Hardware}
The training of the unconditional EDM model in the CIFAR experiment and \emph{all evaluation}, including computing all metrics, were run on a workstation with 1 AMD Ryzen Threadripper PRO 5995WX 64-Core CPU, 504 GB RAM, and 2 NVIDIA RTX A6000 GPUs each with 48GB VRAM. 
\emph{Inference} of all methods was run on a high-performance-computing (HPC) cluster with NVIDIA H200 GPUs. 
For each experiment run, only 1 GPU was utilized. 

\subsection{CIFAR Experiment}
\label{apdx:cifar}

\paragraph{Hierarchical attribute distributions}
We construct three different levels of class labels with a coarse-to-fine hierarchy: \texttt{meta5}, \texttt{coarse}, and \texttt{fine}, with numbers of classes of $5$, $20$, and $100$, respectively. 
The \texttt{coarse} and \texttt{fine} levels of class labels are native in CIFAR-100. 
The \texttt{meta5} level is obtained by merging every 4 \texttt{coarse} classes. 
For each class level, we choose three different attribute distributions as test-time targets : $\mathsf{Uniform}$, $\mathsf{ZigZag}$, and $\mathsf{Gaussian}$. 
$\mathsf{ZigZag}$ has an alternating ``high-low'' pattern where every even-numbered class is exactly twice as likely to be selected as any odd-numbered class. 
For $\mathsf{Gaussian}$, the target resembles a smooth bell curve centered in the middle-numbered class, with the standard deviation as $1/4$ of the support size.

\paragraph{Base diffusion model and attribute models}
We trained a base EDM model on CIFAR-100 dataset \cite{krizhevsky2009learning} with the default training split and the network backbone is the UNet in \cite{song2019generative} implemented by \citet{karras2022elucidating}. 
The EDM training followed default hyperparameters disclosed in \cite{karras2022elucidating}. 
All attribute models are ResNet56 trained on CIFAR-100 training set, and we adopt the implementations and training hyperparameters in \cite{chenyaofo_pytorch}.

\paragraph{Diffusion inference and evaluation}
For diffusion inference for all methods, we adopt the default settings provided by \cite{karras2022elucidating} with $K=18$ sampling steps, but first-order Euler discretization instead of second-order Heun. 
For each method, we sample 10240 images and evaluate all metrics based on the empirical distribution of attributes. 

\paragraph{Particle Guidance implementation}
We implement the Particle Guidance (PG) \citep{corso2024particle} method by taking the same cost function used in our method as the potential field for PG. 
Concretely, PG operating on the \gls{pfode} is as follows: 
\begin{align}
    \frac{\mathrm d \x_t^{[i]}}{\mathrm dt} = -f(\x_t^{[i]}, t) + \frac{1}{2} g(t)^2 \left( \mathbf s^\theta \left(\x_t^{[i]}, t\right) + \nabla_{\x_t^{[i]}} \log \mathcal L \left(\x_t^{[1]}, \ldots, \x_t^{[M]} \right) \right) 
,\end{align}
where $\log \mathcal L$ is the potential field defined over $M$ samples. 
We set  
\begin{align}
    \log \mathcal L\left(\x_t^{[1]}, \ldots, \x_t^{[M]} \right) 
    :=& - \lambda \kldiv{\hat p^t_\mathbf{y}}{p^\text{tar}_\mathbf{y}} , 
    \qquad
    \hat p^t_\mathbf{y} := \frac{1}{M} \sum_{i=1}^M \textrm{softmax}\left( \Psi\left( \hat{\mathbf{x}}_T^{[i]}(\x_t^{[i]}) \right) \right)
\end{align}
where $\lambda>0$ is a hyperparameter, 
and $\hat{\mathbf{x}}_T^{[i]}(\x_t^{[i]})$ is a ``predicted clean sample'' given a noisy sample at time $t$,  obtained via Tweedie's formula \cite{efron2011tweedie}. 
Under EDM \citep{karras2022elucidating} with a learned score model $\mathbf s^\theta$, it takes the form of 
\begin{align}
    \hat{\mathbf{x}}_T^{[i]}(\x_t^{[i]}) 
    = \x_t^{[i]} + t^2  \mathbf s^\theta(\x_t^{[i]}, t)
.\end{align}
Under DDIM\citep{song2021denoising} with a learned noise prediction model $\bm \epsilon^\theta$, we take the \gls{pfode} in $\tilde\x_t$ (see \eqref{eq:ddim_ode}) and this term reads 
\begin{align}
    \hat{\tilde{\mathbf{x}}}_T^{[i]}(\tilde{\x}_t^{[i]}) = \tilde{\x}_t^{[i]} - \sigma(t)  \bm \epsilon^\theta \left(\tilde\x_t^{[i]}/\sqrt{1+\sigma(t)^2} ,\, t \right)
\end{align}
where $\tilde\x_t$ and $\sigma(t)$ are defined in \eqref{eq:ddim_ode}.

\paragraph{Hyperparameters}
The hyperparameters used to obtain the results in \tabref{tab:cifar} are reported in \tabref{tab:cifar_hyperparams_ours} and \tabref{tab:cifar_hyperparams_pg}. 

\begin{table}[htbp]
    \centering
    \caption{Hyperparameters used for our method in CIFAR. }
    \label{tab:cifar_hyperparams_ours}
    \begin{tabular}{lcccc}
    \toprule 
    Target & $I$ & $\rho$ & $M$ & $\xi$ \\
    \midrule
    $\mathsf{Uniform}$-\texttt{meta5} & 10 & 0.1 & 32 & 0.99 \\
    $\mathsf{Uniform}$-\texttt{coarse} & 10 & 0.1 & 64 & 0.99 \\
    $\mathsf{Uniform}$-\texttt{fine} & 10 & 0.1 & 256 & 0.99 \\
    $\mathsf{ZigZag}$-\texttt{meta5} & 10 & 0.05 & 32 & 0.99 \\
    $\mathsf{ZigZag}$-\texttt{coarse} & 10 & 0.05 & 64 & 0.99 \\
    $\mathsf{ZigZag}$-\texttt{fine} & 10 & 0.05 & 256 & 0.99 \\
    $\mathsf{Gaussian}$-\texttt{meta5} & 10 & 0.1 & 64 & 0.95 \\
    $\mathsf{Gaussian}$-\texttt{coarse} & 10 & 0.1 & 128 & 0.95 \\
    $\mathsf{Gaussian}$-\texttt{fine} & 10 & 0.1 & 256 & 0.95 \\
    \bottomrule
    \end{tabular}
\end{table}

\begin{table}[htbp]
    \centering
    \caption{Hyperparameters used for \textsc{PG-DPS} in CIFAR. }
    \label{tab:cifar_hyperparams_pg}
    \begin{tabular}{lcc}
    \toprule 
    Target & $w$ & $M$ \\
    \midrule
    $\mathsf{Uniform}$-\texttt{meta5} & 4.0  & 32 \\
    $\mathsf{Uniform}$-\texttt{coarse} & 4.0 & 64  \\
    $\mathsf{Uniform}$-\texttt{fine} & 4.0 & 256  \\
    $\mathsf{ZigZag}$-\texttt{meta5} & 4.0 & 32 \\
    $\mathsf{ZigZag}$-\texttt{coarse} & 4.0 & 64 \\
    $\mathsf{ZigZag}$-\texttt{fine} & 4.0 & 256  \\
    $\mathsf{Gaussian}$-\texttt{meta5} & 4.0 & 64  \\
    $\mathsf{Gaussian}$-\texttt{coarse} & 4.0 & 128  \\
    $\mathsf{Gaussian}$-\texttt{fine} & 4.0  & 256  \\
    \bottomrule
    \end{tabular}
\end{table}

\subsection{Human Face Generation Experiment}
\label{apdxsub:face}

\paragraph{Implementation details} 
We use the pretrained DDIM weights from \cite{choi2022perception} for the base diffusion model, and pretrained LFM weights from \citep{dao2023flow} for the base latent flow model. 
For all diffusion-based methods, we take $K=25$ sampling steps for during diffusion inference with FP16 precision, with other inference hyperparameters following the defaults set in \cite{choi2022perception}. 
For latent flow models, we take $K=20$ sampling steps for inference, with other settings default as in LFM~\citep{dao2023flow}, including using a standard pretrained variational Autoencoder \citep{kingma2013auto} from Stable Diffusion \citep{Rombach_2022_CVPR}. 
We sample 960 images for each method in the single-attribute distribution alignment experiments (\cref{tab:face_single}) for evaluation, and sample 1080 images for each method in the joint attribute distribution alignment cases (\cref{tab:face_multi}).  

\paragraph{Attribute model}
The attribute model is a lightweight ResNet image classifier implemented by \citet{luan2025projected}. 
We train the classifier on FFHQ-Aging.
Since the FFHQ-Aging dataset does not have a label for attribute \texttt{race}, we leverage a pretrained race classifier from \cite{karkkainen2021fairface} to label all images in FFHQ-Aging and use them as ground truth. 

\paragraph{Hyperparameters}
The hyperparameters specific to each method used to obtain the results in \tabref{tab:face_single} and \tabref{tab:face_multi} are reported in \tabref{tab:face_hyperparams_ours} \tabref{tab:face_hyperparams_ours_flow}, and \tabref{tab:face_hyperparams_pg}. 
The selection of hyperparameters is based on each method's best performance in the TV metric. 

\begin{table}[htbp]
    \centering
    \caption{Hyperparameters used for our method (diffusion-based) in human face generation. }
    \label{tab:face_hyperparams_ours}
    \begin{tabular}{lcccc}
    \toprule 
    Target & $I$ & $\rho$ & $M$ & $\xi$ \\
    \midrule
    $\mathsf{Custom1}$-\texttt{age} & 12 & 0.0015 & 24 & 0.95 \\
    $\mathsf{Custom1}$-\texttt{gender} & 10 & 0.00125 & 20 & 0.95 \\
    $\mathsf{Custom1}$-\texttt{race} & 10 & 0.0005 & 20 & 0.95 \\
    $\mathsf{Custom2}$-\texttt{age} & 10 & 0.0010 & 24 & 0.95 \\
    $\mathsf{Custom2}$-\texttt{gender} & 10 & 0.00075 & 20 & 0.95 \\
    $\mathsf{Custom2}$-\texttt{race} & 10 & 0.0010 & 20 & 0.95 \\
    $\mathsf{CustomJoint}$ & 10 & 0.0002 & 72 & 0.95 \\
    $\mathsf{Uniform}$-\texttt{age} & 10 & 0.002 & 24 & 0.95 \\
    $\mathsf{Uniform}$-\texttt{gender} & 10 & 0.001 & 20 & 0.95 \\
    $\mathsf{Uniform}$-\texttt{race} & 10 & 0.0005 & 20 & 0.95 \\
    $\mathsf{UniformJoint}$ & 10 & 0.0002 & 72 & 0.95 \\
    \bottomrule
    \end{tabular}
\end{table}

\begin{table}[htbp]
    \centering
    \caption{Hyperparameters used for our method (flow-based) in human face generation. }
    \label{tab:face_hyperparams_ours_flow}
    \begin{tabular}{lcccc}
    \toprule 
    Target & $I$ & $\rho$ & $M$ & $\xi$ \\
    \midrule
    $\mathsf{Custom1}$-\texttt{age} & 10 & 0.00075 & 24 & 0.99 \\
    $\mathsf{Custom1}$-\texttt{gender} & 10 & 0.001 & 20 & 0.99 \\
    $\mathsf{Custom1}$-\texttt{race} & 10 & 0.0005 & 20 & 0.99 \\
    $\mathsf{Custom2}$-\texttt{age} & 10 & 0.00075 & 24 & 0.99 \\
    $\mathsf{Custom2}$-\texttt{gender} & 10 & 0.001 & 20 & 0.99 \\
    $\mathsf{Custom2}$-\texttt{race} & 10 & 0.0005 & 20 & 0.99 \\
    $\mathsf{CustomJoint}$ & 10 & 0.0001 & 72 & 0.99 \\
    $\mathsf{Uniform}$-\texttt{age} & 10 & 0.001 & 24 & 0.99 \\
    $\mathsf{Uniform}$-\texttt{gender} & 10 & 0.00125 & 20 & 0.99 \\
    $\mathsf{Uniform}$-\texttt{race} & 10 & 0.0005 & 20 & 0.99 \\
    $\mathsf{UniformJoint}$ & 10 & 0.00025 & 72 & 0.99 \\
    \bottomrule
    \end{tabular}
\end{table}

\begin{table}[htbp]
    \centering
    \caption{Hyperparameters used for \textsc{PG-DPS} in human face generation. }
    \label{tab:face_hyperparams_pg}
    \begin{tabular}{lcc}
    \toprule 
    Target  & $w$ & $M$  \\
    \midrule
    $\mathsf{Custom1}$-\texttt{age} & 10 & 24 \\
    $\mathsf{Custom1}$-\texttt{gender} & 50 & 20 \\
    $\mathsf{Custom1}$-\texttt{race} & 50 & 20 \\
    $\mathsf{Custom2}$-\texttt{age} & 10 & 24 \\
    $\mathsf{Custom2}$-\texttt{gender} & 50 & 20 \\
    $\mathsf{Custom2}$-\texttt{race} & 100 & 20 \\
    $\mathsf{CustomJoint}$ & 4 & 72 \\
    $\mathsf{Uniform}$-\texttt{age} & 15 & 24 \\
    $\mathsf{Uniform}$-\texttt{gender} & 10 & 20 \\
    $\mathsf{Uniform}$-\texttt{race} & 10 & 20 \\
    $\mathsf{UniformJoint}$ & 7 & 72 \\
    \bottomrule
    \end{tabular}
\end{table}

\clearpage

\section{Additional Results}
\label{apdx:additional_results}

\subsection{Additional Results for CIFAR Experiment}
\label{apdxsub:additional_cifar}

\subsubsection{Ablation Study}

\paragraph{Batch size $M$}
We perform ablation study in the batch size $M$ used for empirical distribution calculation. 
We set $M \in \{8, 16, 32, 64, 128, 256\}$ and test all methods with all of the three types of target distributions ($\mathsf{Uniform}$, $\mathsf{Guassian}$, $\mathsf{ZigZag}$) with all three levels of attributes (\texttt{meta5}, \texttt{coarse}, \texttt{fine}). 
The resulting metrics are plotted in \cref{fig:cifar_uniform_all}, \cref{fig:cifar_gaussian_all}, and \cref{fig:cifar_zigzag_all}. 
The figures show that across all target distributions with different support sizes, the performance (in terms of distributional metrics) appears to first improve as the batch size $M$ increases, and then start to slowly degrade if $M$ keeps growing (note the log scale of the x-axis in all figures). 
A similar pattern is also reported by \citet{parihar2024balancing} when using sample batches to perform distribution guidance. 
The turning point appears to vary across different targets: $\sim 5$ times of the support size in $\mathsf{Uniform}$, while up to $~10$ times of the support size in $\mathsf{Guassian}$ and $\mathsf{ZigZag}$. 
For baseline PG-DPS, the performance change appears to be minor as the batch size $M$ changes. 
With a reasonable batch size $M$ for estimating the empirical distribution (considering both attribute distribution support size and the target), our method can achieve better performance than the baselines. 

\paragraph{Runtime and Memory Benchmark}
We perform ablations to empirically show how runtime and memory scale over batch size $M$, inference steps $K$, and max number of iterations $I$. 
The results are in \cref{tab:batch-size-sweep}, \cref{tab:maxiter-sweep}, \cref{tab:diffusion-steps-sweep}.  
The total batch runtime profile of our method is also shown in \cref{fig:cifar_runtime} as the $M$, $I$, and $K$ varies. 
The observed patterns from the empirical results appear to match our analyses in \apdxref{apdxsub:complexity}: 
\begin{itemize}
    \item Runtime per sample grows roughly linearly with max number of iterations $I$ and inference steps $K$. 
    \item Runtime per sample decreases with larger batch size $M$ because of batching while peak memory increases linearly with $M$. 
    \item Peak memory usage is mostly dominated by batch size $M$. 
\end{itemize}

\subsubsection{Additional Samples}
We provide qualitative samples generated by baselines and our methods in \cref{fig:cifar_samples_meta5_gaussian}, \cref{fig:cifar_samples_meta5_uniform}, \cref{fig:cifar_samples_meta5_zigzag}.

\begin{table}[t]
\centering
\caption{Runtime per sample and peak memory as batch size $M$ varies.}
\label{tab:batch-size-sweep}
\begin{tabular}{rrrrr}
\toprule
$M$ & $K$ & $I$ & Runtime per sample (ms) & Peak Memory (GB) \\
\midrule
8   & 18 & 10 & $912 \pm 1.46$   & 1.21 \\
16  & 18 & 10 & $473 \pm 0.981$  & 2.11 \\
32  & 18 & 10 & $260 \pm 0.239$  & 3.93 \\
64  & 18 & 10 & $221 \pm 0.663$  & 7.59 \\
128 & 18 & 10 & $209 \pm 0.238$  & 14.9 \\
256 & 18 & 10 & $203 \pm 0.105$  & 29.5 \\
\bottomrule
\end{tabular}
\end{table}

\begin{table}[t]
\centering
\caption{Runtime per sample and peak memory as max iterations $I$ varies.}
\label{tab:maxiter-sweep}
\begin{tabular}{rrrrr}
\toprule
$I$ & $M$ & $K$ & Runtime per sample (ms) & Peak Memory (GB) \\
\midrule
4  & 32 & 18 & $108 \pm 0.15$  & 3.93 \\
6  & 32 & 18 & $159 \pm 0.291$ & 3.93 \\
8  & 32 & 18 & $210 \pm 0.309$ & 3.93 \\
10 & 32 & 18 & $260 \pm 0.181$ & 3.93 \\
12 & 32 & 18 & $308 \pm 0.274$ & 3.93 \\
14 & 32 & 18 & $358 \pm 0.195$ & 3.93 \\
\bottomrule
\end{tabular}
\end{table}

\begin{table}[t]
\centering
\caption{Runtime per sample and peak memory as inference steps $K$ varies.}
\label{tab:diffusion-steps-sweep}
\begin{tabular}{rrrrr}
\toprule
$K$ & $M$ & $I$ & Runtime per sample (ms) & Peak Memory (GB) \\
\midrule
10 & 32 & 10 & $146 \pm 0.242$ & 3.92 \\
14 & 32 & 10 & $202 \pm 0.239$ & 3.93 \\
18 & 32 & 10 & $260 \pm 0.202$ & 3.93 \\
22 & 32 & 10 & $317 \pm 0.206$ & 3.94 \\
26 & 32 & 10 & $370 \pm 0.306$ & 3.94 \\
30 & 32 & 10 & $428 \pm 0.213$ & 3.95 \\
\bottomrule
\end{tabular}
\end{table}

\begin{figure}[ht]
  \begin{center}
    \centerline{
        \includegraphics[width=0.33\columnwidth]{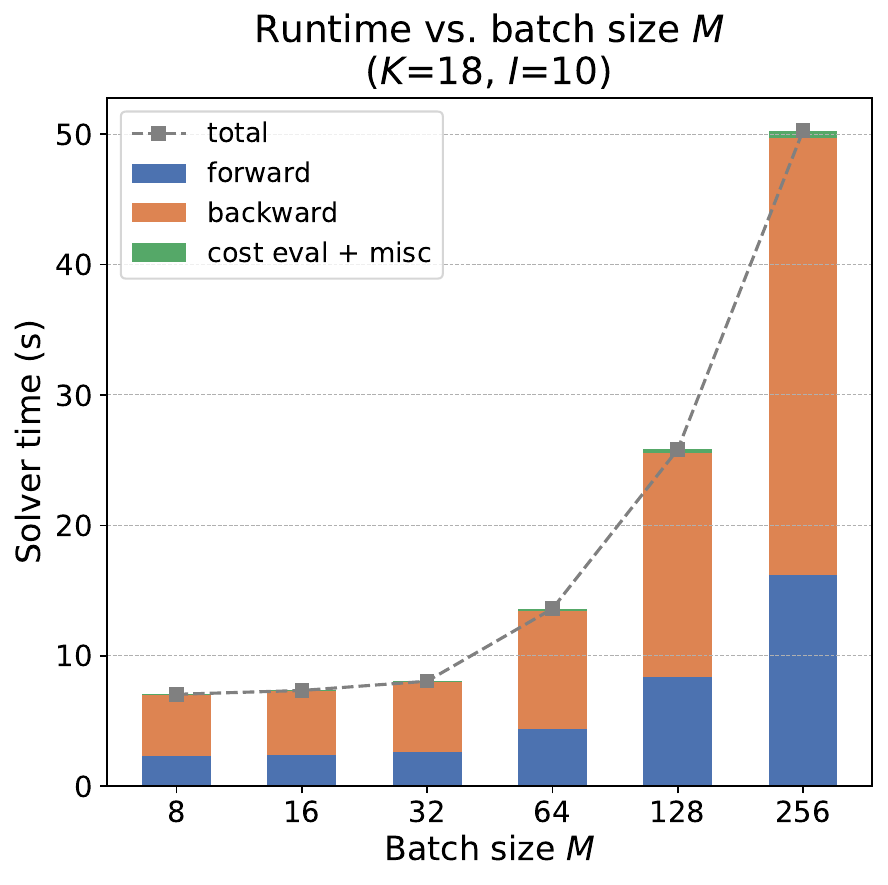}
        \includegraphics[width=0.33\columnwidth]{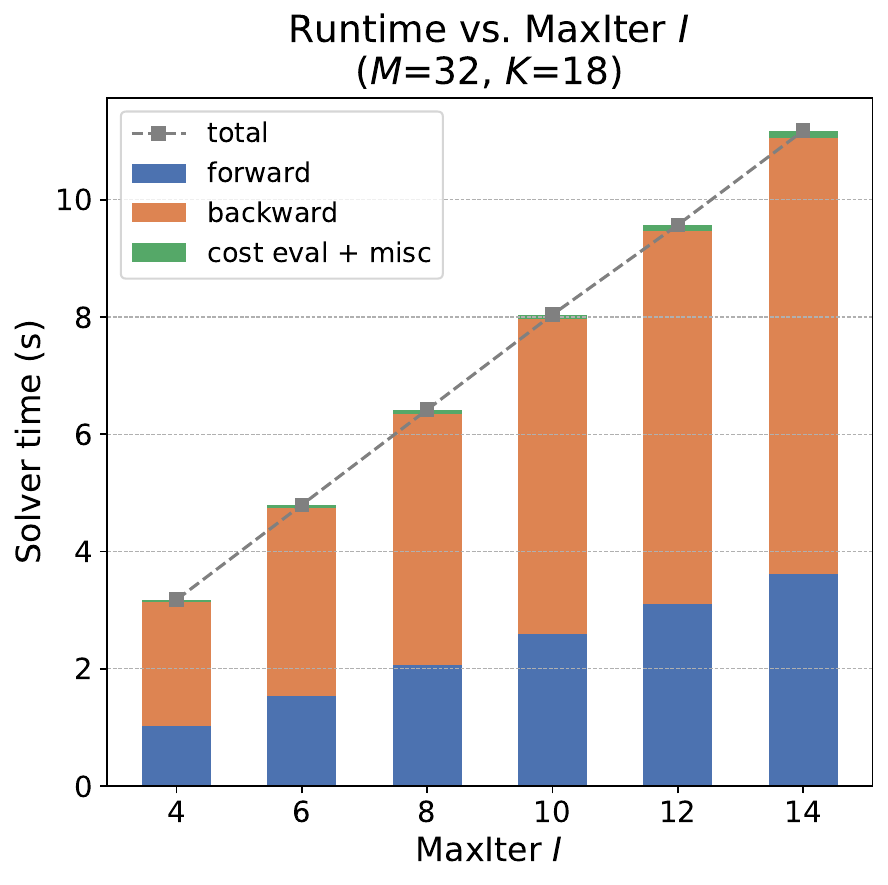}
        \includegraphics[width=0.33\columnwidth]{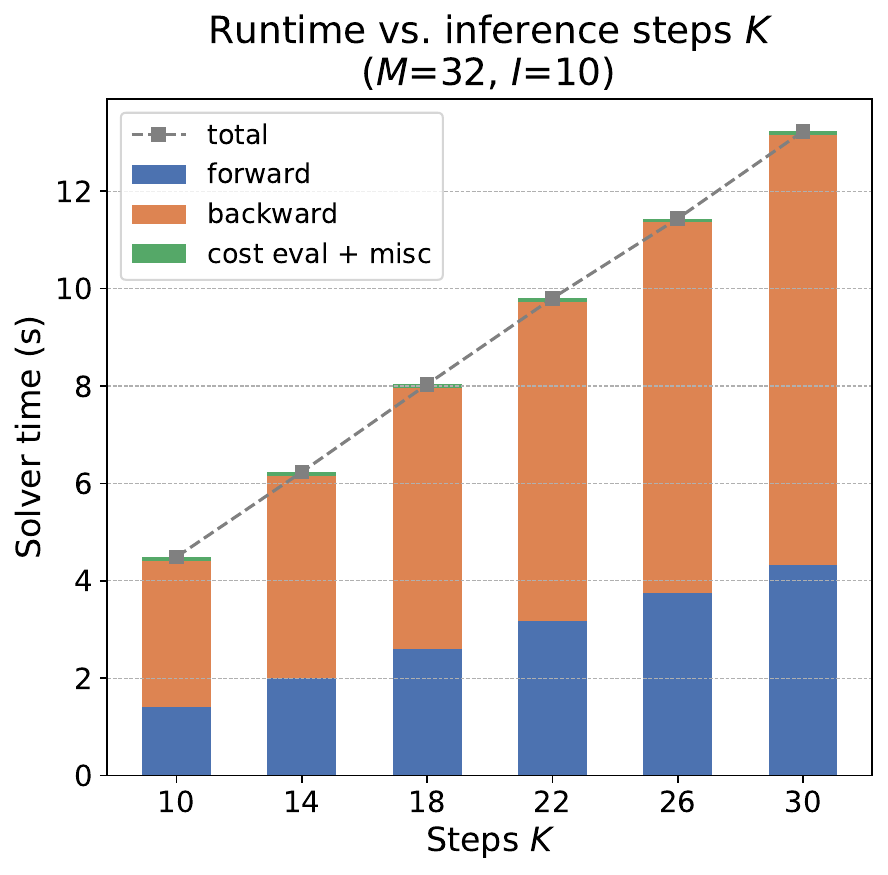}
    }
    \caption{
        Total runtime profile of our method as batch size $M$, max iterations $I$ and inference steps $K$ scale. 
        Note the \emph{log scale of the x-axis} in the figure for batch $M$ (left). 
    }
    \label{fig:cifar_runtime}
  \end{center}
\end{figure}

\clearpage
\begin{figure}[htbp]
  \begin{center}
    \centerline{
        \includegraphics[width=0.5\columnwidth]{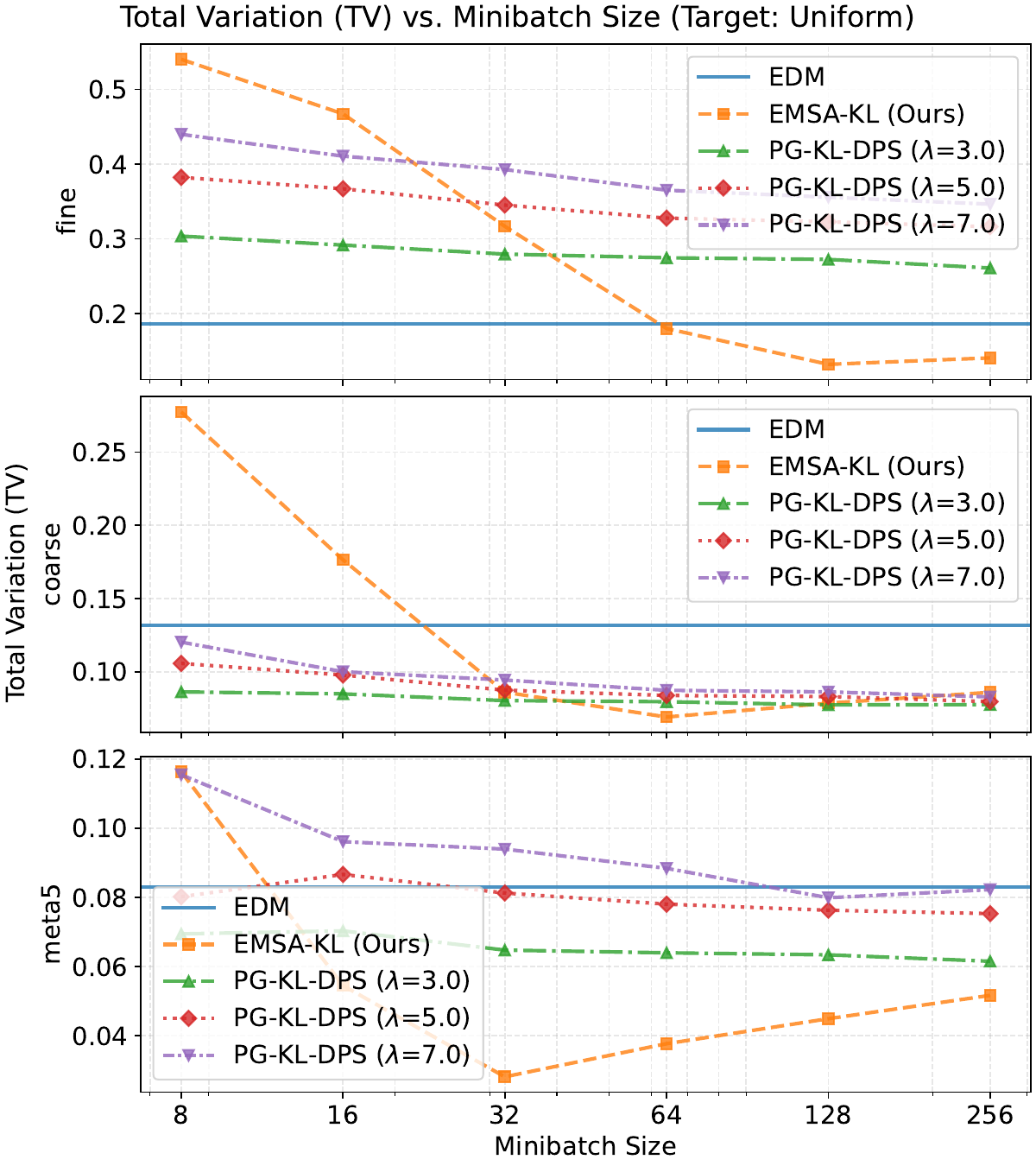}
        \includegraphics[width=0.5\columnwidth]{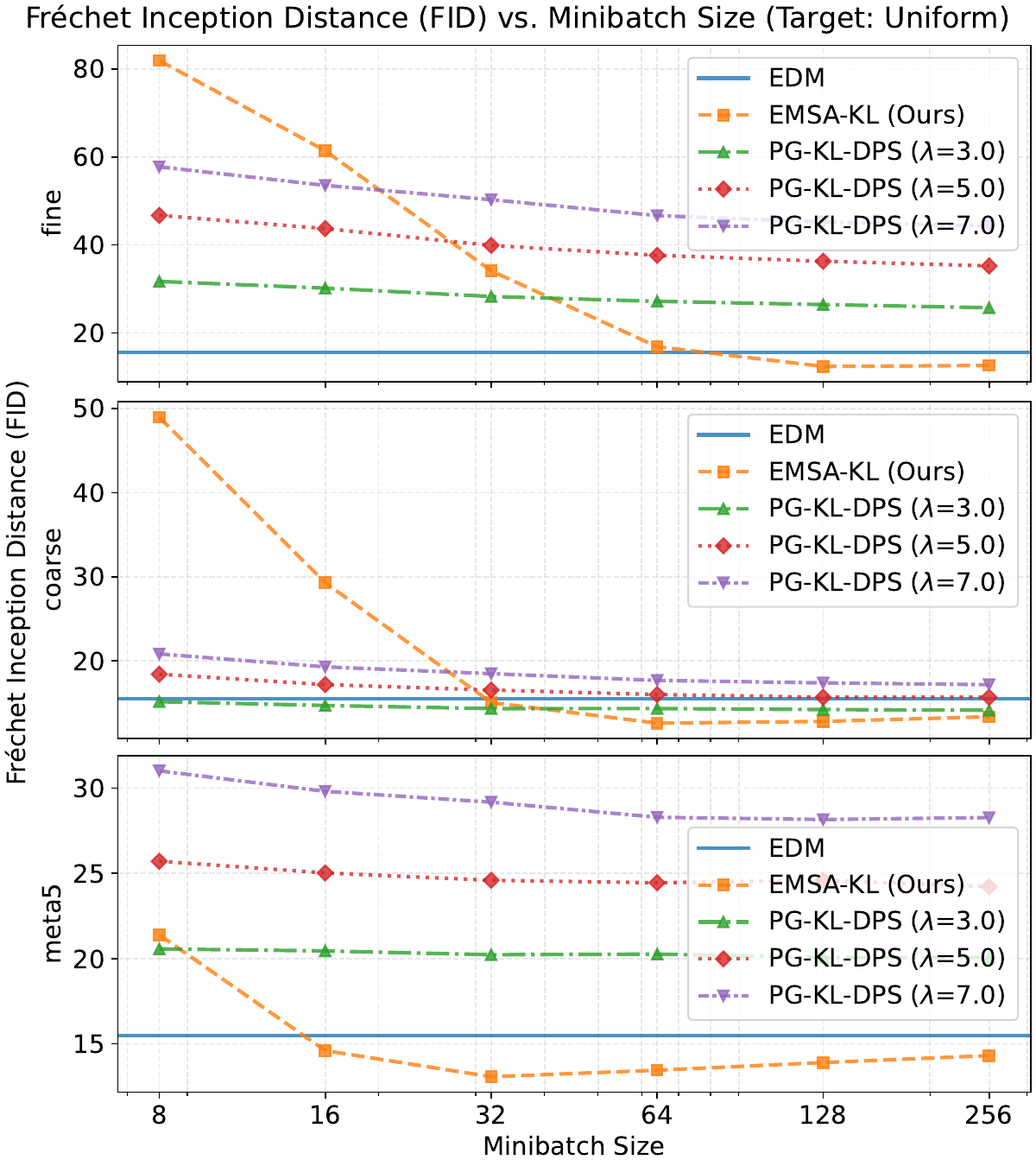}
    }
    \centerline{
        \includegraphics[width=0.5\columnwidth]{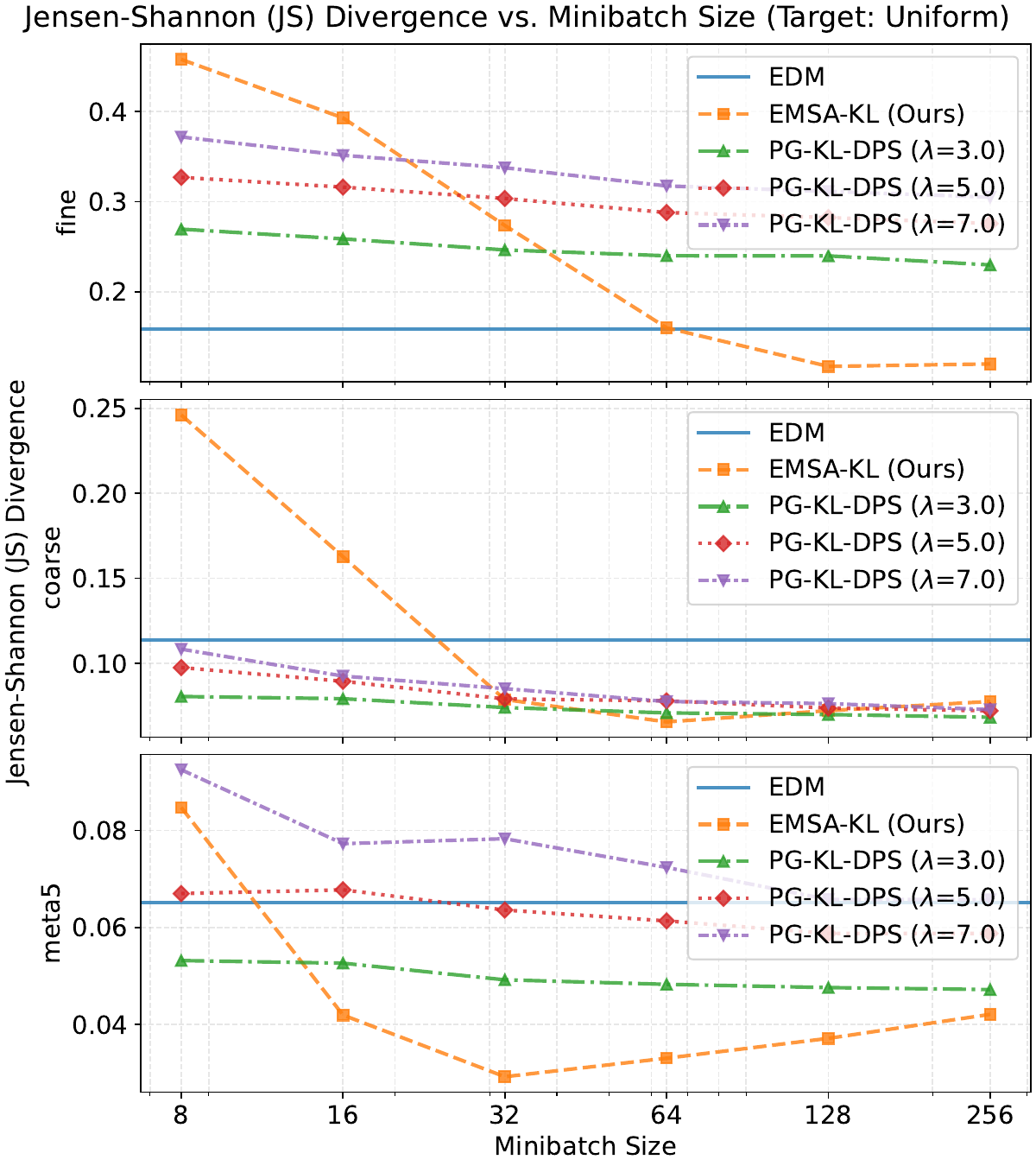}
        \includegraphics[width=0.5\columnwidth]{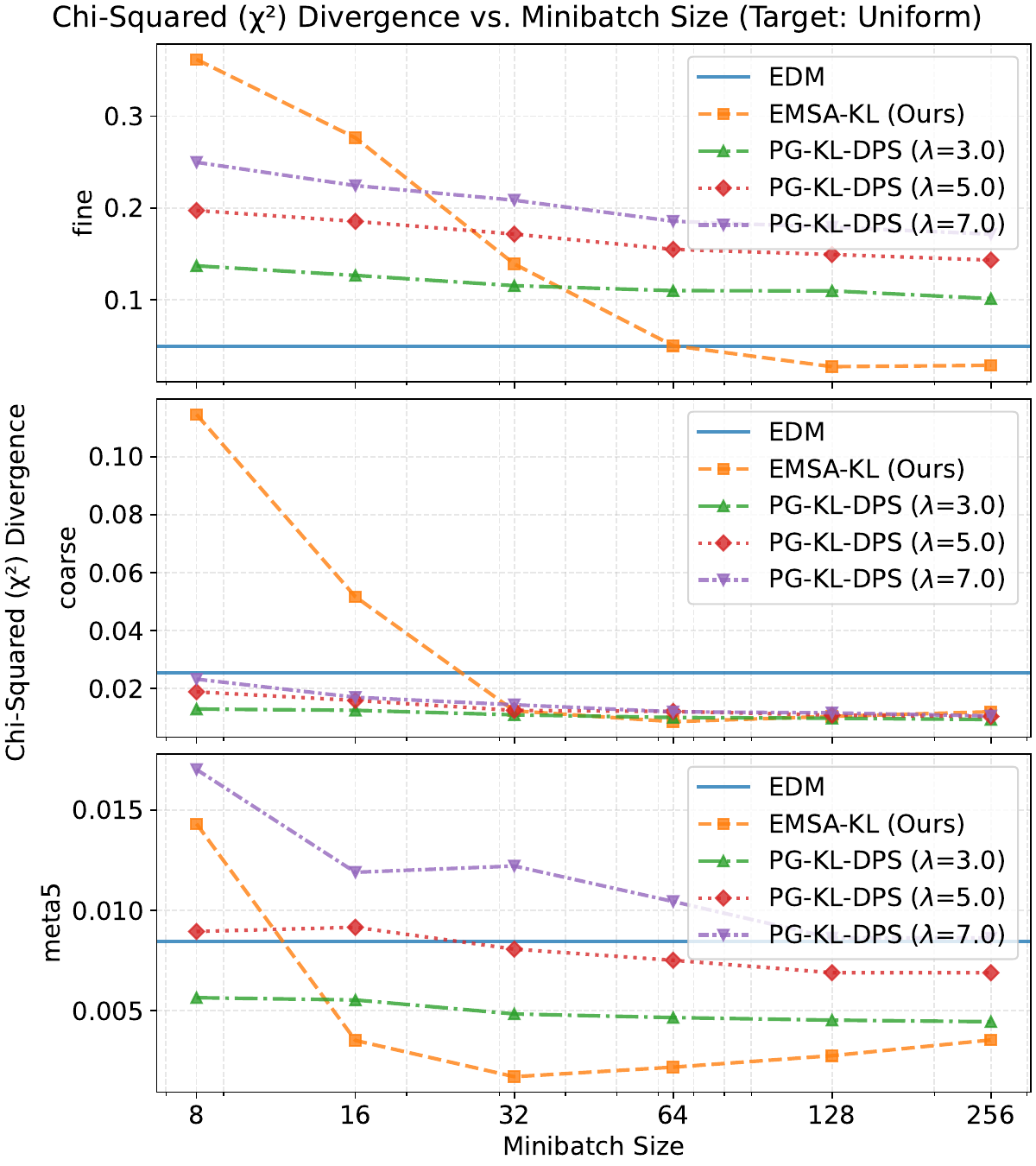}
    }
    \caption{
        Total Variation (top left), FID (top right), Jensen-Shannon divergence, and $\chi^2$ divergence metrics with different batch size $M$ for target $\mathsf{Uniform}$.  
    }
    \label{fig:cifar_uniform_all}
  \end{center}
\end{figure}

\begin{figure}[htbp]
  \begin{center}
    \centerline{
        \includegraphics[width=0.5\columnwidth]{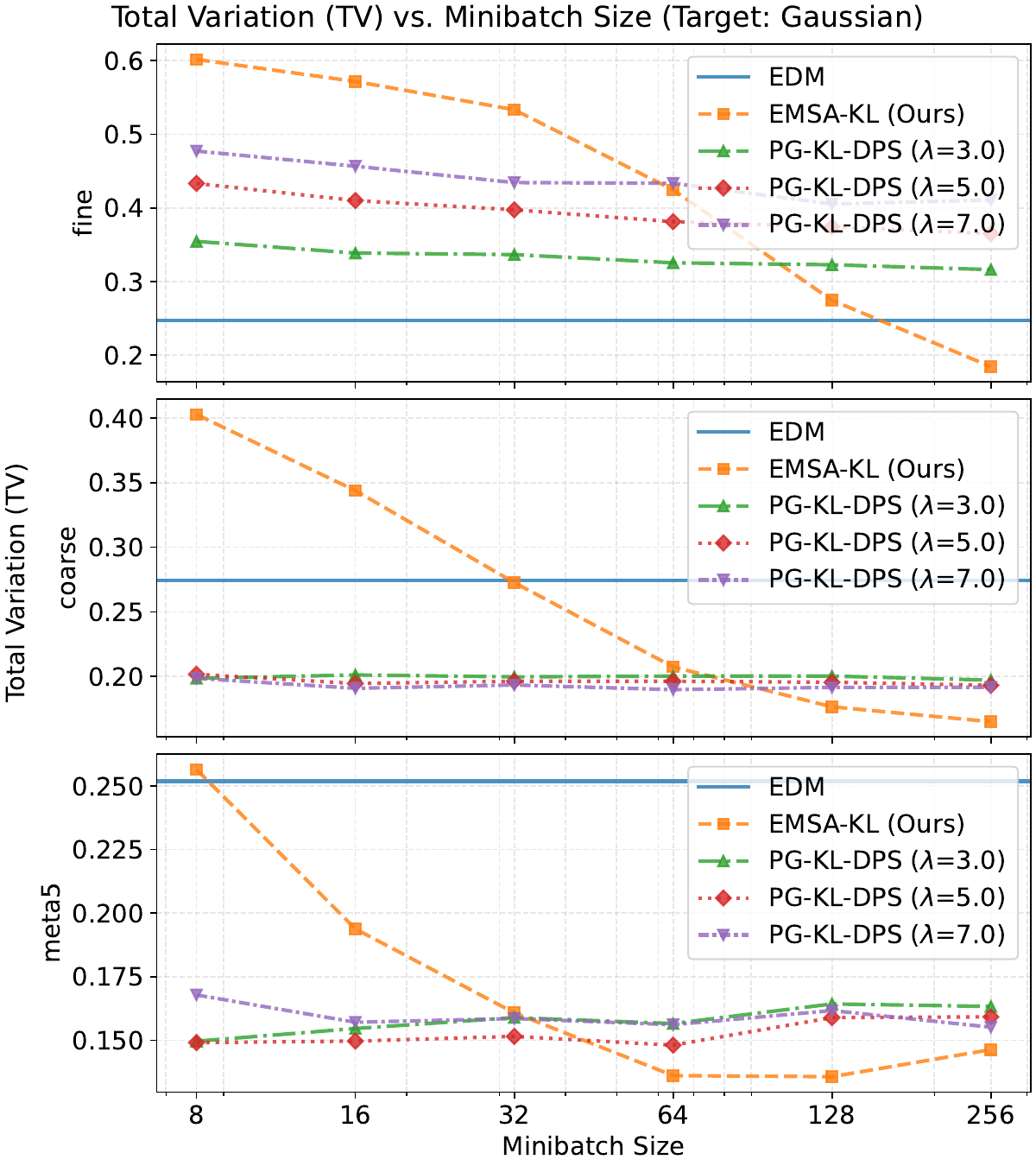}
        \includegraphics[width=0.5\columnwidth]{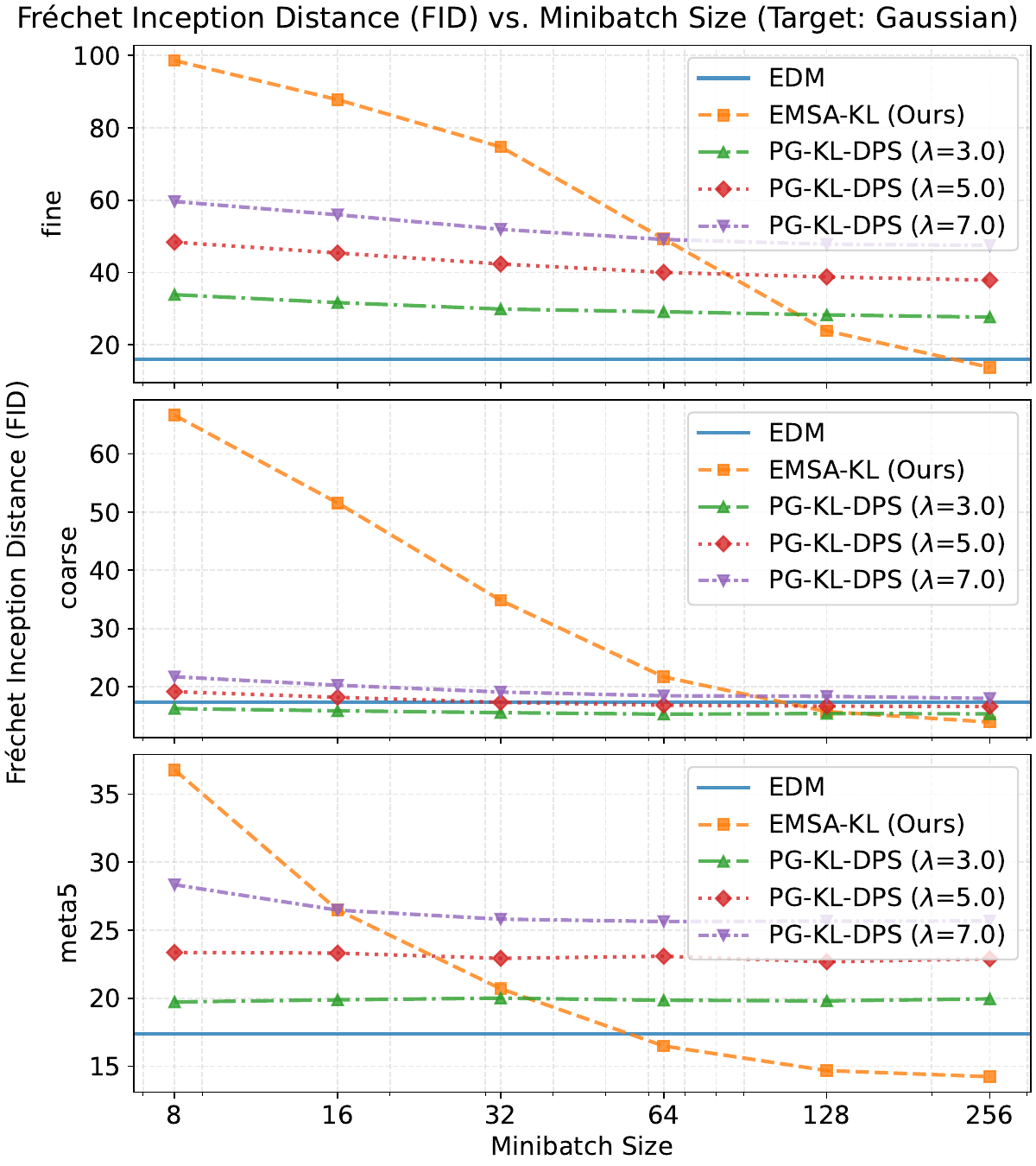}
    }
    \centerline{
        \includegraphics[width=0.5\columnwidth]{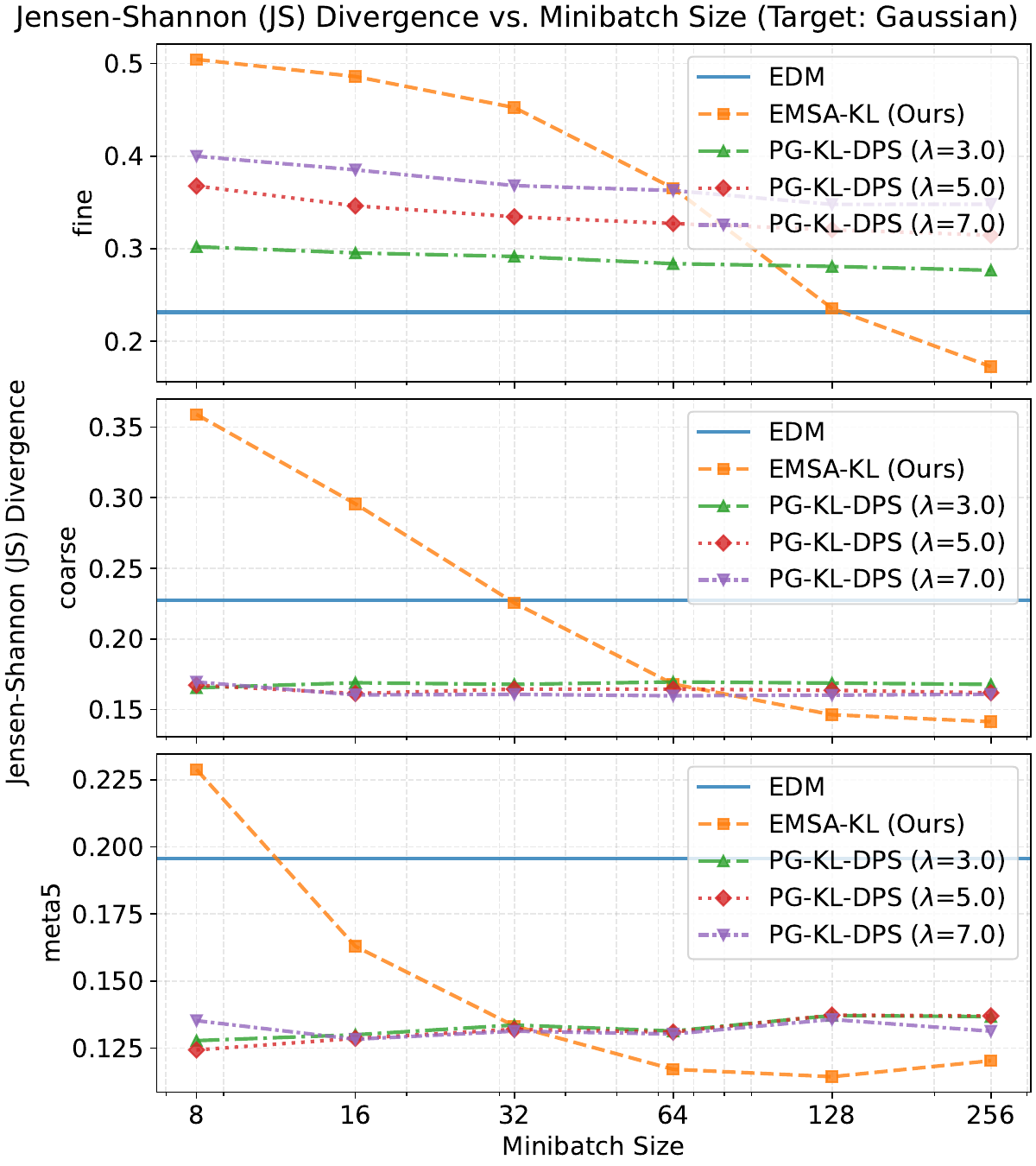}
        \includegraphics[width=0.5\columnwidth]{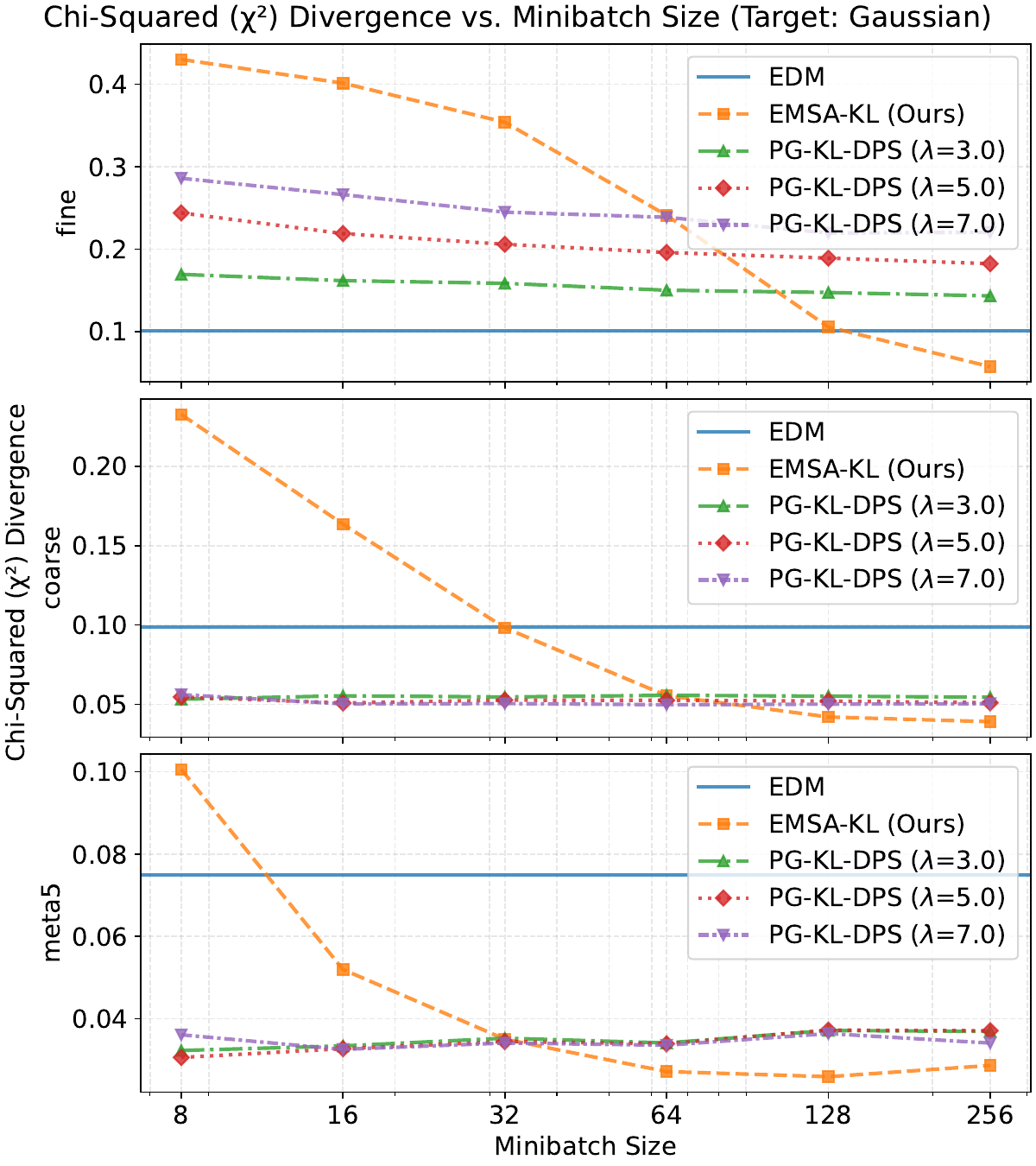}
    }
    \caption{
        Total Variation (top left), FID (top right), Jensen-Shannon divergence, and $\chi^2$ divergence metrics with different batch size $M$ when target is $\mathsf{Gaussian}$.  
    }
    \label{fig:cifar_gaussian_all}
  \end{center}
\end{figure}

\begin{figure}[htbp]
  \begin{center}
    \centerline{
        \includegraphics[width=0.5\columnwidth]{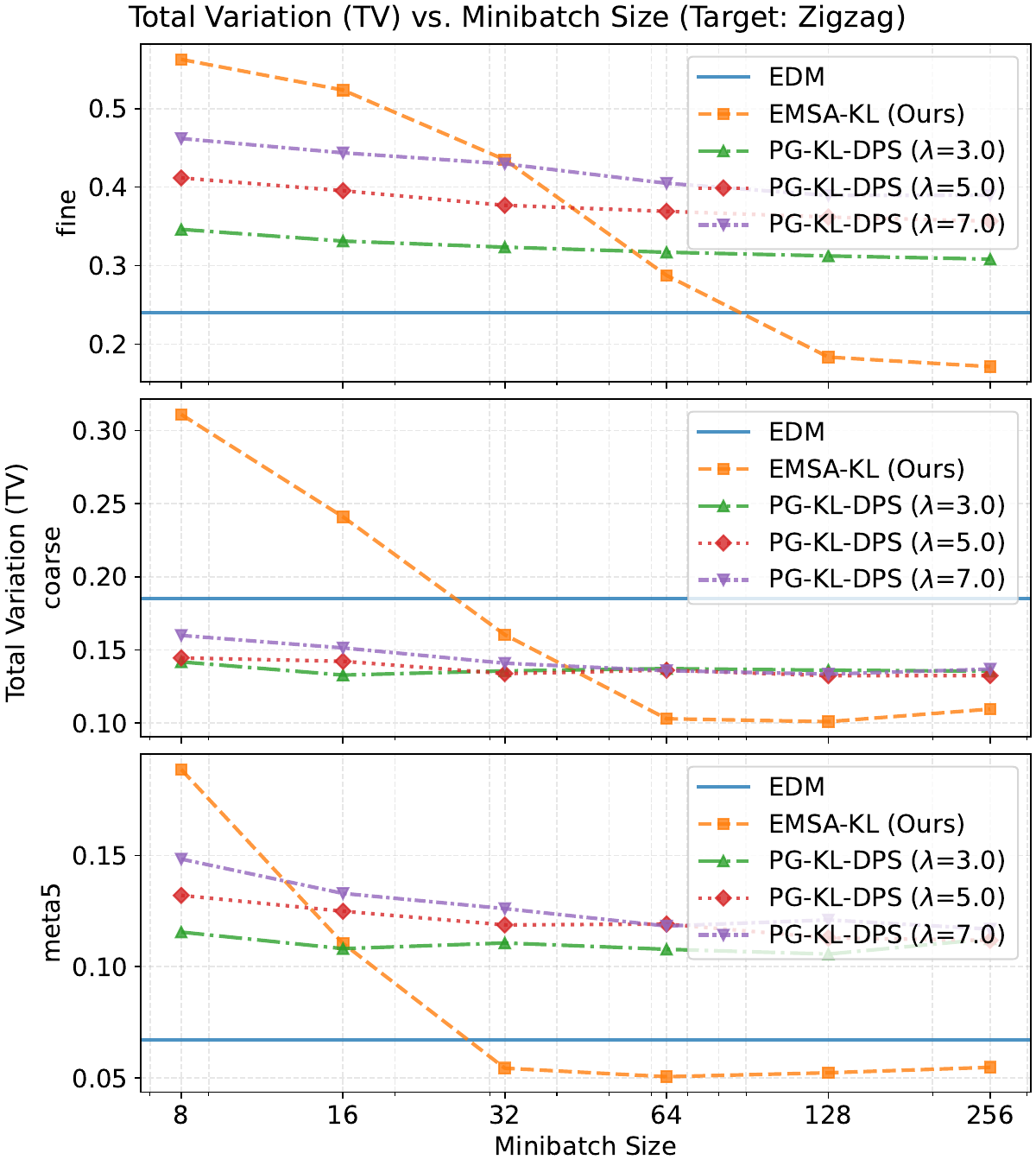}
        \includegraphics[width=0.5\columnwidth]{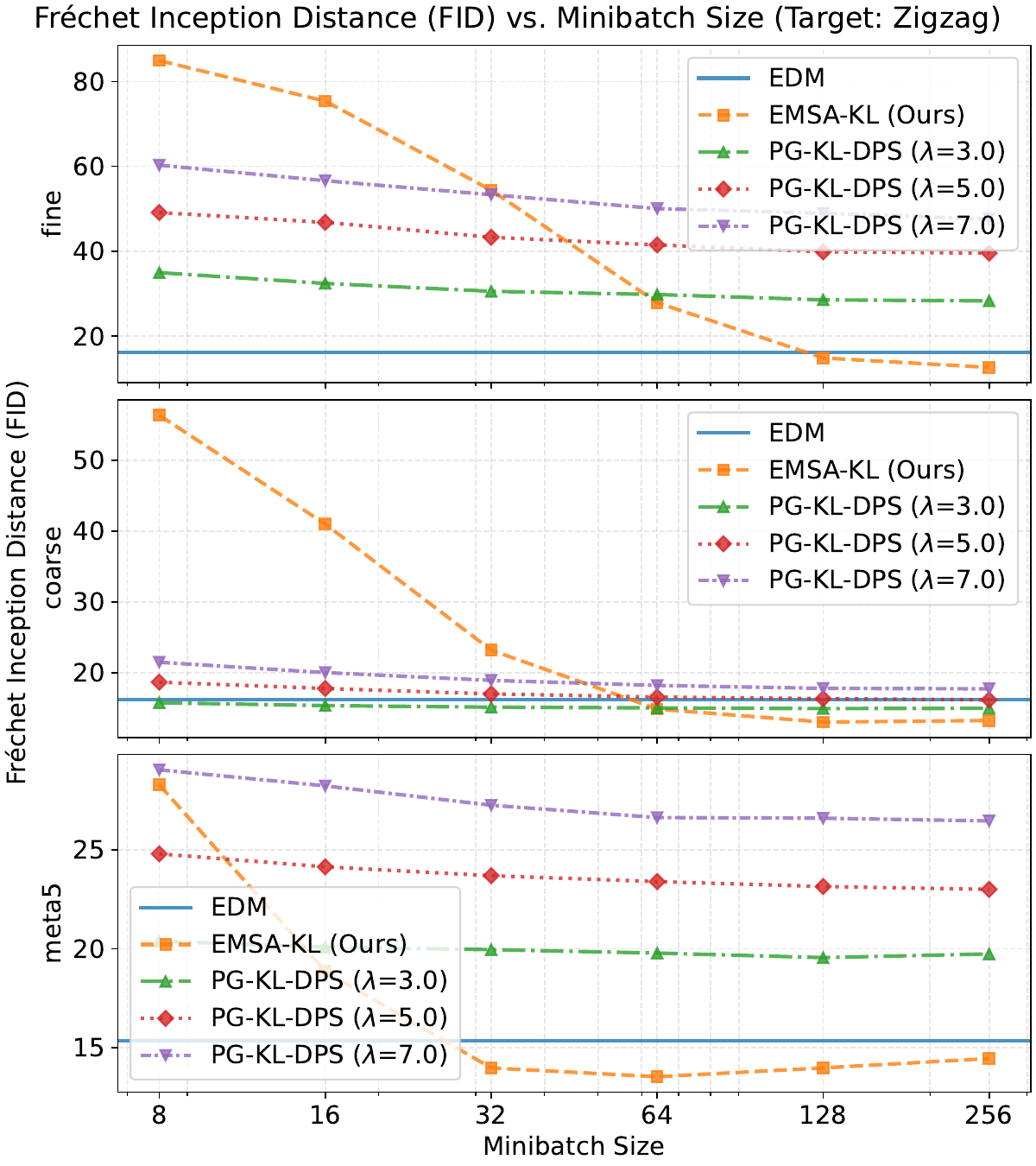}
    }
    \centerline{
        \includegraphics[width=0.5\columnwidth]{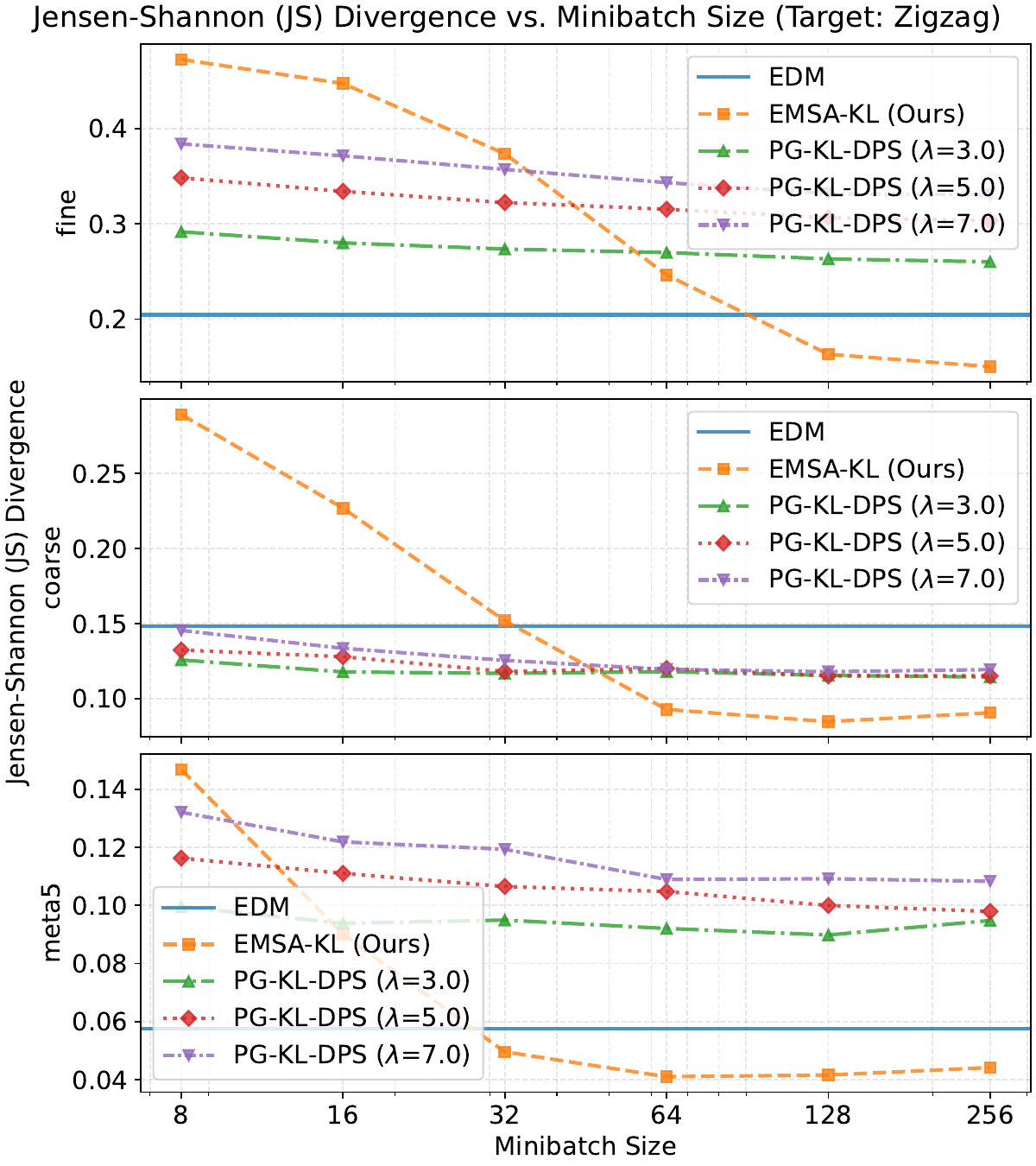}
        \includegraphics[width=0.5\columnwidth]{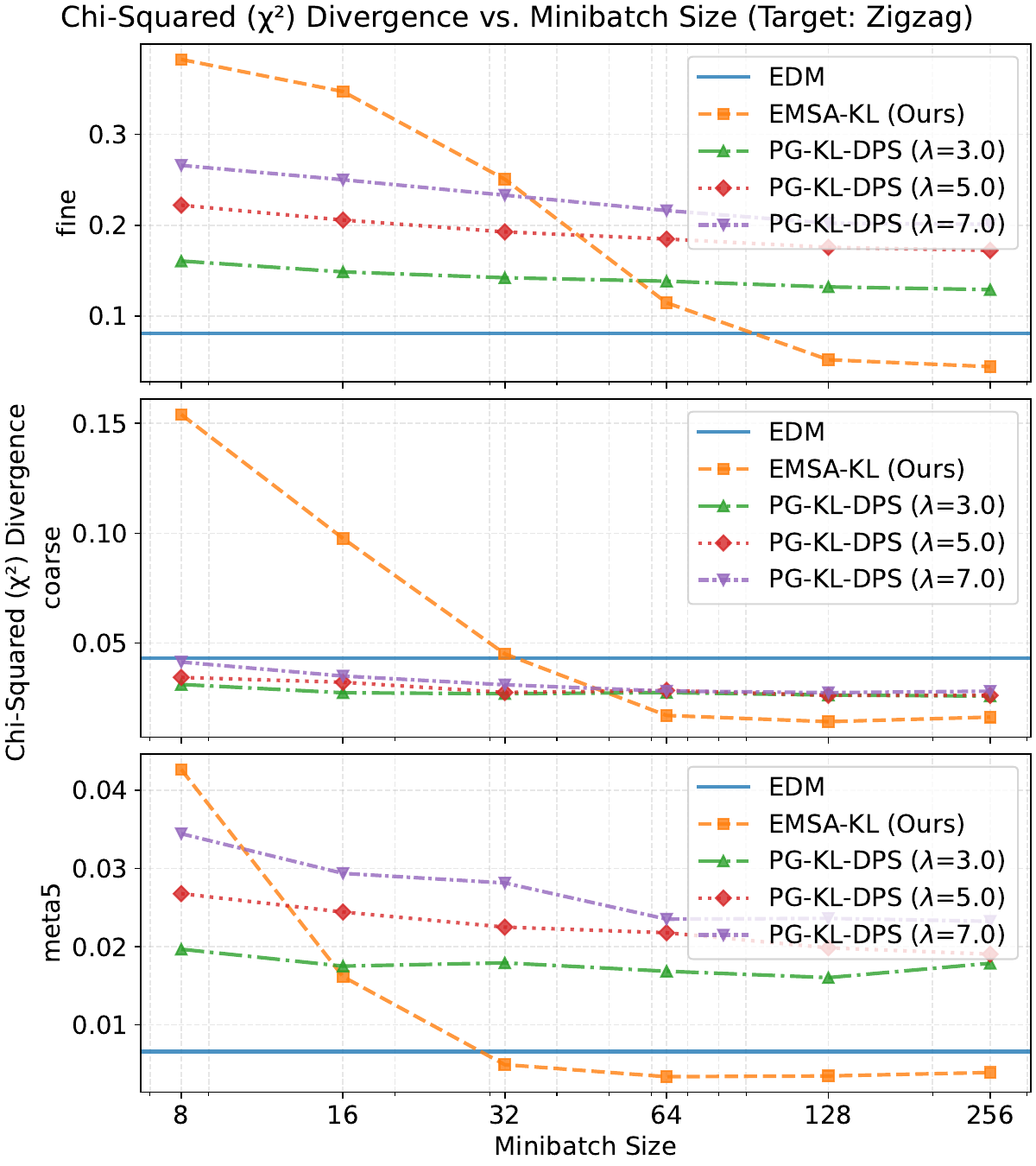}
    }
    \caption{
        Total Variation (top left), FID (top right), Jensen-Shannon divergence, and $\chi^2$ divergence metrics with different batch size $M$ when target is $\mathsf{ZigZag}$.  
    }
    \label{fig:cifar_zigzag_all}
  \end{center}
\end{figure}

\begin{figure}[htbp]
  \begin{center}
    \centerline{
        \includegraphics[width=\columnwidth]{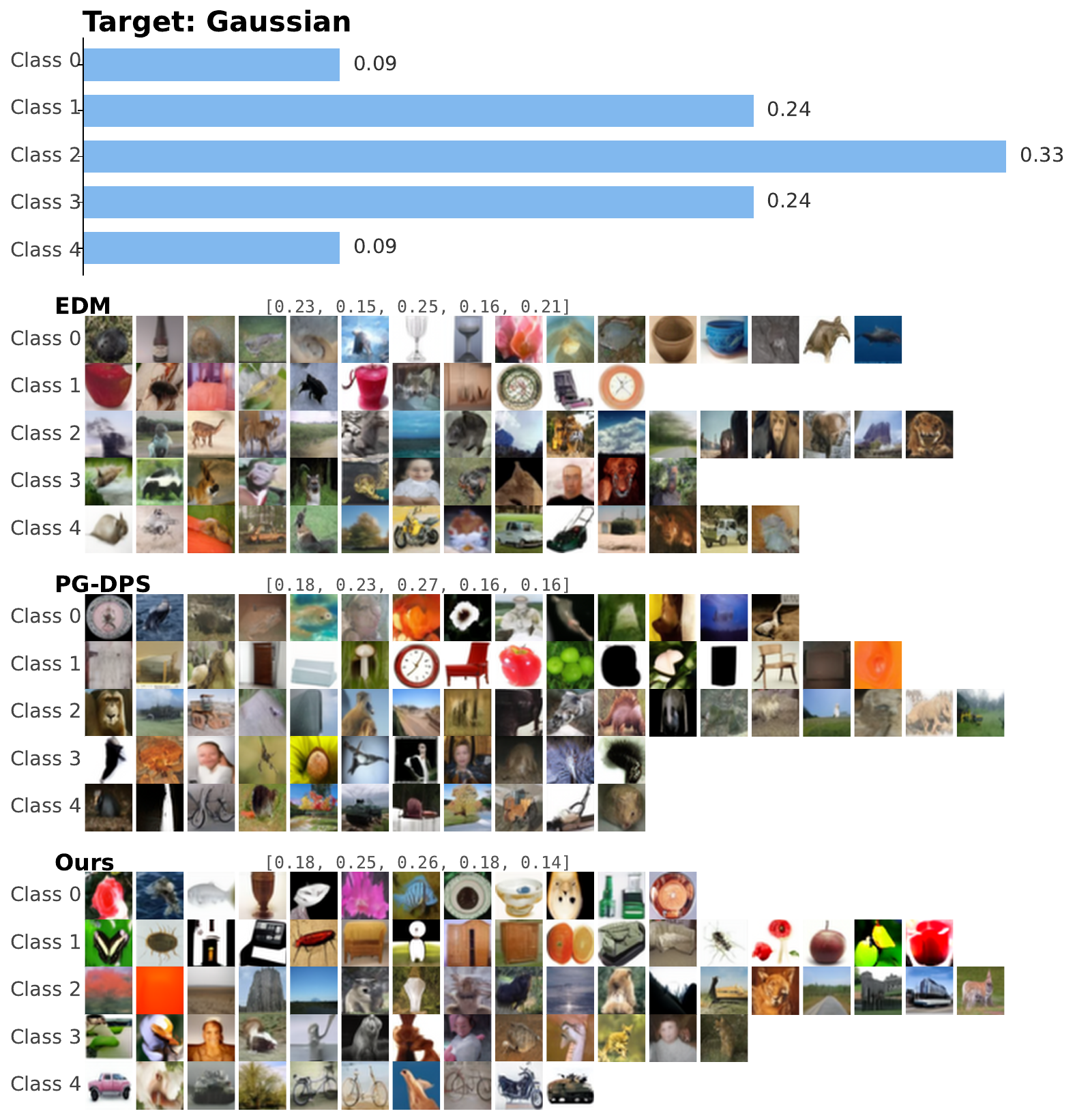}
    }
    \caption{
        Given a target attribute distribution as $\texttt{meta5}$-$\mathsf{Guassian}$, this figure provides a qualitative comparison of samples generated by different methods in the CIFAR experiment reported in \cref{tab:cifar}. 
        The top block shows the target attribute distribution. 
        Each block along the vertical direction exhibits samples generated by one method, clustered by the samples' attribute values, and the empirical attribute distribution is also marked.  
    }
    \label{fig:cifar_samples_meta5_gaussian}
  \end{center}
\end{figure}

\begin{figure}[htbp]
  \begin{center}
    \centerline{
        \includegraphics[width=\columnwidth]{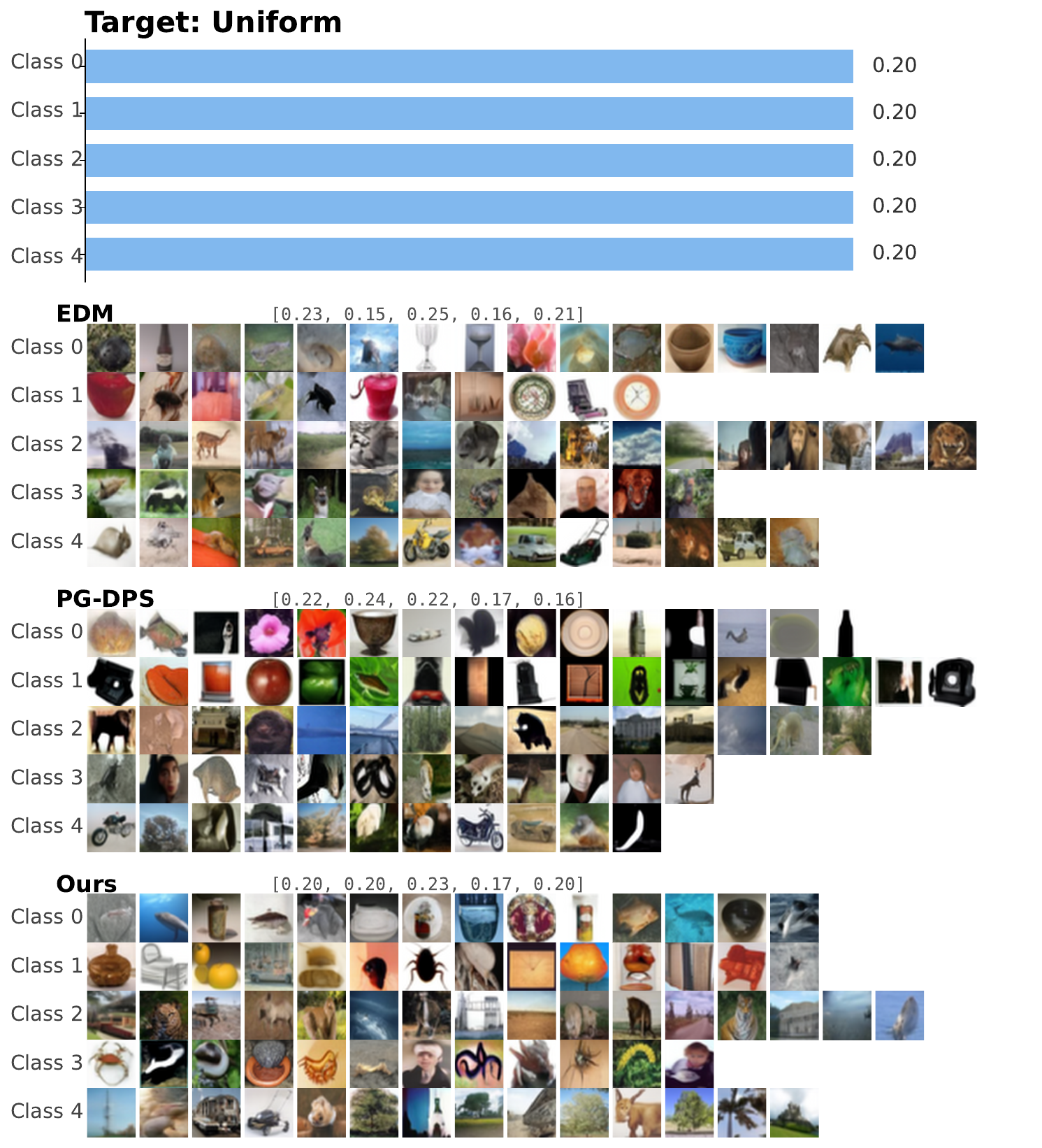}
    }
    \caption{
        Given a target attribute distribution as $\texttt{meta5}$-$\mathsf{Uniform}$, this figure provides a qualitative comparison of samples generated by different methods in the CIFAR experiment reported in \cref{tab:cifar}. 
        The top block shows the target attribute distribution. 
        Each block along the vertical direction exhibits samples generated by one method, clustered by the samples' attribute values, and the empirical attribute distribution is also marked.  
    }
    \label{fig:cifar_samples_meta5_uniform}
  \end{center}
\end{figure}

\begin{figure}[htbp]
  \begin{center}
    \centerline{
        \includegraphics[width=\columnwidth]{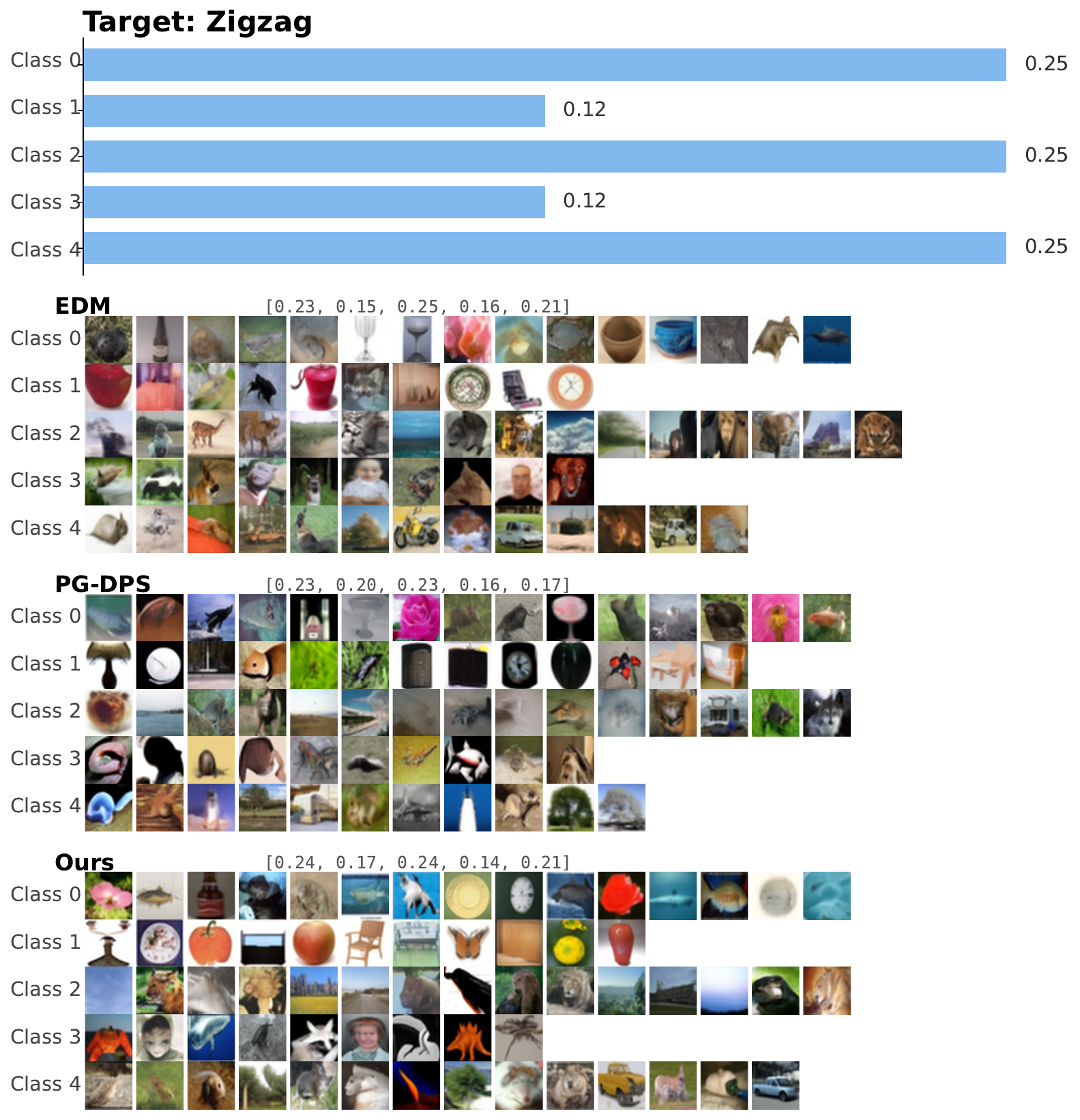}
    }
    \caption{
        Given a target attribute distribution as $\texttt{meta5}$-$\mathsf{Zigzag}$, this figure provides a qualitative comparison of samples generated by different methods in the CIFAR experiment reported in \cref{tab:cifar}. 
        The top block shows the target attribute distribution. 
        Each block along the vertical direction exhibits samples generated by one method, clustered by the samples' attribute values, and the empirical attribute distribution is also marked.  
    }
    \label{fig:cifar_samples_meta5_zigzag}
  \end{center}
\end{figure}

\subsection{Additional Results for Face Generation Experiment}
\label{apdxsub:additional_face}

\paragraph{Additional Samples.}
We present additional qualitative samples in \cref{fig:face_1}, 
\cref{fig:face_2}, \cref{fig:face_3}.

\begin{figure}[htpb]
  \begin{center}
    \centerline{
        \includegraphics[width=0.48\textwidth]{figs/faces/batch14/grids/DDIM_grid.pdf}
        \hfill
        \includegraphics[width=0.48\textwidth]{figs/faces/batch14/grids/Ours_grid.pdf}
    }
    \centerline{
        \includegraphics[width=0.48\textwidth]{figs/faces/batch14/grids/PG-DPS_grid.pdf}
    }
    \caption{
        Additional samples from all methods in the face generation experiment. 
    }
    \label{fig:face_1}
  \end{center}
\end{figure}

\begin{figure}[htpb]
  \begin{center}
    \centerline{
        \includegraphics[width=0.48\textwidth]{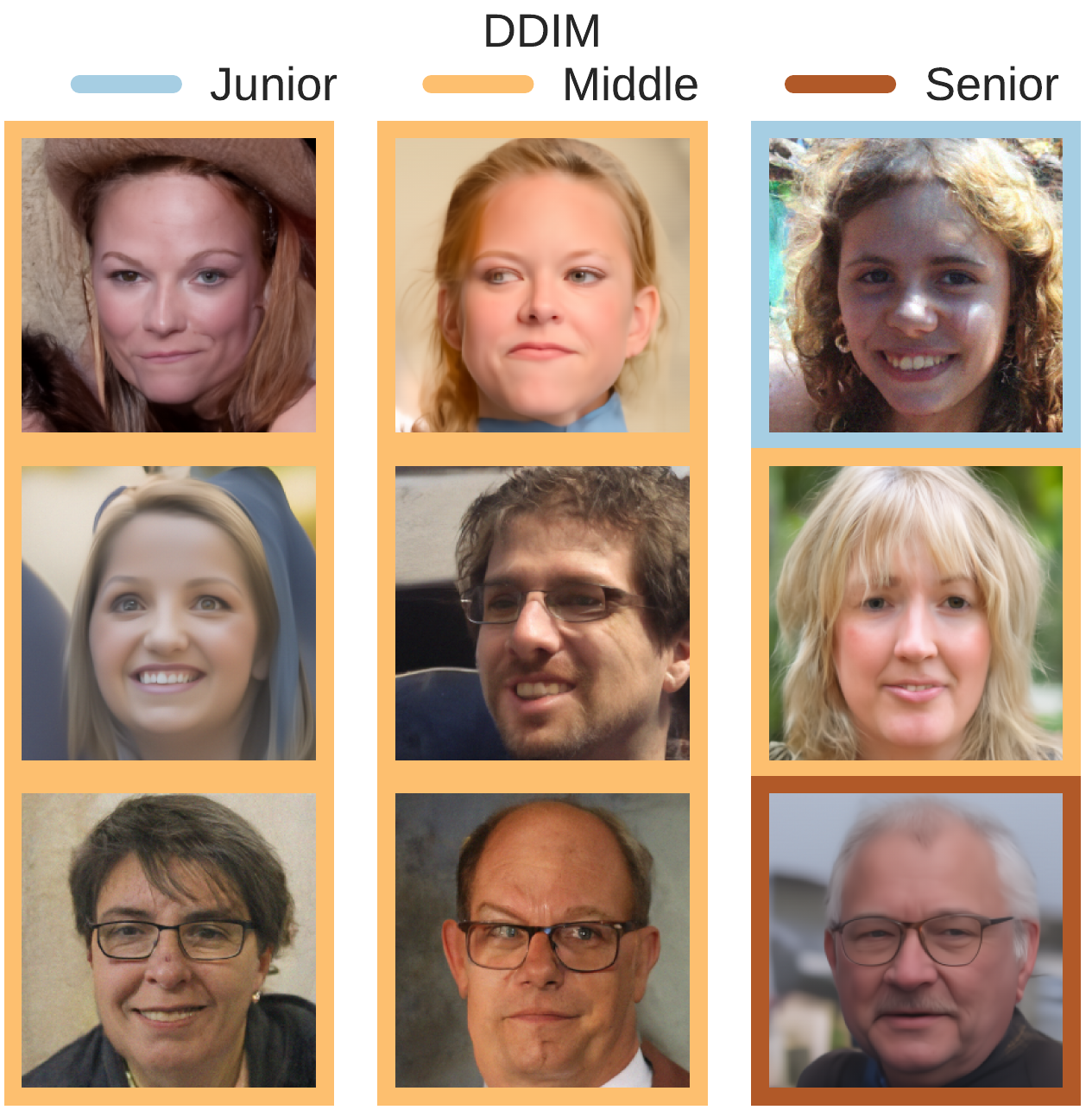}
        \hfill
        \includegraphics[width=0.48\textwidth]{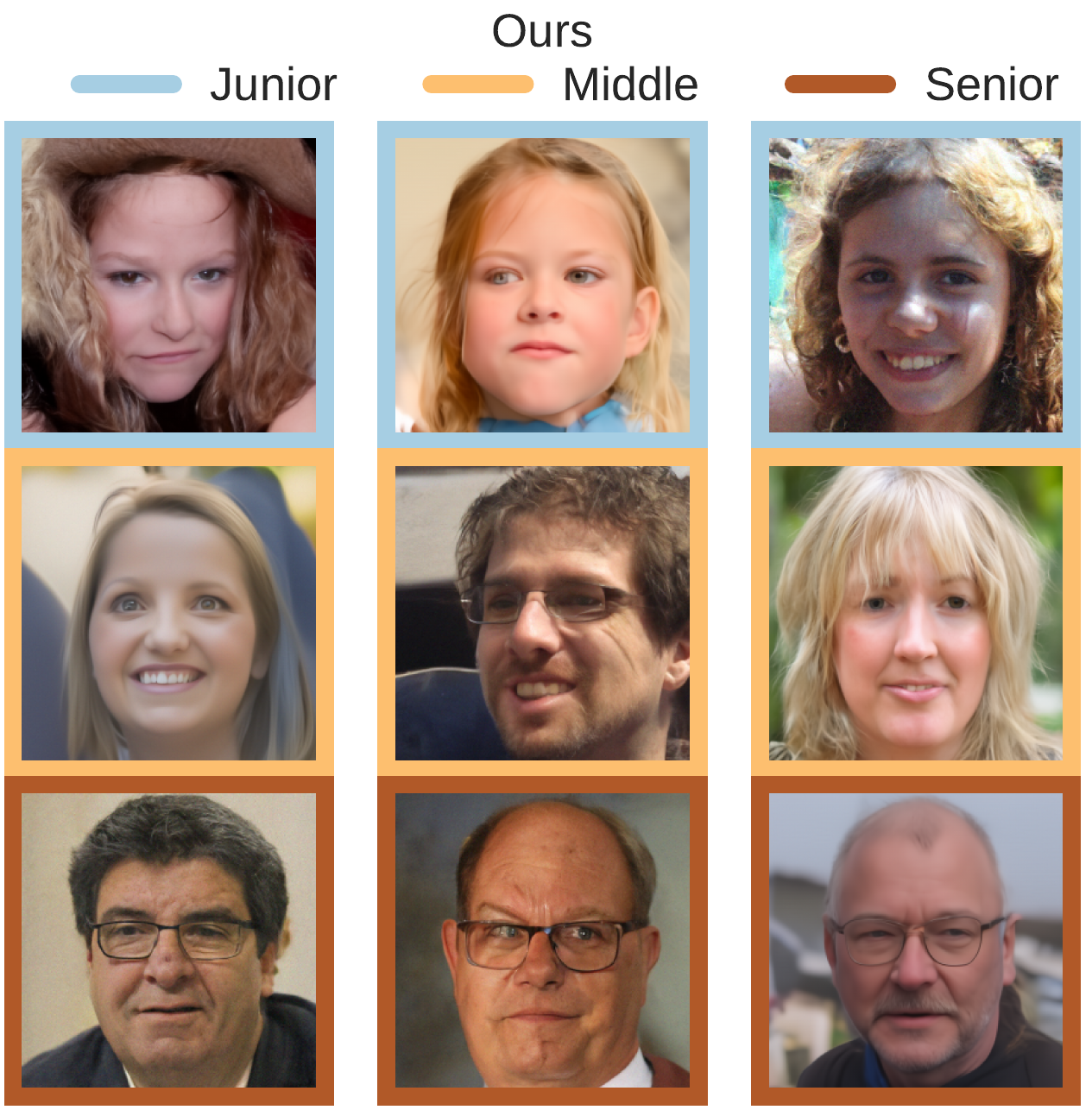}
    }
    \centerline{
        \includegraphics[width=0.48\textwidth]{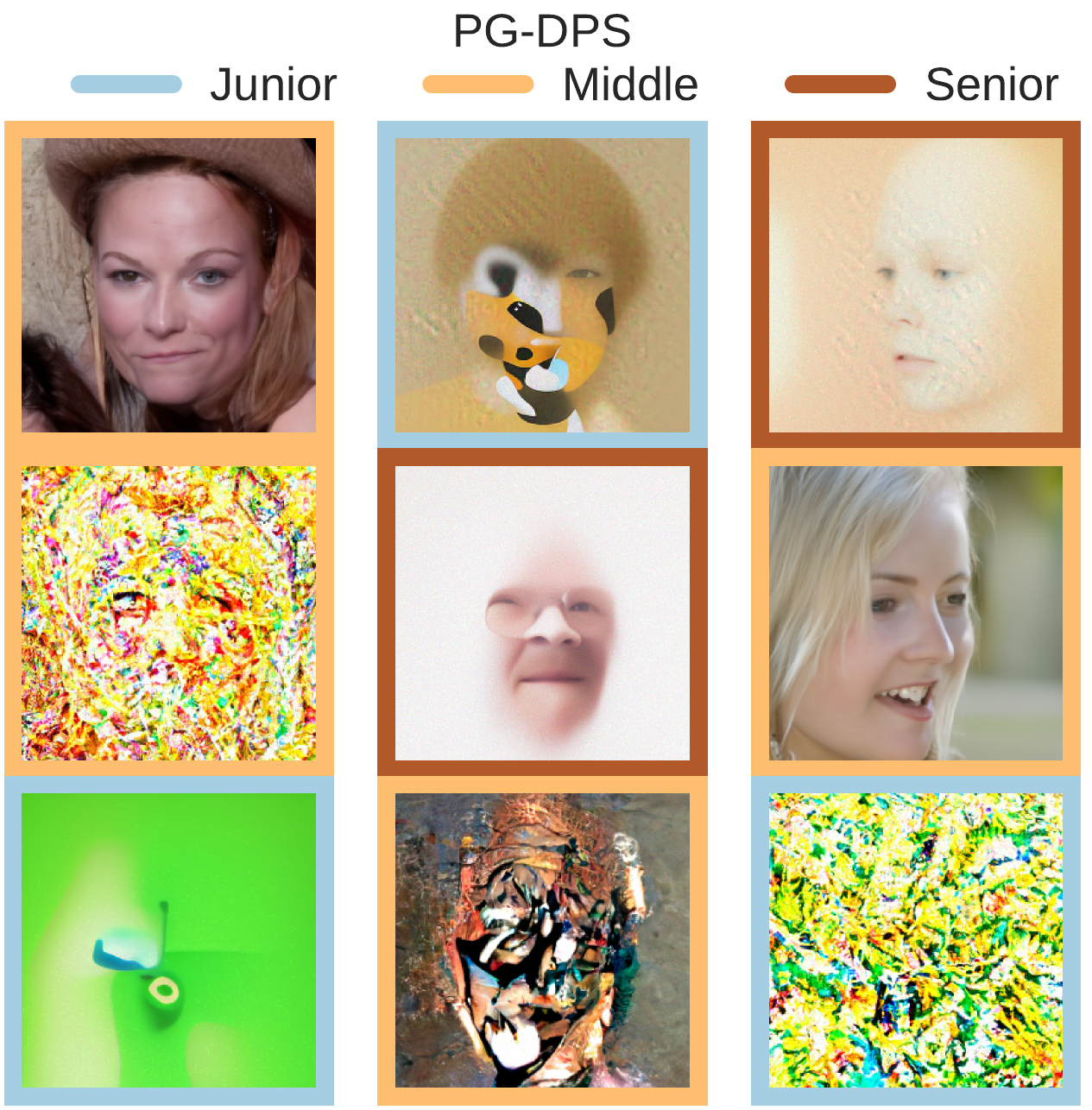}
    }
    \caption{
        Additional samples from all methods in the face generation experiment. 
    }
    \label{fig:face_2}
  \end{center}
\end{figure}

\begin{figure}[htpb]
  \begin{center}
    \centerline{
        \includegraphics[width=0.48\textwidth]{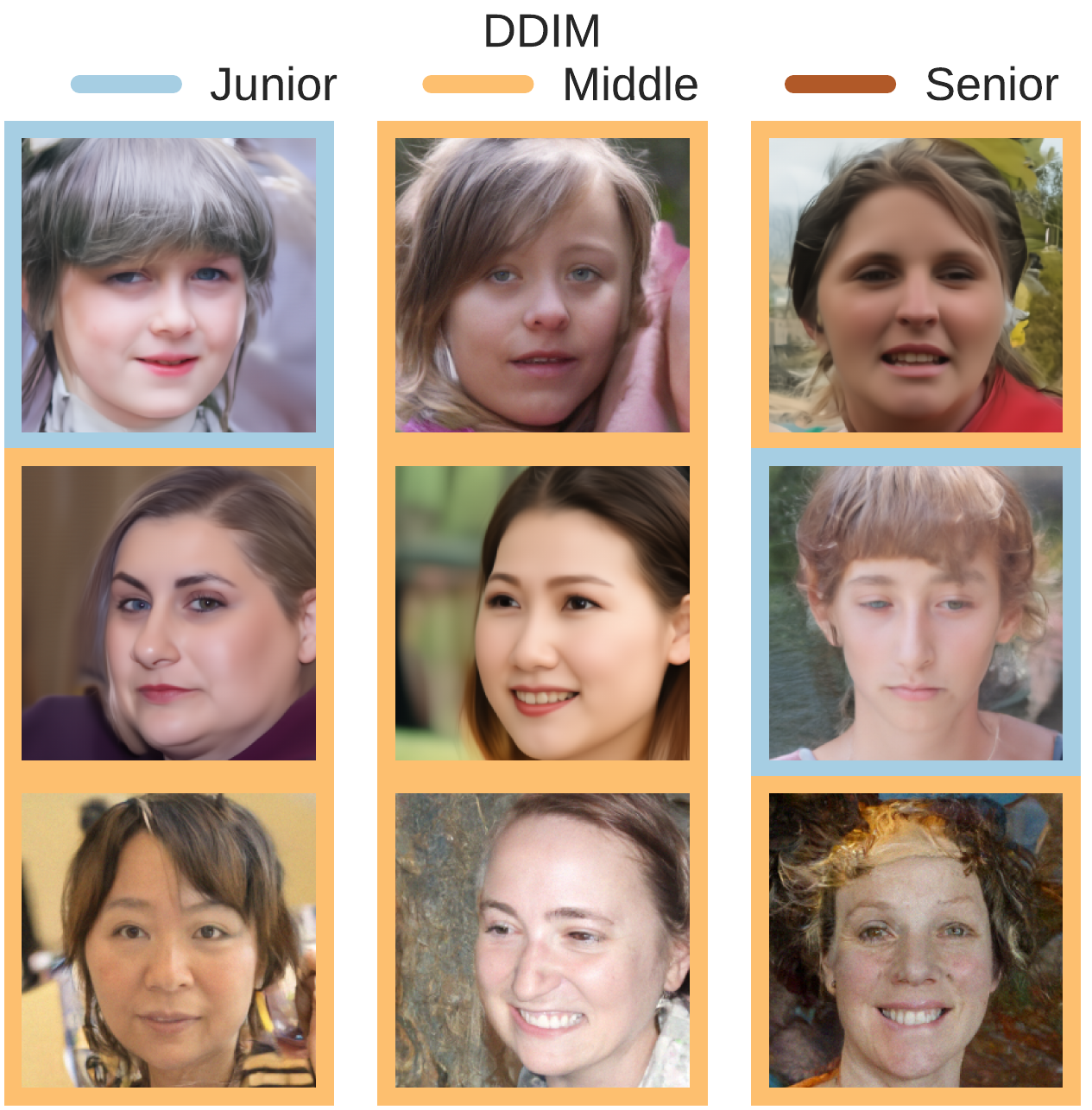}
        \hfill
        \includegraphics[width=0.48\textwidth]{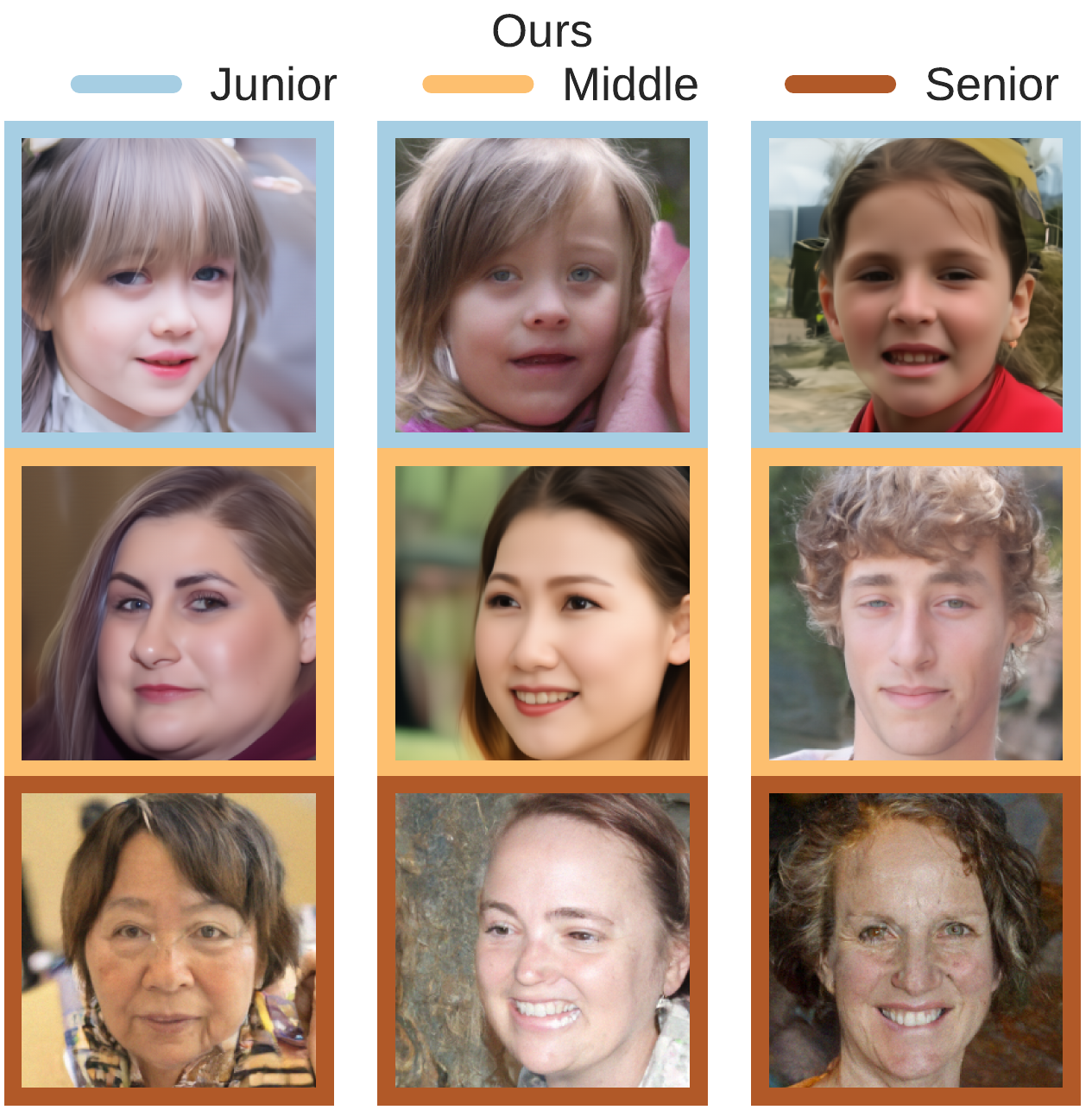}
    }
    \centerline{
        \includegraphics[width=0.48\textwidth]{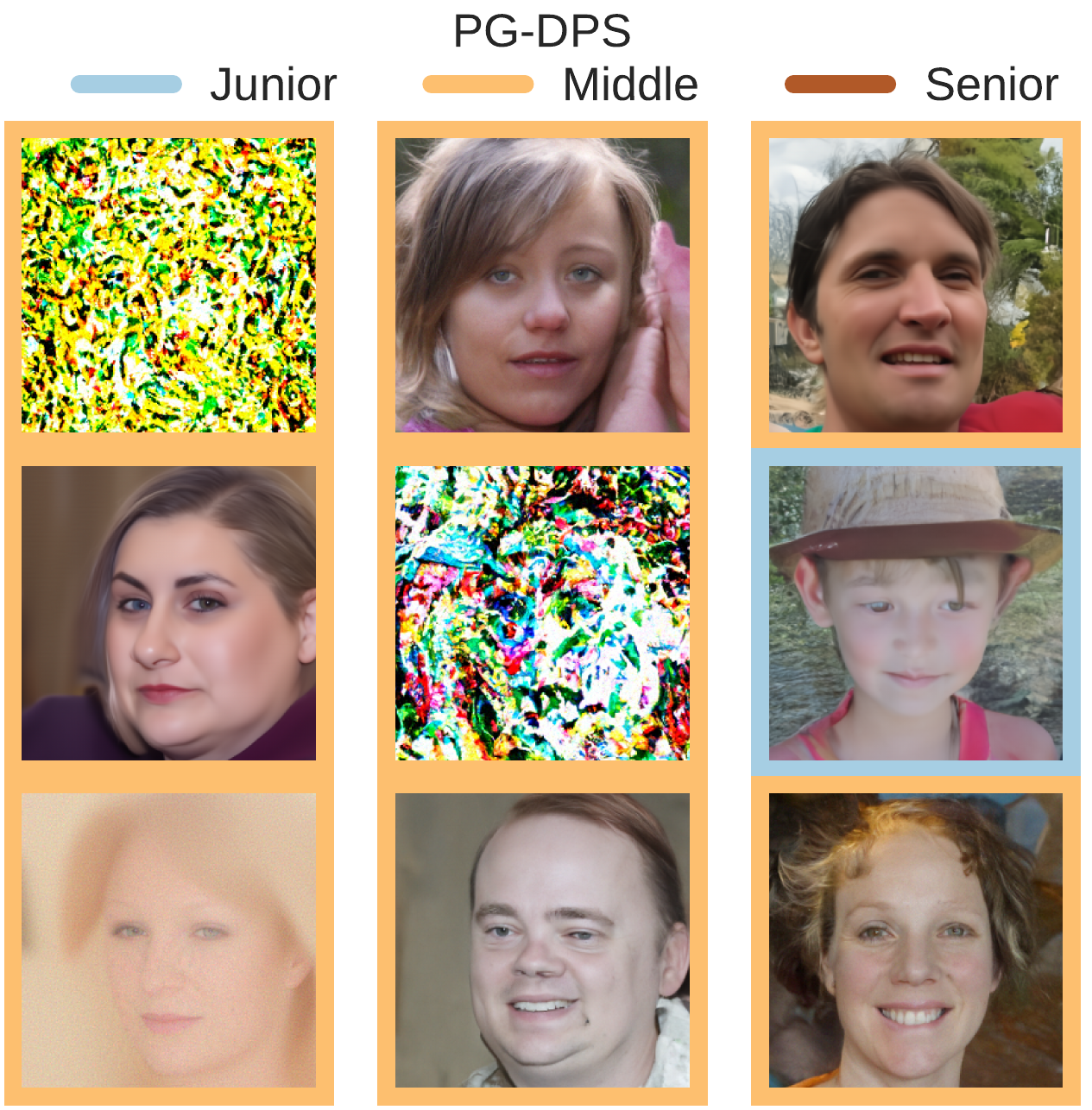}
    }
    \caption{
        Additional samples from all methods in the face generation experiment. 
    }
    \label{fig:face_3}
  \end{center}
\end{figure}

\clearpage
\section{Discussions and Limitations}
\label{apdx:limitations}

\paragraph{Computational Cost}
As analyzed in \apdxref{apdxsub:complexity}, the computational complexity of our method scales linearly with the number of optimization iterations $I$, inference steps $K$, and batch size $M$. 
The primary bottleneck is the per-step VJP computation, which necessitates differentiating through the diffusion model. 
Empirically, we find $I\approx 6\text{ -- }12$ generally suffices, and early stopping can further reduce runtime when the distributional objective plateaus.
While our approach incurs a higher inference cost than the compared baselines, it \emph{effectively eliminates the need for expensive model finetuning or retraining} and \emph{yields stronger distribution-level alignment} in our experiments. 
We view this computational overhead as a necessary trade-off for high-fidelity distributional matching without model parameter updates.

\paragraph{Batch Size for Empirical Distribution Estimation}
Our method needs a reasonable batch size $M$ for estimating the empirical attribute distribution. 
As shown in the ablation study in \cref{apdxsub:additional_cifar}, when the batch size $M$ is too small to provide a reliable empirical distribution for an attribute distribution with a potentially large support size, the distribution alignment performance would degrade. 
However, the same issue exists for other approaches including training-based methods such as \cite{parihar2024balancing}.

\end{document}